\definecolor{t_blue}{HTML}{355fb3}
\definecolor{t_red}{HTML}{b33535}
\definecolor{t_green}{HTML}{3bb335}
\definecolor{t_yellow}{HTML}{b39735}
\definecolor{t_darkblue}{HTML}{1e3666}
\definecolor{t_darkgreen}{HTML}{22661e}
\definecolor{t_darkred}{HTML}{661e1e}
\definecolor{t_darkyellow}{HTML}{66571e}
\definecolor{t_lightblue}{HTML}{8ea7d7}
\definecolor{t_lightred}{HTML}{dc8989}
\definecolor{t_lightgreen}{HTML}{8ddc89}
\pgfplotsset{compat=1.14}
\pgfplotsset{
    cycle list={t_blue\\t_red\\t_green\\t_yellow\\},
}
\renewenvironment{proof}[1][\proofname]{\par
	\pushQED{\qed}%
	\trivlist%
	\item[\hskip\labelsep
		\itshape%
		#1\@addpunct{.}%
	]\ignorespaces%
}{%
  \popQED\endtrivlist\@endpefalse
}
\def\th@plain{\thm@preskip\parskip\thm@postskip0pt\itshape} 
\def\th@definition{\thm@preskip\parskip\thm@postskip0pt\normalfont}
\def\th@remark{\thm@headfont{\itshape}\normalfont\thm@preskip\parskip\thm@postskip0pt} 
\crefname{inv}{invariant}{invariants}
\crefname{mat}{Matrix}{Matrices}
\crefname{table}{Tbl.}{Tbls.}
\theoremstyle{plain}
\newtheorem{thm}{Theorem}[section]
\crefname{thm}{Thm.}{Thms.}
\Crefname{thm}{Theorem}{Theorems}
\newtheorem{prop}[thm]{Proposition}
\crefname{prop}{Prop.}{Prop.}
\Crefname{prop}{Proposition}{Propositions}
\newtheorem{lem}[thm]{Lemma}
\crefname{lem}{Lem.}{Lems.}
\Crefname{lem}{Lemma}{Lemmas}
\newtheorem{cor}[thm]{Corollary}
\crefname{cor}{Cor.}{Cors.}
\Crefname{cor}{Corollary}{Corollaries}
\theoremstyle{definition}
\newtheorem{defn}[thm]{Definition}
\crefname{defn}{Def.}{Defs.}
\Crefname{defn}{Definition}{Definitions}
\def\mean{\mathrm{mean}}
\def\wmean{\mathrm{weighted\ mean}}
\preto\tabular{\setcounter{rownumbercount}{0}}
\newcounter{rownumbercount}
\newcommand\rownumber{\stepcounter{rownumbercount}\arabic{rownumbercount}}
\newcommand*{\dblbrace}[2][]{#1{#2}\ifthenelse{\equal{#1}{}}{\mskip-6mu}{\mskip-8mu}#1{#2}}
\newcommand*{\ldblbrace}[1][]{\dblbrace[#1]{\{}}
\newcommand*{\rdblbrace}[1][]{\dblbrace[#1]{\}}}
\newcommand*{\colorlabel}[2]{\textsf{\textbf{\small\textcolor{#1}{#2}}}}
\newcommand*{\condbold}[3][]{\ifthenelse{\equal{#2}{1}#1}{\mathbf{#3}}{#3}}
\newcommand*{\evalres}[4][]{%
	\ifthenelse{\equal{#3}{m}}{\acs*{oom}}{%
	\ifthenelse{\equal{#3}{t}}{\acs*{oot}}{$\condbold[#1]{#2}{#3} \pm #4$}}%
}
\newcommand*\circled[2][1pt]{\tikz[baseline=(char.base)]{ 
    \node[shape=circle,draw,inner sep=#1] (char) {#2};}}
\def\gcircled#1\gcircled{\circled{\small#1}}
\title{A Novel Higher-order Weisfeiler-Lehman Graph~Convolution}
\date{}
\author{
  Clemens Damke \\
  \href{mailto:cdamke@mail.upb.de}{\texttt{cdamke@mail.upb.de}} \\
  \And
  Vitalik Melnikov \\
  \href{mailto:melnikov@mail.upb.de}{\texttt{melnikov@mail.upb.de}} \\
  \And
  Eyke Hüllermeier \\
  \href{mailto:eyke@upb.de}{\texttt{eyke@upb.de}}
}
\newcommand{\dac}[3]{\DeclareAcronym{#1}{short = #2, long = #3}}
\begin{document}
\maketitle
{\vskip -0.2in\centering Heinz Nixdorf Institute and Department of Computer Science\\Paderborn University\vskip 0.2in}

\begin{abstract}
Current \ac*{gnn} architectures use a vertex neighborhood aggregation scheme, which limits their \acl*{dp} to that of the 1-dimensional \ac*{wl} graph isomorphism test.
Here, we propose a novel graph convolution operator that is based on the 2-dimensional \acs*{wl} test.
We formally show that the resulting 2-\acs*{wl}-\acs*{gnn} architecture is more discriminative than existing \acs*{gnn} approaches.
This theoretical result is complemented by experimental studies using synthetic and real data. 
On multiple common graph classification benchmarks, we demonstrate that the proposed model is competitive with state-of-the-art \aclp*{gk} and \acsp*{gnn}.
\end{abstract}

\keywords{graph neural networks \and Weisfeiler-Lehman test \and cycle detection}

\section{Introduction}%
\label{sec:intro}

Graph-structured data has recently received increasing attention in \ac{ml}, with applications ranging from the prediction of chemical properties, e.g., whether a molecule is toxic~\citep{Luechtefeld2018}, to the analysis of social network structures~\citep{Fan2019} and source code~\citep{Richter2020}.
This paper focuses on the prediction of (global) graph properties, i.e., \acl{gcr}.
In order to predict a certain property of interest, for example to discriminate between graphs in a classification task, a learner must be able to \emph{detect}, either explicitly or implicitly, characteristic features of a graph that are indicative of the sought property.
To this end, suitable approaches have been developed in the fields of kernel-based machine learning and (deep) neural networks.

    \Acfp{gk} implicitly embed graphs in a kernel-induced feature space, thereby making the data\,---\,via the kernel trick\,---\,amenable to kernel-based learning methods such as \acfp{svm}. 
    The features correspond to different (local) properties of a graph.
    Examples of graph kernels include the multiscale Laplacian graph kernel~\citep{Kondor2016}, the  \acf{wlst}~\citep{Shervashidze2011}, and the \acf{wlsp}. A comprehensive and up-to-date overview is provided by \citet{Kriege2020}.
    

        Unlike \acp{gk}, \acfp{gnn} produce a prediction directly by applying so-called graph convolution layers, which iteratively aggregate vertex features.
        Nevertheless, they resemble \acp{gk} in so far as those graph convolutions typically use similar graph features, e.g., the Laplacian spectrum~\citep{Bruna2013} or the \ac{bfs} subtrees that are also used by the \acs{wlst} kernel.
		Two examples of approaches that utilize a \ac{bfs} subtree characterization are the so-called \acs{gcn}~\citep{Kipf2017} and \acs{sage}~\citep{Hamilton2017} architectures.
		Recently, \citet{Xu2018} have shown that both approaches and, more generally, all \acp{gnn} that are expressible in terms of a vertex neighborhood aggregation scheme, cannot produce different predictions for graphs that are indistinguishable via the 1-dimensional \acf{wl} graph isomorphism test.

Even though the 1-\acs{wl} test is able to distinguish almost all pairs of non-isomorphic graphs~\citep{Babai1980}, it fails at distinguishing any pair of $d$-regular graphs of the same size~\citep[Cor.~1.8.5]{Immerman1990}.
This restriction alone is not necessarily an issue in the context of \acl{gcr} problems, since in real-world domains such as social networks, graphs are rarely perfectly regular.
A more relevant restriction of 1-\acs{wl}, and therefore \acp{gnn}, is the fact that it is unable to detect cycles of length $m \geq 3$.
In practice, this means that a 1-\acs{wl}-bounded \ac{gnn} is unable to make predictions based on important domain-specific graph properties, such as the clustering coefficient of a social network or the presence of aromatic rings in molecules.

\citet{Fuerer2017} shows that the cycle detection restriction of 1-\acs{wl} does however not apply to higher dimensional generalizations of 1-\acs{wl}. More specifically, he shows that the 2-dimensional \ac{wl} test is already sufficient to count the number of $m$-cycles in any graph for all $m \leq 6$.
Recently, \citet{Arvind2019} extended this proof to a maximum cycle length of $m \leq 7$.
This advantage of 2-\acs{wl} over 1-\acs{wl} motivates the idea to define a 2-\acs{wl} inspired graph convolution operator, which is able to utilize graph characteristics that are not detectable by existing \acp{gnn}.
Such an operator will be proposed and formally analyzed in \cref{sec:wl2conv}, which is the core of this paper. 
As a preparation, we start with a brief description of the \ac{wl} test and the current 1-\acs{wl}-bounded graph convolution operators in \cref{sec:prelim}, and prove limitations of a related \acs*{gnn} extension in \cref{{sec:wl2conv:limit}}.
In \cref{sec:eval}, our novel operator is evaluated on synthetic as well as real-world graph classification datasets, prior to concluding the paper with an outlook on future research in \cref{sec:conclusion}.

\section{Preliminaries}%
\label{sec:prelim}

In this section, we introduce some important definitions as well as terminology and notation that will be used throughout the paper. 

\begin{defn}\label{defn:prelim:graph}
	A \textit{graph} $G \coloneqq (\mathcal{V}_G, \mathcal{E}_G)$ consists of a finite set of vertices $v_i \in \mathcal{V}_G$ and a set of edges $e_{ij} = (v_i, v_j) \in \mathcal{E}_G \subseteq \mathcal{V}_G^2$.
	Continuous feature vectors $x_G[v_{i}] \in \mathcal{X}_{\mathcal{V}}$, $x_G[e_{ij}] \in \mathcal{X}_{\mathcal{E}}$ may be associated with all vertices $v_i \in \mathcal{V}_G$ and edges $e_{ij}\in \mathcal{E}_G$, respectively.
	In this paper, all graphs $G$ are assumed to be undirected, i.e., ${{e_{ij} \in \mathcal{E}_G}} \leftrightarrow {e_{ji} \in \mathcal{E}_G}$ and $x_G[e_{ij}] = x_G[e_{ji}]$.
	We denote the set of all undirected graphs as $\mathcal{G}$ and the graph isomorphism relation as $G \simeq H$.
\end{defn}
\begin{defn}\label{defn:prelim:distinguish}
    Let $\phi: A \to B$ be a function with arbitrary domain $A$ and codomain $B$.
	We say that $\phi$ \textit{distinguishes} $a, a' \in A$ ($a \mathrel{\not\simeq_\phi} a'$) iff.\ $\phi(a) \neq \phi(a')$.
\end{defn}
\begin{defn}\label{defn:prelim:discr-power}
	A function $\phi: \mathcal{G} \to B$ is \textit{at least as discriminative as} $\psi: \mathcal{G} \to C$ (denoted as $\phi \succeq \psi$) iff.\ $\forall\, G, H \in \mathcal{G}: {G \mathrel{\not\simeq_\psi} H} \rightarrow {G \mathrel{\not\simeq_\phi} H}$.
	We say that $\phi$ has the \textit{same} \ac{dp} as $\psi$ (denoted as $\phi \equiv \psi$) iff. ${\phi \succeq \psi} \land {\psi \succeq \phi }$.
	Lastly, we say that $\phi$ is \textit{more discriminative} than $\psi$ (denoted as $\phi \succ \psi$) iff.\ ${\phi \succeq \psi} \land {\phi \not\equiv \psi}$.
\end{defn}

\subsection{The \acl{wl} Graph Isomorphism Test}%
\label{sec:prelim:wl}

The \acl{wl} test~\citep{Cai1992} is a popular method to distinguish graphs.
There are multiple variations of this test in the literature; we will focus on the so-called Folklore \ac{wl} test.
For a given graph $G \in \mathcal{G}$, it assigns discrete labels $c \in \mathcal{C}$, called \textit{colors}, to vertex $k$-tuples $(v_1, \dots, v_k) \in \mathcal{V}_G^k$, where $k \in \mathbb{N}$ is the so-called \textit{\ac{wl}-dimension} that can be chosen freely.
A mapping $\chi_{G, k}: \mathcal{V}_G^k \to \mathcal{C}$ is called a \textit{$k$-coloring} of $G$.
\begin{defn}
	A coloring $\chi'$ \textit{refines} $\chi$ ($\chi' \succeq \chi$) iff.\ $\forall a, b \in \mathcal{V}_G^k: {a \mathrel{\not\simeq_\chi} b} \rightarrow {a \mathrel{\not\simeq_{\chi'}} b}$, i.e.\ $\chi'$ distinguishes all tuples distinguished by $\chi$.
	They are \textit{equivalent} ($\chi \equiv \chi'$) iff.\ ${\chi \preceq \chi'} \land {\chi \succeq \chi'}$.
\end{defn}
The $k$-dimensional \ac{wl} test ($k$-\acs{wl}) is computed by iteratively refining $k$-colorings $\chi_{G, k}^{(0)} \preceq \chi_{G, k}^{(1)} \preceq \dots$ of a given graph $G$ until the convergence criterion $\chi_{G, k}^{(t)} \equiv \chi_{G, k}^{(t+1)}$ is satisfied.
We denote the final, maximally refined $k$-\acs{wl} coloring by $\hat{\chi}_{G, k}$.

In the 1-dimensional case this means that an initial color is assigned to each vertex of a graph $G$, e.g., the constant coloring $\forall v \in \mathcal{V}_G: \chi_{G,1}^{(0)}(v) = \colorlabel{t_blue}{A}$ for some initial color $\colorlabel{t_blue}{A} \in \mathcal{C}$. 
In each iteration of the 1-\acs{wl} color refinement algorithm, the following neighborhood aggregation scheme is then used to compute a new color for each vertex.
\begin{defn}\label{defn:prelim:wl1-refine}
	$\chi_{G,1}^{(t+1)}(v) \coloneqq h\left(\chi_{G,1}^{(t)}(v), \ldblbrace \chi_{G,1}^{(t)}(u)\, |\, u \in \Gamma_{G}(v) \rdblbrace\right)$,
	with $\Gamma_G(v)$ the neighboring vertices of $v \in \mathcal{V}_G$, $\ldblbrace \cdot \rdblbrace$ denoting a multiset, and $h: \mathcal{C}^* \to \mathcal{C}$ some freely-choosable injective hash function that assigns a unique color to each finite combination of colors.
\end{defn}
Analogous to the 1-dimensional refinement step from \cref{defn:prelim:wl1-refine}, the $k$-dimensional color refinement step is defined as follows.
\begin{defn}\label{defn:prelim:wlk-refine}
	$\begin{aligned}[t]
		\chi_{G,k}^{(t+1)}(s) \coloneqq h\left(\chi_{G,k}^{(t)}(s), \ldblbrace (\chi_{G,k}^{(t)}(s[u/1]), \dots, \chi_{G,k}^{(t)}(s[u/k]))\, |\, u \in \mathcal{V}_G \rdblbrace\right)\\
		\text{with } s = (v_1, \dots, v_k) \in \mathcal{V}_G^k,\quad s[u/j] \coloneqq (v_1, \dots, v_{j-1}, u, v_{j+1}, \dots, v_k) \text{.}
	\end{aligned}$
\end{defn}

In 1-\acs{wl}, a vertex color is refined by combining the colors of neighboring vertices.
In $k$-\acs{wl}, the color of a $k$-tuple $s \in \mathcal{V}_G^k$ is refined by combining the colors of its neighborhood, which is defined as the set of all $k$-tuples in which at most one vertex differs from $s$.
Note that each vertex $k$-tuple has one neighbor for each $u \in \mathcal{V}_G$, each of which is a $k$-tuple of vertex $k$-tuples.
For $k = 2$, this means that each potential edge $(v, w) \in \mathcal{V}_G^2$ has all possible walks of length $2$ from $v$ to $w$ as its neighbors.
Even though $k$-\acs{wl} refines $k$-tuple colors, lower-dimensional structures still get their own colors, since a tuple does not have to consist of distinct vertices:
In $k$-\acs{wl}, the color of a single vertex $v \in \mathcal{V}_G$ is described by $\hat{\chi}_{G, k}(s)$ for $s = (v, \dots, v) \in \mathcal{V}_G^k$;
similarly, every possible edge $e_{ij} \in \mathcal{V}_G^2$ has at least one color that can encode the adjacency information, i.e., whether $e_{ij} \in \mathcal{E}_G$.

\begin{defn}
	The color distribution $\mathit{dist}_{\chi_{G, k}}: \mathcal{C} \to \mathbb{N}_0$ of a $k$-coloring $\chi_{G, k}$ counts each color $c \in \mathcal{C}$ in the coloring, i.e., $\mathit{dist}_{\chi_{G, k}}(c) \coloneqq \left|\left\{ v \in \mathcal{V}_G^k\, |\, \chi_{G, k}(v) = c \right\}\right|$. 
\end{defn}
\begin{figure}[ht]
	\centering
	\includegraphics[width=0.8\linewidth]{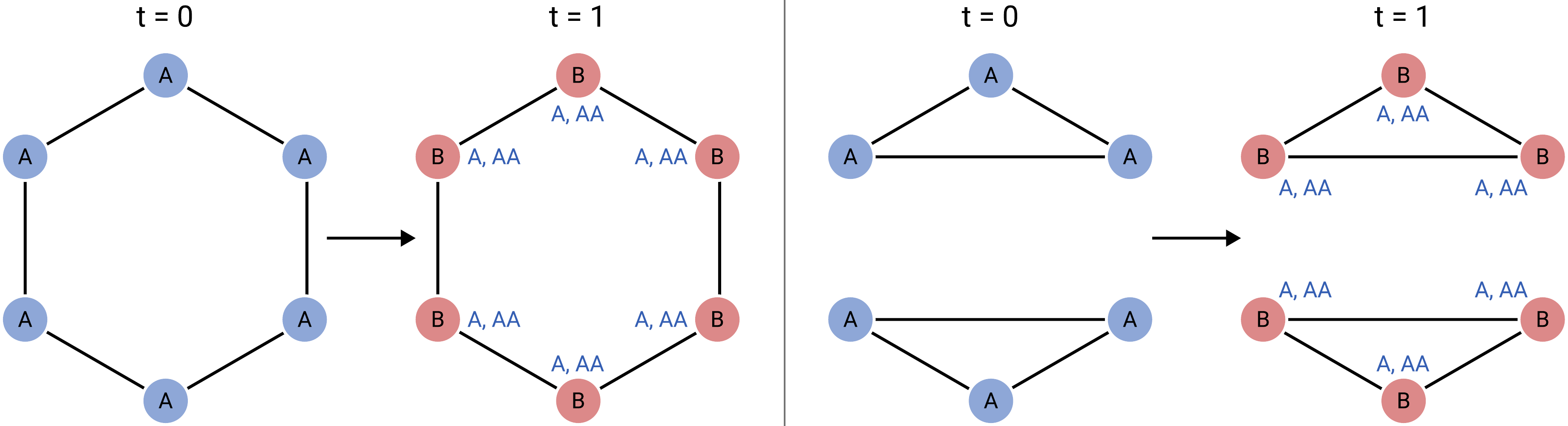}
	\caption{
		Two simple non-isomorphic graphs that are indistinguishable by 1-\acs{wl}.
	}\label{fig:prelim:wl1-problem}
\end{figure}
The output of the $k$-\acs{wl} algorithm is the color distribution $\mathit{dist}_{\hat\chi_{G, k}}$.
Since the way in which \ac{wl} colorings are refined is vertex order invariant, any difference in the final color distribution of two graphs always implies the non-isomorphism ($\not\simeq$) of the colored graphs, i.e., $G \mathrel{{\not\simeq}_{\text{$k$-\acs*{wl}}}} H \implies G \not\simeq H$.
\Cref{fig:prelim:wl1-problem} shows that the opposite does however not necessarily hold.
Additionally, it highlights the inability of 1-\acs{wl} to detect cycles of varying lengths in graphs.
\begin{defn}\label{defn:prelim:cycle-detect}
    We say that $k$-\acs{wl} \textit{detects} $m$-cycles iff.\ $k\textnormal{-\acs{wl}} \succeq d_m$, where $d_m: \mathcal{G} \to {\{ 0, 1 \}}$ is an indicator function, that determines whether a given graph contains at least one $m$-cycle.
\end{defn}
Intuitively, \cref{defn:prelim:cycle-detect} describes cycle detection as the ability to solve the corresponding decision problem given only the color distributions $\mathit{dist}_{\hat\chi_{G, k}}$ for all $G \in \mathcal{G}$.
As already mentioned in the introduction, to detect cycles of length $m \leq 7$, a \ac{wl} dimension of $k \geq 2$ is required \citep{Fuerer2017,Arvind2019}.

\subsection{Graph Neural Networks}%
\label{sec:prelim:gnn}

In this paper, we will focus on so-called spatial \acp{gnn}, which are expressible in terms of repeated vertex neighborhood aggregations.
Such a \ac{gnn} takes a graph $G \in \mathcal{G}$ with vertex feature vectors $x_G[v] \in \mathbb{R}^{d^{(0)}}$ as input; those features are typically represented as a matrix $Z_G^{(0)} \coloneqq \begin{pmatrix} x_G[v_1] \\ \vdots \\ x_G[v_n] \end{pmatrix} \in \mathbb{R}^{n \times d^{(0)}}$, where $n \coloneqq {|\mathcal{V}_G|}$.
A \ac{gnn} convolves this vertex feature matrix via a stack of graph convolution operators $S^{(t)}: \mathbb{R}^{n \times d^{(t-1)}} \to \mathbb{R}^{n \times d^{(t)}}$ s.t.\ $Z_G^{(t)} \coloneqq S^{(t)}(Z_G^{(t-1)})$.
We use $Z_G^{(t)}[v] \in \mathbb{R}^{d^{(t)}}$ to denote the row vector of $Z_G^{(t)}$, which represents the convolved vertex features of $v \in \mathcal{V}_G$.
After applying $T$ convolutional layers, the convolved vertex features $Z_G^{(T)}[v]$ can be used directly for node classification problems, or they can be combined via a pooling layer, e.g., an element-wise mean, to obtain a global graph feature vector which in turn can be used to solve \acf{gcr} problems.
In the rest of this paper, \acp{gnn} will be discussed in the context of \ac{gcr}.

To get an intuition for how \acp{gnn} relate to the \ac{wl} algorithm, one should think of the vertex feature vectors $Z_G^{(t)}$ as a continuous generalization of \ac{wl} colors $\chi_{G,1}^{(t)}$.
The graph convolution operators $S^{(t)}$ then directly correspond to \ac{wl} color refinement steps.
This intuition was recently formalized by \citet{Xu2018}, who showed the following upper bound on the \acl{dp} of \acp{gnn}.
\begin{prop}\label{prop:prelim:gcnn-wl1-limit}
	Any \ac{gnn} that convolves vertex feature vectors via a convolution operator of the form $Z_G^{(t)}[v] = h^{(t)}\left(Z_G^{(t-1)}[v], \ldblbrace Z_G^{(t-1)}[u]\, |\, u \in \Gamma_G(v) \rdblbrace\right)$ is at most as discriminative as 1-\acs{wl}, where $h^{(t)}: {\left(\mathbb{R}^{d^{(t-1)}}\right)}^{*} \to \mathbb{R}^{d^{(t)}}$ is an arbitrary vertex neighborhood hashing function.
	Moreover, iff.\ $h^{(t)}$ is injective, the \ac{gnn} has the same \ac{dp} as 1-\acs{wl}.
\end{prop}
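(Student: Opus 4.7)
The plan is to proceed by induction on the layer index $t$ and relate each convolved vertex feature $Z_G^{(t)}[v]$ to the 1-\acs{wl} coloring $\chi_{G,1}^{(t)}(v)$. I will assume the initial 1-\acs{wl} coloring is aligned with the input features, i.e.\ $\chi_{G,1}^{(0)}(v) = \chi_{G',1}^{(0)}(v')$ iff $x_G[v] = x_{G'}[v']$, so that $Z_G^{(0)}[v]$ is already determined by $\chi_{G,1}^{(0)}(v)$.

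For the upper-bound half, I would establish the inductive invariant
\[
\chi_{G,1}^{(t)}(v) = \chi_{G',1}^{(t)}(v') \;\Rightarrow\; Z_G^{(t)}[v] = Z_{G'}^{(t)}[v'],
\]
for arbitrary $G, G' \in \mathcal{G}$ and $v \in \mathcal{V}_G$, $v' \in \mathcal{V}_{G'}$. The base case is immediate from the initialization. For the inductive step, the injective hash in \cref{defn:prelim:wl1-refine} guarantees that equal colors at step $t$ imply both equal colors at step $t-1$ and equality of the multisets of neighbor colors. The induction hypothesis transports both equalities to the $Z^{(t-1)}$-level, after which determinism of $h^{(t)}$ immediately yields $Z_G^{(t)}[v] = Z_{G'}^{(t)}[v']$. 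The graph-level statement then follows because any permutation-invariant readout depends only on the multiset of vertex features: matching vertices of equal 1-\acs{wl} color pairs them with vertices of equal feature vector, so equal 1-\acs{wl} color distributions produce identical readout outputs. This gives \acs{gnn} $\preceq$ 1-\acs{wl} in the sense of \cref{defn:prelim:discr-power}.

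For the "iff $h^{(t)}$ injective" half, I would strengthen the inductive claim to a biconditional and supply the converse direction. If every $h^{(t)}$ is injective, then $Z_G^{(t)}[v] = Z_{G'}^{(t)}[v']$ forces equality of the two arguments of $h^{(t)}$: both the previous feature vector and the neighbor feature multiset must coincide. The biconditional induction hypothesis translates these into equalities of 1-\acs{wl} colors, and the refinement step of \cref{defn:prelim:wl1-refine} produces matching colors at layer $t$. Combined with an injective readout at the graph level this yields \acs{gnn} $\equiv$ 1-\acs{wl}.

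The main obstacle I anticipate is the multiset-to-multiset bookkeeping in each inductive step: the induction hypothesis only speaks about single colors versus single feature vectors, whereas each refinement step needs the lifted statement about multisets of neighbor colors versus multisets of neighbor features. I would handle this by explicitly exhibiting a bijection between $\Gamma_G(v)$ and $\Gamma_{G'}(v')$ that witnesses multiset equality on one side, and then invoking the induction hypothesis on each matched pair to transfer the equality to the other side. A small secondary subtlety is phrasing the induction uniformly over pairs $(G, v)$, so that comparisons between vertices in the same graph and comparisons between vertices in different graphs are handled by the same argument rather than requiring a separate case distinction.
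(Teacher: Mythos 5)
The paper does not actually prove \cref{prop:prelim:gcnn-wl1-limit}; it states it as a known result of \citet{Xu2018}, so there is no in-paper argument to compare against. Your proposal essentially reconstructs the standard proof from that reference: the layerwise invariant ``equal 1-\acs{wl} colors imply equal feature vectors,'' established by induction using the injectivity of the \acs{wl} hash $h$ in \cref{defn:prelim:wl1-refine} to decompose color equality at step $t$ into color equality plus neighbor-multiset equality at step $t-1$, together with the multiset/bijection bookkeeping you describe and the permutation invariance of the readout. This is correct, and your explicit alignment assumption between the initial \acs{wl} coloring and the input features is indeed necessary for the upper-bound direction (one should also note that indistinguishability under the \emph{converged} coloring $\hat\chi$ implies indistinguishability at every finite step $t$, since the refinement sequence is monotone; this is the link from $G \mathrel{\simeq_{\textnormal{1-\acs*{wl}}}} H$ to equal step-$T$ color distributions).

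Two caveats on the second half. First, you only establish the ``if'' direction of the stated ``iff.'': injectivity of every $h^{(t)}$, upgraded to a biconditional invariant, yields that the \ac{gnn} is at least as discriminative as 1-\acs{wl}. You do not address the converse, i.e., that non-injectivity of some $h^{(t)}$ forces strictly lower \acl{dp}; as literally stated that direction is false (a map that is non-injective on all of ${\left(\mathbb{R}^{d^{(t-1)}}\right)}^{*}$ can still be injective on the finitely many inputs that actually arise), and the cited source does not claim it either, so the omission is defensible but worth flagging. Second, you are right that matching the \acl{dp} of 1-\acs{wl} additionally requires an injective (multiset-preserving) graph-level readout; this is an extra hypothesis beyond the proposition's literal wording, corresponding to the readout condition in \citet{Xu2018}, and making it explicit is the correct move.
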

Among others, this bound applies to the \ac{gcn}~\citep{Kipf2017} and \ac{sage}~\citep{Hamilton2017} architectures.
Since those approaches use a non-injective hashing function $h^{(t)}$, their \ac{dp} turns out to be strictly lower than that of 1-\acs{wl}.
On the other hand, the \ac{gin}~\citep{Xu2018} convolution operator achieves injectivity through the use of a \ac{mlp}, and therefore has the same \ac{dp} as 1-\acs{wl} (cf.\ \cref{defn:prelim:wl1-refine}):
\begin{align}
	Z_G^{(t)}[v] \coloneqq \mathrm{\acs*{mlp}}^{(t)}\left( (1 + \varepsilon) Z_G^{(t-1)}[v] + \smashoperator[lr]{\sum_{u \in \Gamma_G(v)}} Z_G^{(t-1)}[u] \right)%
	\quad\text{with some irrational } \varepsilon > 0 \text{.}\label{eq:prelim:gin-layer}
\end{align}

\section{Limitations of an Existing 2-\acs*{gnn}}%
\label{sec:wl2conv:limit}

The idea to extend \acp{gnn} along the lines of the higher-order \ac{wl} algorithm, which we shall elaborate on in \cref{sec:wl2conv} below, is not entirely new.
\citet{Morris2019} recently proposed the so-called $k$-\acs{gnn}, which adapts the discrete $k$-\acs{wl} refinement step (see \cref{defn:prelim:wlk-refine}) to the continuous convolution setting.
However, it turns out that $k$-\acsp{gnn} do not preserve some of the desirable properties of $k$-\acs{wl}. In particular, unlike 2-\acs{wl}, 2-\acsp{gnn} cannot count or even detect $m$-cycles in graphs.
In this section, we give a proof of this statement.

Similar to $k$-\acs{wl}, a $k$-\acs{gnn} iteratively refines/convolves the colors/features of combinations of $k$ vertices.
To reduce runtime complexity, $k$-\acsp{gnn} assign feature vectors to vertex $k$-multisets instead of $k$-tuples.
Additionally, only a ``local'' neighborhood of each multiset is considered in $k$-\acs{gnn} convolutions, whereas in $k$-\acs{wl} each tuple has a ``global'' neighborhood of size $n = |\mathcal{V}_G|$, one neighbor for each vertex $u \in \mathcal{V}_G$ (cf.\ \cref{defn:prelim:wlk-refine}).
As we will see next, the main difference between $k$-\acsp{gnn} and $k$-\acs{wl} lies in their respective notion of ``neighborhood''.
More specifically, since 2-\acs{wl} already has a significantly higher \ac{dp} than 1-\acs{wl}, we will analyze how the \ac{dp} of 2-\acsp{gnn} compares to that of 1-\acs{wl} and 2-\acs{wl}.

2-\acsp{gnn} define the neighbors of an edge $\textcolor{t_blue}{e_{ij}} = (v_i, v_j)$ to be the edges that are incident to either $v_i$ or $v_j$. 
In 2-\acs{wl}, on the other hand, the neighbors of $\textcolor{t_blue}{e_{ij}}$ are the edge pairs ${\left\{(\textcolor{t_red}{e_{il}}, \textcolor{t_darkgreen}{e_{lj}})\right\}}_{v_l \in \mathcal{V}_G}$, i.e., all possible walks of length two that start at $v_i$ and end at $v_j$. 
This difference becomes clear when comparing the definition of convolution in 2-\acsp{gnn} with that of color refinement in 2-\acs{wl}:
\begin{alignat}{4}
	\text{2-\acs{gnn}\footnotemark: } && Z_G^{(t)}[e_{ij}] &= \sigma\Biggl( \textcolor{t_blue}{Z_G^{(t-1)}[e_{ij}]} W^{(t)} + &&\left( \smashoperator[r]{\sum_{v_l \in \Gamma_G(v_j)}} \textcolor{t_red}{Z_G^{(t-1)}[e_{il}]} + \smashoperator[lr]{\sum_{{i \neq j}\,\land\, {v_l \in \Gamma_G(v_i)}}} \textcolor{t_darkgreen}{Z_G^{(t-1)}[e_{lj}]} \right) W_{\Gamma}^{(t)} \Biggr)\label{eq:wl2conv:2-gnn-layer} \\ 
	\text{2-\acs{wl}: } && \chi_{G,2}^{(t)}(e_{ij}) &= h\Bigl(\textcolor{t_blue}{\chi_{G,2}^{(t-1)}(e_{ij})}, &&\ldblbrace (\textcolor{t_red}{\chi_{G,2}^{(t-1)}(e_{il})}, \textcolor{t_darkgreen}{\chi_{G,2}^{(t-1)}(e_{lj})})\, |\, v_l \in \mathcal{V}_G \rdblbrace\Bigr) \nonumber 
\end{alignat}\footnotetext{
	Here, $\sigma$ denotes some nonlinear activation function and $W^{(t)}, W_{\Gamma}^{(t)} \in \mathbb{R}^{d^{(t-1)} \times d^{(t)}}$ the weight matrices of the convolution operator.
	To highlight the relationship between 2-\acsp{gnn} and 2-\acs{wl}, a 2-\acs{gnn} definition that uses two sums over $v_l \in \Gamma_G(v_j)$ resp.\ $v_l \in \Gamma_G(v_i)$ is shown;
	this is equivalent to a single sum over the features of the edges $\{ (u, w) \in \mathcal{E}_G \,|\, u = v_i \lor w = v_j \}$ \citep[see][]{Morris2019}.
}\begin{figure}[ht]
	\centering
	\includegraphics[width=\linewidth]{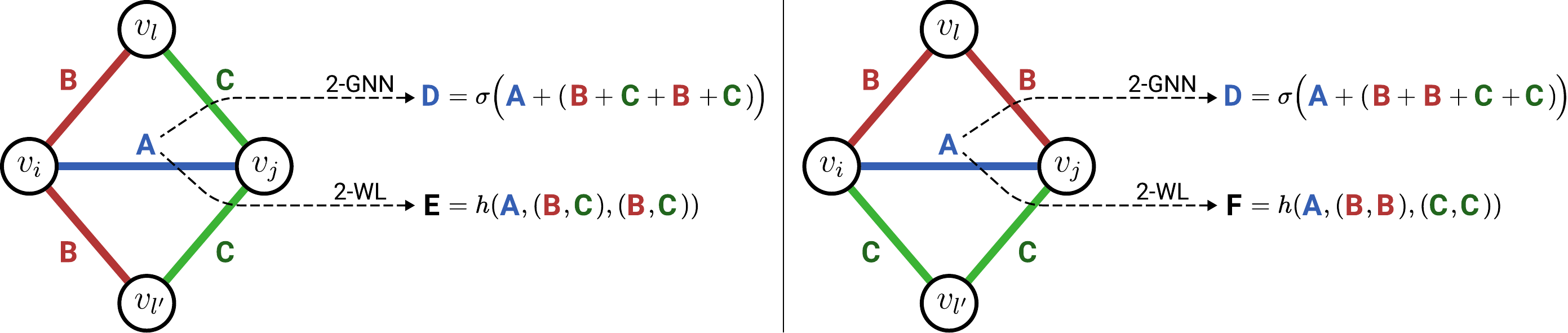}
	\caption[Edge colorings on which 2-\acsp{gnn} and 2-\acs{wl} behave differently.]{
		Two edge colorings on which 2-\acsp{gnn} and 2-\acs{wl} behave differently.
		A 2-\acs{gnn} will refine the ``color vector'' of $e_{ij}$ to \colorlabel{t_blue}{D} for both initial colorings.
		2-\acs{wl} on the other hand differentiates both colorings by preserving the color tuple information.
	}\label{fig:wl2conv:2gnn-2wl-diff}
\end{figure}

\noindent In order to analyze what those different notions of neighborhood imply for the \ac{dp} of 2-\acsp{gnn} in comparison to 2-\acs{wl}, we first show that the \ac{dp} of 2-\acsp{gnn} on all graphs $G \in \mathcal{G}$ is less than or equal to that of 1-\acs{wl} on the so-called \textit{edge neighborhood graphs} $G^{\mathcal{E}} \in \mathcal{G}^{\mathcal{E}}$.

\begin{defn}\label{defn:wl2conv:edge-graph}
	The \textit{edge neighborhood graph} of a given graph $G = (\mathcal{V}_G, \mathcal{E}_G)$ is defined as $G^{\mathcal{E}} \coloneqq (\mathcal{V}_{G^{\mathcal{E}}}, \mathcal{E}_{G^{\mathcal{E}}})$ with the vertices $\mathcal{V}_{G^{\mathcal{E}}} \coloneqq \{ \ldblbrace v, u \rdblbrace\, |\, (v, u) \in \mathcal{E}_G \lor v = u \}$ and the edges $\mathcal{E}_{G^{\mathcal{E}}} \coloneqq \left\{ (e, e') \in \mathcal{V}_{G^{\mathcal{E}}}^2 |\, \left|e \cap e'\right| = 1 \right\}$. 
\end{defn}
\begin{prop}\label{prop:wl2conv:2gnn-wl1-limit}
	The \ac{dp} of all 2-\acsp{gnn} $h_2: \mathcal{G} \to \mathcal{Y}$ is less than or equal to that of 1-\ac{wl} on edge neighborhood graphs, i.e., $\forall\, G, H \in \mathcal{G}: {G^{\mathcal{E}} \mathrel{\simeq_{\textnormal{1-\acs*{wl}}}} H^{\mathcal{E}}} \rightarrow {h_2(G) = h_2(H)}$. 
\end{prop}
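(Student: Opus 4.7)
My plan is to prove \cref{prop:wl2conv:2gnn-wl1-limit} by reduction: for any 2-\acs{gnn} $h_2$ on $\mathcal{G}$, I will exhibit a 1-\acs{wl}-bounded \acs{gnn} $h_1$ (in the sense of \cref{prop:prelim:gcnn-wl1-limit}) operating on the edge neighborhood graph such that $h_2(G) = h_1(G^{\mathcal{E}})$ for every $G \in \mathcal{G}$. Once this identification is established, \cref{prop:prelim:gcnn-wl1-limit} applied to $h_1$ yields $G^{\mathcal{E}} \mathrel{\simeq_{\text{1-\acs*{wl}}}} H^{\mathcal{E}} \rightarrow h_1(G^{\mathcal{E}}) = h_1(H^{\mathcal{E}})$, and the claim follows immediately.

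The reduction is essentially a bookkeeping exercise once the right correspondence is spotted. The 2-\acs{gnn} maintains one feature vector $Z_G^{(t)}[e_{ij}]$ for every vertex 2-multiset $\{v_i, v_j\}$, and by \cref{defn:wl2conv:edge-graph} these are exactly the vertices of $G^{\mathcal{E}}$; so I identify $Z_{G^{\mathcal{E}}}^{(t)}[\{v_i, v_j\}] \coloneqq Z_G^{(t)}[e_{ij}]$. The footnote to \cref{eq:wl2conv:2-gnn-layer} rewrites the two sums as a single sum over edges $(u, w) \in \mathcal{E}_G$ with $u = v_i \lor w = v_j$, i.e.\ precisely the elements of $\mathcal{V}_{G^{\mathcal{E}}}$ that share one vertex with $\{v_i, v_j\}$, which by \cref{defn:wl2conv:edge-graph} is exactly $\Gamma_{G^{\mathcal{E}}}(\{v_i, v_j\})$. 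The update therefore has the form
\begin{equation*}
    Z_{G^{\mathcal{E}}}^{(t)}[v] = \sigma\Bigl(Z_{G^{\mathcal{E}}}^{(t-1)}[v]\,W^{(t)} + \bigl(\smashoperator[r]{\sum_{u \in \Gamma_{G^{\mathcal{E}}}(v)}} Z_{G^{\mathcal{E}}}^{(t-1)}[u]\bigr) W_{\Gamma}^{(t)}\Bigr) \text{,}
\end{equation*}
which fits the template $h^{(t)}\bigl(Z_{G^{\mathcal{E}}}^{(t-1)}[v], \ldblbrace Z_{G^{\mathcal{E}}}^{(t-1)}[u] \mid u \in \Gamma_{G^{\mathcal{E}}}(v) \rdblbrace\bigr)$ covered by \cref{prop:prelim:gcnn-wl1-limit} (with $h^{(t)}$ a non-injective sum-based aggregator). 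The same identification then makes the final readout/pooling of $h_2$ over vertex 2-multisets coincide with the readout of $h_1$ over the vertices of $G^{\mathcal{E}}$, giving $h_2(G) = h_1(G^{\mathcal{E}})$.

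The delicate part is arguing that the initial feature assignment is well-defined on $G^{\mathcal{E}}$: the 2-\acs{gnn} initializes $Z_G^{(0)}[e_{ij}]$ from information about $\{v_i, v_j\}$ (whether it is a single vertex or an actual edge, together with any vertex/edge attributes), and I have to check that this information is fully encoded in the (attributed) vertex of $G^{\mathcal{E}}$, so that isomorphic edge neighborhood graphs receive the same initial coloring and 1-\acs{wl} on $G^{\mathcal{E}}$ indeed subsumes $h_1$. In particular the self-multiset vertices $\{v, v\}$ of $G^{\mathcal{E}}$, for which the condition $i \neq j$ in \cref{eq:wl2conv:2-gnn-layer} suppresses the second sum, must carry a label that distinguishes them from edge-multisets so that the neighborhood structure described above is faithful; this is the only real obstacle, and once it is settled the rest reduces to invoking \cref{prop:prelim:gcnn-wl1-limit}.
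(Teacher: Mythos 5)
Your proposal is correct and follows essentially the same route as the paper's proof: identify the 2-GNN's multiset features with vertex features of $G^{\mathcal{E}}$, rewrite the convolution as a vertex neighborhood aggregation over $\Gamma_{G^{\mathcal{E}}}$, and invoke \cref{prop:prelim:gcnn-wl1-limit}. Your additional remark about the initial labeling of loop versus edge vertices is a legitimate point of care that the paper's terser proof leaves implicit, but it does not change the argument.
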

\begin{proof}
	By \cref{defn:wl2conv:edge-graph}, $\Gamma_{G^{\mathcal{E}}}(e_{ij}) = {\{ (u, w) \in \mathcal{E}_G \,|\, u = v_i \lor w = v_j \}}$ for all $e_{ij} \in \mathcal{E}_G$.
	Therefore, the 2-\acs{gnn} convolution in \cref{eq:wl2conv:2-gnn-layer} can be rewritten as a vertex neighborhood convolution
	$\begin{aligned}
		Z_G^{(t)}[e] = \sigma\left( Z_G^{(t-1)}[e] W^{(t)} + \smashoperator[lr]{\sum_{e' \in \Gamma_{G^{\mathcal{E}}}(e)}} Z_G^{(t-1)}[e'] W_{\Gamma}^{(t)} \right)
	\end{aligned}$.
	\cref{prop:wl2conv:2gnn-wl1-limit} then follows from \cref{prop:prelim:gcnn-wl1-limit}.
\end{proof}
\begin{lem}\label{lem:wl2conv:wl1-regular-edge-neighbor-limit}
	1-\ac{wl} cannot distinguish the edge neighborhood graphs $G^{\mathcal{E}}$ and $H^{\mathcal{E}}$ of any pair of $d$-regular graphs $G$ and $H$ with $n$ vertices.
\end{lem}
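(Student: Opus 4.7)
The plan is to show that 1-\acs{wl} on $G^{\mathcal{E}}$ stabilizes after a single refinement round at just two colors, with multiplicities that depend only on $n$ and $d$ and therefore agree between $G^{\mathcal{E}}$ and $H^{\mathcal{E}}$. This parallels the classical fact, invoked in the introduction, that 1-\acs{wl} cannot separate two regular graphs on the same vertex count, except that here we deal with a natural two-class refinement of that situation.

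First I would split $\mathcal{V}_{G^{\mathcal{E}}}$ into \emph{loop-vertices} $L_G \coloneqq \{ \ldblbrace v,v \rdblbrace \mid v \in \mathcal{V}_G \}$ and \emph{edge-vertices} $E_G \coloneqq \mathcal{V}_{G^{\mathcal{E}}} \setminus L_G$. Since $G$ is $d$-regular on $n$ vertices, $|L_G| = n$ and $|E_G| = nd/2$. A direct case analysis of the condition $|e \cap e'| = 1$ in \cref{defn:wl2conv:edge-graph} then yields the neighborhood structure: a loop-vertex $\ldblbrace v,v \rdblbrace$ has exactly $d$ neighbors, all in $E_G$, namely the $d$ real edges of $G$ incident to $v$; an edge-vertex $\ldblbrace v,u \rdblbrace$ has exactly $2d$ neighbors, consisting of the two loop-vertices $\ldblbrace v,v \rdblbrace$ and $\ldblbrace u,u \rdblbrace$ plus the $2(d-1)$ real edges of $G$ that share exactly one endpoint with $\ldblbrace v,u \rdblbrace$.

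From here I would argue by induction on the refinement step $t$, starting from the constant initial coloring, that every loop-vertex in both $G^{\mathcal{E}}$ and $H^{\mathcal{E}}$ carries a common color $A_t$ and every edge-vertex a common color $B_t$. The induction step is immediate from the neighborhood statistics above, since the hash $h$ of \cref{defn:prelim:wl1-refine} receives the same input at every loop-vertex (the pair $A_t$ together with a multiset of $d$ copies of $B_t$) and the same input at every edge-vertex (the pair $B_t$ together with a multiset of $2$ copies of $A_t$ and $2(d-1)$ copies of $B_t$). Hence $\hat{\chi}_{G^{\mathcal{E}},1}$ uses only two colors with multiplicities $n$ and $nd/2$, and the same holds for $\hat{\chi}_{H^{\mathcal{E}},1}$, so both graphs share the same stable 1-\acs{wl} color distribution.

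The only step that requires a bit of care is the neighborhood count for edge-vertices: one must exclude $\ldblbrace v,u \rdblbrace$ itself from the edges at $v$ and $u$ (giving $d-1$ per endpoint), and verify that no loop-vertex $\ldblbrace w,w \rdblbrace$ with $w \notin \{v,u\}$ contributes, which is immediate because $\ldblbrace w,w \rdblbrace \cap \ldblbrace v,u \rdblbrace = \emptyset$. Everything else is a routine consequence of $d$-regularity, so I do not anticipate a substantive obstacle.
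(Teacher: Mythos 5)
Your proof is correct and takes essentially the same route as the paper's: both rest on the observation that the partition of $\mathcal{V}_{G^{\mathcal{E}}}$ into loop-vertices and edge-vertices is equitable with neighborhood statistics ($d$ edge-vertex neighbors per loop-vertex; $2$ loop-vertex and $2d-2$ edge-vertex neighbors per edge-vertex) depending only on $d$, so 1-WL stabilizes at this two-class coloring with multiplicities $n$ and $\frac{nd}{2}$ in both graphs. The only cosmetic difference is that the paper starts w.l.o.g.\ from the two-color initial coloring and observes $\chi^{(0)} \equiv \chi^{(1)}$ directly, whereas you start from the constant coloring and carry the same computation through an explicit induction.
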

\begin{proof}
	Let $G$ and $H$ be two $d$-regular graphs of size $n$.
	Their corresponding edge neighborhood graphs $G^{\mathcal{E}}$ and $H^{\mathcal{E}}$ both have $n^{\mathcal{E}} = n + \frac{nd}{2}$ vertices,
	$n$ of which correspond to the vertices of $G$ and $H$ respectively; we will refer to them as \textit{loop vertices} $L_G$/$L_H$.
	The remaining $\frac{nd}{2}$ edge neighborhood vertices correspond to the edges of $G$ and $H$; we will refer to them as \textit{edge vertices} $E_G$/$E_H$.

	W.l.o.g.\ we define the initial colors of the loop vertices as $\chi^{(0)}(v) = \colorlabel{t_blue}{A}$ for all $v \in {L_G \cup L_H}$. 
	The initial colors of the edge vertices are defined as $\chi^{(0)}(e) = \colorlabel{t_red}{B}$ for all $e \in E_G \cup E_H$. 
	Note that each loop vertex $\ldblbrace v_i, v_i \rdblbrace$ with $v_i \in \mathcal{V}_G \cup \mathcal{V}_H$ has $d$ neighbors, the edges incident to $v_i$.
	Similarly, each edge vertex $\ldblbrace v_i, v_j \rdblbrace$ has $2d$ neighbors, two of which are the loop vertices $\ldblbrace v_i, v_i \rdblbrace$ and $\ldblbrace v_j, v_j \rdblbrace$ with the remaining $2d - 2$ neighbors corresponding to the edges that are incident to $e_{ij}$.

	After one color refinement step, we get $\chi^{(1)}(v) = h(\colorlabel{t_blue}{A}, \ldblbrace \underbrace{\colorlabel{t_red}{B}, \dots, \colorlabel{t_red}{B}}_{d\text{ times}} \rdblbrace) \eqqcolon \colorlabel{t_blue}{C}$ for all loop vertices $v \in  L_G \cup L_H$ and $\chi^{(1)}(e) = h(\colorlabel{t_red}{B}, \ldblbrace \colorlabel{t_blue}{A}, \colorlabel{t_blue}{A}, \underbrace{\colorlabel{t_red}{B}, \dots, \colorlabel{t_red}{B}}_{\mathclap{2d - 2 \text{ times}}} \rdblbrace) \eqqcolon \colorlabel{t_red}{D}$. 
	This means that $\chi^{(0)}$ and $\chi^{(1)}$ are identical up to the color substitutions $\colorlabel{t_blue}{A} \to \colorlabel{t_blue}{C}$ and $\colorlabel{t_red}{B} \to \colorlabel{t_red}{D}$, i.e.\ $\chi^{(0)} \equiv \chi^{(1)}$, which in turn implies that 1-\ac{wl} terminates after one iteration. 
	\Cref{lem:wl2conv:wl1-regular-edge-neighbor-limit} then directly follows, since both $G^{\mathcal{E}}$ and $H^{\mathcal{E}}$ have $n$ vertices with the final color $\colorlabel{t_blue}{C}$ and $\frac{nd}{2}$ vertices with the final color $\colorlabel{t_red}{D}$, i.e.\ $G^{\mathcal{E}} \mathrel{\simeq_{\textnormal{1-\acs*{wl}}}} H^{\mathcal{E}}$. 
\end{proof}
\begin{prop}\label{prop:wl2conv:2gnn-regular-limit}
	A 2-\acs{gnn} cannot distinguish regular graphs of the same size and therefore has a lower \ac{dp} than 2-\ac{wl}.
\end{prop}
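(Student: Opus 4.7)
The plan is to chain \cref{prop:wl2conv:2gnn-wl1-limit} and \cref{lem:wl2conv:wl1-regular-edge-neighbor-limit} for the first half of the claim, and then obtain the strict \acl{dp} gap by exhibiting an explicit pair of regular graphs of equal size that 2-\acs{wl} separates but every 2-\acs{gnn} collapses.

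For the first half, I would fix two $d$-regular graphs $G$ and $H$ of size $n$. \cref{lem:wl2conv:wl1-regular-edge-neighbor-limit} gives $G^{\mathcal{E}} \mathrel{\simeq_{\text{1-\acs*{wl}}}} H^{\mathcal{E}}$, and \cref{prop:wl2conv:2gnn-wl1-limit} immediately yields $h_2(G) = h_2(H)$ for every 2-\acs{gnn} $h_2$. This is a one-line composition of the two preceding results and should be the routine part.

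To then show that $h_2 \not\equiv \text{2-\acs{wl}}$, I would exhibit a concrete separating example. A convenient choice is $G_0 \coloneqq C_6$ (the hexagon) versus $H_0 \coloneqq C_3 \cup C_3$ (the disjoint union of two triangles): both are $2$-regular on $6$ vertices, so the first half already forces $h_2(G_0) = h_2(H_0)$ for every 2-\acs{gnn} $h_2$. By the cycle-counting result of \citet{Fuerer2017} cited in \cref{sec:intro}, 2-\acs{wl} determines the number of $3$-cycles in any graph; since $G_0$ contains none while $H_0$ contains two, it follows that $G_0 \mathrel{\not\simeq_{\text{2-\acs*{wl}}}} H_0$.

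The step I expect to require extra care is the other half of the $\succ$ relation from \cref{defn:prelim:discr-power}, namely $\text{2-\acs{wl}} \succeq h_2$, which is needed to upgrade $\not\equiv$ to $\succ$. I would obtain this by observing that 1-\acs{wl} on $G^{\mathcal{E}}$ is no more discriminative than 2-\acs{wl} on $G$: edge-neighborhood adjacency in $G^{\mathcal{E}}$ is a special case of the length-two walk structure that 2-\acs{wl} refines at every iteration, so any pair with $G \mathrel{\simeq_{\text{2-\acs*{wl}}}} H$ also satisfies $G^{\mathcal{E}} \mathrel{\simeq_{\text{1-\acs*{wl}}}} H^{\mathcal{E}}$. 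Chaining this with \cref{prop:wl2conv:2gnn-wl1-limit} gives $\text{2-\acs{wl}} \succeq h_2$, and together with the separating example from the previous paragraph yields the claimed $\text{2-\acs{wl}} \succ h_2$.
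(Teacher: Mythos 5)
Your proposal is correct and its core is the same as the paper's: the paper's proof is exactly the one-line chaining of \cref{prop:wl2conv:2gnn-wl1-limit} and \cref{lem:wl2conv:wl1-regular-edge-neighbor-limit}, combined with the fact that 2-\acs{wl} \emph{can} distinguish most regular graphs, for which the paper simply cites \citep[Cor.~1.8.6]{Immerman1990}. You deviate in two ways, both defensible. First, instead of invoking Immerman's ``most regular graphs'' result, you exhibit the concrete witness $C_6$ versus $C_3 \cup C_3$ and separate it via the triangle-counting ability of 2-\acs{wl} from \citet{Fuerer2017}; this is more self-contained, matches \cref{fig:prelim:wl1-problem}, and only uses a result the paper already relies on elsewhere, at the cost of being a single explicit pair rather than a ``most pairs'' statement. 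Second, you explicitly supply the direction $\text{2-\acs{wl}} \succeq h_2$, which \cref{defn:prelim:discr-power} strictly requires to conclude $\text{2-\acs{wl}} \succ h_2$ and which the paper's proof leaves implicit. Your route (2-\acs{wl} on $G$ refines 1-\acs{wl} on $G^{\mathcal{E}}$, then compose with \cref{prop:wl2conv:2gnn-wl1-limit}) is the right idea, but as stated it is only a sketch: to make it airtight you would need an induction over refinement rounds showing that $\hat{\chi}_{G,2}$ restricted to the tuples corresponding to $\mathcal{V}_{G^{\mathcal{E}}}$ refines $\hat{\chi}_{G^{\mathcal{E}},1}$, using that the initial 2-\acs{wl} colors encode adjacency and loop/non-loop status so that the $\Gamma_{G^{\mathcal{E}}}$-neighbor multiset can be extracted from the 2-\acs{wl} neighborhood multiset $\ldblbrace (\chi(v_i,v_l), \chi(v_l,v_j))\, |\, v_l \in \mathcal{V}_G \rdblbrace$. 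This extra step is not a flaw but a genuine strengthening relative to the paper's (looser) reading of ``lower \ac{dp}''.
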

\begin{proof}
	The proposition directly follows from \cref{prop:wl2conv:2gnn-wl1-limit}, \cref{lem:wl2conv:wl1-regular-edge-neighbor-limit} and the fact that 2-\acs{wl} is able to distinguish most regular graphs~\citep[Cor.~1.8.6]{Immerman1990}.
\end{proof}

As previously mentioned, the \ac{dp} of a model by itself is not necessarily relevant for real-world \ac{gcr} problems.
However, 2-\acs{wl} is not only more discriminative than 1-\acs{wl}, but is also able to detect and count the number of $m$-cycles in a given graph for all $m \leq 7$.
We now show that 2-\acsp{gnn} not only have a lower \ac{dp} than 2-\acs{wl}, but are also unable to detect cycles.
\begin{prop}\label{prop:wl2conv:2gnn-cycle-limit}
	2-\acsp{gnn} cannot detect $m$-cycles for $m \geq 3$.
\end{prop}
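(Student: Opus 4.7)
The plan is to produce, for every $m \geq 3$, a pair of graphs $G_m, H_m$ such that $d_m(G_m) \neq d_m(H_m)$ but every 2-\acs{gnn} assigns the same value to both. This immediately contradicts $2\text{-\acs{gnn}} \succeq d_m$ and thus establishes the proposition via \cref{defn:prelim:cycle-detect}.

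Almost all of the technical work is already available: \cref{prop:wl2conv:2gnn-wl1-limit} and \cref{lem:wl2conv:wl1-regular-edge-neighbor-limit} combine to tell us that no 2-\acs{gnn} can distinguish any two $d$-regular graphs of the same order. So the task collapses to choosing, for each $m \geq 3$, a pair of regular graphs of the same degree and the same number of vertices, exactly one of which contains an $m$-cycle.

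A single construction handles every $m$: take $G_m := C_m \sqcup C_m$, the disjoint union of two $m$-cycles, and $H_m := C_{2m}$, a single cycle of length $2m$. Both are $2$-regular on $2m$ vertices, so the combined lemma and proposition immediately give $h_2(G_m) = h_2(H_m)$ for every 2-\acs{gnn} $h_2$. On the other hand $G_m$ plainly contains an $m$-cycle, while $H_m$ has girth $2m > m$ for $m \geq 3$ and so contains no $m$-cycle at all; hence $d_m(G_m) = 1 \neq 0 = d_m(H_m)$, completing the argument.

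I do not anticipate a real obstacle. The nontrivial combinatorics is already bottled up in \cref{lem:wl2conv:wl1-regular-edge-neighbor-limit}, so the only design decision is which regular pair to name. The appeal of the $C_m \sqcup C_m$ vs.\ $C_{2m}$ pair is that it is uniform in $m$ and makes the girth/cycle-length check trivial; the final proof should therefore be essentially a two-line application of the preceding two results, together with a one-line observation about the cycles of $C_{2m}$.
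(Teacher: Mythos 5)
Your proposal is correct and follows essentially the same route as the paper: both reduce the claim to \cref{prop:wl2conv:2gnn-regular-limit} by exhibiting two 2-regular graphs of equal order that differ on $d_m$. The only difference is the witness pair --- the paper uses $\nicefrac{n}{3}$ disjoint triangles versus $\nicefrac{n}{m}$ disjoint $m$-cycles with $n = \operatorname{lcm}(3, m)$, while your $C_m \sqcup C_m$ versus $C_{2m}$ pair is uniform in $m$ and equally valid.
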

\begin{proof}
	Let $n$ be the \ac{lcm} of $3$ and some $m > 3$.
	We define $c_3 \coloneqq \frac{n}{3}$ and $c_m \coloneqq \frac{n}{m}$.
	Based on that, we define the following two graphs:
	Let $G_3$ be a graph consisting of $c_3$ disconnected cycles of length $3$, analogously let $G_m$ be a graph consisting of $c_m$ disconnected cycles of length $m$.
	Since both $G_3$ and $G_m$ are 2-regular and have the size $n$, any 2-\ac{gnn} $h_2: \mathcal{G} \to \mathcal{Y}$ must map both of them to the same $y \in \mathcal{Y}$ by \cref{prop:wl2conv:2gnn-regular-limit}, i.e., $G_3 \mathrel{\simeq_{h_2}} G_m$.

	Let us assume that $h_2$ is able to detect cycles of length $3$, i.e.\ triangles.
	Following \cref{defn:prelim:cycle-detect}, this would imply that $h_2$ is at least as discriminative as the triangle detection function $d_3: \mathcal{G} \to \{ 0, 1\}$.
	It follows that ${d_3(G_3) = 1} \land {d_3(G_m) = 0} \Rightarrow G_3 \mathrel{\not\simeq_{d_3}} G_m \xRightarrow{\raisebox{0.5pt}{\scriptsize$h_2 \succeq d_3\ $}} G_3 \mathrel{\not\simeq_{h_2}} G_m$, which is a contradiction.
	Conversely, assuming that $h_2$ is able to detect cycles of length $m > 3$, the $m$-cycle detection function $d_m$ also distinguishes $G_3$ and $G_m$, which again results in the contradiction $G_3 \mathrel{\simeq_{h_2}} G_m \land G_3 \mathrel{\not\simeq_{h_2}} G_m$.
\end{proof}


\section{The 2-\acs*{wl} Graph Convolution Operator}%
\label{sec:wl2conv}

In the previous section, we compared 2-\acsp{gnn} with the 2-\acs{wl} algorithm and found that the former have a significantly lower \ac{dp} than the latter.
Motivated by this limitation, we devote this section to a novel, more discriminative convolution operator, which is inspired by the higher-order \ac{wl} algorithm and meant to overcome the limitations of 1-\acs{wl}.
Our operator is inspired by 2-\acs{wl} but uses the following simplifications to reduce its computational cost.

Similar to $k$-\acsp{gnn}, or more specifically, 2-\acsp{gnn}, our novel operator refines/convolves the feature vectors of \emph{2-multisets} $\ldblbrace v_i, v_j \rdblbrace$ instead of refining/convolving the feature vectors of 2-tuples $(v_i, v_j)$.
This simplification halves the number of feature vectors without affecting the \ac{dp} because we assume that graphs are undirected, i.e.\ $e_{ij}$ and $e_{ji}$ have identical feature vectors $x[e_{ij}] = x[e_{ji}] \in \mathcal{X}_{\mathcal{E}}$ and the same 2-\acs{wl} neighborhood.
To simplify the notation, we assume that $e_{ij} = e_{ji} = \ldblbrace v_i, v_j \rdblbrace$ in the rest of the paper.
	
After applying the 2-multiset simplification, the 2-\acs{wl} algorithm refines the color of all multisets $e_{ij} \in \mathcal{V}_G^2$ by hashing its current color and the colors of all neighbors ${\{ \ldblbrace e_{il}, e_{lj} \rdblbrace \}}_{v_l \in \mathcal{V}_G}$.
This means that the time complexity of a single refinement step is $\mathcal{O}(n^3)$ for $n \coloneqq \left| \mathcal{V}_G \right|$, which quickly becomes infeasible for large graphs.
To address this issue, we reduce both the number of colored edges as well as the number of neighbors of each edge.
This is achieved by only considering the edges that are part of the so-called \textit{$r$-th power of a graph $G$}, where $r \in \mathbb{N}$ is the freely choosable \textit{neighborhood radius}.
\begin{defn}
	The \textit{$r$-th power of a graph $G$} is defined as
	\begin{align*}
		G^r \coloneqq \left(\mathcal{V}_G, \left\{ e_{ij} \in \mathcal{V}_G^2\, |\, d_{\mathrm{SP}, G}(v_i, v_j) \leq r \right\}\right) \, , 
	\end{align*}
	where $d_{\mathrm{SP}, G}(v_i, v_j)$ is the length of the shortest path between $v_i$ and $v_j$ in $G$.
	The distance of a vertex $v_i \in \mathcal{V}_G$ to itself is defined as $d_{\mathrm{SP}, G}(v_i, v_i) \coloneqq 0$.
	Note that $G^1$ does not generally equal $G$ because $G^1$ has self-loop edges $e_{ii} \in \mathcal{E}_{G^1}$ at all vertices.
\end{defn}
For the neighborhood radius $r = 1$, only the self-loop edges ${\{ e_{ii} \}}_{v_i \in \mathcal{V}_G}$ and the edges $\mathcal{E}_{G}$ are considered;
for $r > 1$, edges between indirectly connected vertices are considered as well.
Through the reduction of the considered edges, the neighbors of each $e_{ij} \in \mathcal{E}_{G^r}$ are in turn reduced to the common $r$-neighbors of $v_i$ and $v_j$, i.e.\ $\{ \ldblbrace e_{il}, e_{lj} \rdblbrace\, |\, v_l \in {\Gamma_{G^r}(v_i) \cap \Gamma_{G^r}(v_j)} \}$. 

Let us now consider what the reduced number of used edges and the reduced number of edge neighbors implies for the runtime of a refinement step.
If $G$ is a sparse graph with the maximum vertex degree $d \coloneqq \max_{v \in \mathcal{V}_G} \left|\Gamma_G(v)\right|$, the number of considered edges is bounded by $\mathcal{O}(n d^r)$, where each edge has at most $\mathcal{O}(d^r)$ neighbors.
Consequently, the time complexity of a refinement step becomes $\mathcal{O}(n d^{2r})$, which is a significant improvement over the $\mathcal{O}(n^3)$ bound of a full 2-\acs{wl} refinement step (assuming $d \ll n$).

Based on the 2-multiset and the neighborhood localization simplifications, we now define the 2-\ac{wl} convolution operator and the corresponding \textit{2-\acs{wl}-\acs{gnn}}.
\begin{defn}\label{defn:wl2conv:wl2-conv-init}
	The \textit{initial feature matrix $Z_G^{(0)}$} of the 2-\acs{wl} convolution operator with the neighborhood radius $r \in \mathbb{N}$ contains both the vertex features $x[v_i] \in \mathcal{X}_{\mathcal{V}} = \mathbb{R}^{d_{\mathcal{V}}}$ as well as the edge features $x[e_{ij}] \in \mathcal{X}_{\mathcal{E}} = \mathbb{R}^{d_{\mathcal{E}}}$ of a given graph $G$.
	More specifically, $Z_G^{(0)} \in \mathbb{R}^{{|\mathcal{E}_{G^r}|} \times (d_{\mathcal{V}} + d_{\mathcal{E}})}$ assigns a row vector $Z_G^{(0)}[e_{ij}]$ to all edges $e_{ij} \in \mathcal{E}_{G^r}$.
	Those initial edge feature vectors are defined by the following vector concatenation ($\oplus$):
	\begin{align*}
		Z_G^{(0)}[e_{ij}] \coloneqq \left( \begin{cases}
			x[v_i] & \text{if $i = j$} \\
			\mathbf{0} & \text{else}
		\end{cases} \right) \oplus \left( \begin{cases}
			x[e_{ij}] & \text{if $e_{ij} \in \mathcal{E}_G$} \\
			\mathbf{0} & \text{else}
		\end{cases} \right)
	\end{align*}
\end{defn}
\begin{defn}\label{defn:wl2conv:wl2-conv-step}
	We define the \textit{2-\acs{wl} graph convolution operator} as
	\begin{gather*}
		Z_G^{(t)}[e_{ij}] \coloneqq \sigma\left( Z_G^{(t-1)}[e_{ij}] W_{\mathrm{L}}^{(t)} + \smashoperator[lr]{\sum_{v_l \in {\Gamma_{G^r}(v_i) \cap \Gamma_{G^r}(v_j)}}} \kappa^{(t)}\left( Z_G^{(t-1)}[e_{ij}], \ldblbrace Z_G^{(t-1)}[e_{il}], Z_G^{(t-1)}[e_{lj}] \rdblbrace \right) \right) \, , \\
		\text{with } \kappa^{(t)}(z_{ij}, \ldblbrace z_{il}, z_{lj} \rdblbrace) \coloneqq \left( z_{ij} W_{\mathrm{F}}^{(t)} \right) \odot \sigma_{\Gamma}\left(\left( z_{il} + z_{lj} \right) W_{\Gamma}^{(t)} \right)
		\text{.}
	\end{gather*}
\end{defn}
This operator is parameterized by the three matrices $W_{\mathrm{L}}^{(t)}, W_{\mathrm{F}}^{(t)},  W_{\Gamma}^{(t)} \in \mathbb{R}^{d^{(t-1)} \times d^{(t)}}$ and uses two freely choosable activation functions $\sigma$ and $\sigma_{\Gamma}$.
We use $\odot$ to denote element-wise multiplication.
In the following, we will analyze the \ac{dp} of \acp{gnn} using the 2-\acs{wl} convolution operator.
Our goal is to show that such 2-\acs{wl}-\acsp{gnn} have a strictly higher \ac{dp} than 1-\acs{wl}.
We begin by proving that 2-\acs{wl}-\acsp{gnn} are at least as discriminative as 1-\acs{wl}.
\begin{defn}\label{defn:wl2conv:vert-conv}
	A \ac{gnn} $h_1: \mathcal{G} \to \mathcal{Y}$ uses \textit{weighted vertex neighborhood sums} if its convolutional layers can be described by
	\begin{align*}
		Z_{G}^{(t)}[v_i] = \mathrm{\acs*{mlp}}^{(t)}\left( w_{ii} Z_{G}^{(t-1)}[v_i] + \smashoperator[lr]{\sum_{v_j \in \Gamma_G(v_i)}} w_{ij} Z_{G}^{(t-1)}[v_l] \right)
		\text{.}
	\end{align*}
\end{defn}

\noindent\Cref{defn:wl2conv:vert-conv} includes \acp{gcn}~\citep{Kipf2017}, where the \ac{mlp} only consists of a single layer with weights $w_{ij} = {\left(\left| \Gamma_G(v_i) \right| + 1\right)}^{-\frac{1}{2}}{\left(\left| \Gamma_G(v_j) \right| + 1\right)}^{-\frac{1}{2}}$.
\Acp{gin} also trivially satisfy the definition (see \cref{eq:prelim:gin-layer}).
\begin{thm}\label{thm:wl2conv:wl2-simulation}
	For each \ac{gnn} $h_1$ using weighted vertex neighborhood sums, there is a 2-\acs{wl}-\acs{gnn} $h_2$ that simulates $h_1$, i.e., such that $\forall G \in \mathcal{G}: h_1(G) = h_2(G)$.
\end{thm}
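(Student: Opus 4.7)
The approach is to simulate each convolutional layer of $h_1$ by two consecutive layers of the 2-\acs*{wl} graph convolution, using self-loops $e_{ii}$ as the carriers of vertex information. Since by \cref{defn:wl2conv:wl2-conv-init} we have $Z_G^{(0)}[e_{ii}] = x[v_i] \oplus \mathbf{0}$, the self-loop features already encode the vertex features at initialization; the task is to maintain the invariant that $Z_G^{(2t)}[e_{ii}]$ contains $Z_{h_1}^{(t)}[v_i]$ in a dedicated block of coordinates, so that a final self-loop-only pooling in $h_2$ reproduces the vertex-level pooling of $h_1$.

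First I would partition the 2-\acs*{wl} feature vector $Z_G^{(t)}[e_{ij}]$ into blocks for (i) the simulated vertex state, meaningful only on self-loops; (ii) a ``transport'' block used to move vertex states onto incident edges; (iii) any edge weights $w_{ij}$ needed to realize $h_1$ (for \acs*{gcn} these are degree-dependent and can be precomputed by a constant number of 2-\acs*{wl} layers, for \acs*{gin} they are constant); and (iv) a constant-$1$ channel used as a gating bias. The weight matrices $W_{\mathrm{L}}^{(t)}, W_{\mathrm{F}}^{(t)}, W_{\Gamma}^{(t)}$ and the activations $\sigma, \sigma_{\Gamma}$ can then be chosen block-diagonally so that the different blocks evolve independently.

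The inductive step on $t$ proceeds in two 2-\acs*{wl} layers. In the first, for every edge $e_{il}$ with $i \neq l$, the summand indexed by $v_l$ in \cref{defn:wl2conv:wl2-conv-step} reads the self-loop $Z^{(2t)}[e_{ll}]$, which already carries $Z_{h_1}^{(t)}[v_l]$; by selecting $W_{\Gamma}$ to project this into the transport block, $Z^{(2t+1)}[e_{il}]$ is made to carry $w_{il}\,Z_{h_1}^{(t)}[v_l]$ (and symmetrically for $v_i$) in a known block, while self-loops are left unchanged. In the second layer, at each self-loop $e_{ii}$ the neighborhood $v_l \in \Gamma_{G^1}(v_i) = \{v_i\} \cup \Gamma_G(v_i)$ is summed: the $v_l = v_i$ term together with $W_{\mathrm{L}}$ supplies the $w_{ii}\,Z_{h_1}^{(t)}[v_i]$ piece, while the remaining terms extract $w_{il}\,Z_{h_1}^{(t)}[v_l]$ from the transport blocks populated in the first layer. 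The required $\mathrm{MLP}^{(t+1)}$ is then obtained either from the outer $\sigma$ alone, when the \acl*{mlp} is shallow, or by appending a few neighborhood-free 2-\acs*{wl} layers (with $W_{\Gamma} = W_{\mathrm{F}} = 0$) that act as per-edge affine maps plus activations.

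The main obstacle is that $\kappa^{(t)}(z,\ldblbrace z_1, z_2 \rdblbrace) = (z W_{\mathrm{F}}^{(t)}) \odot \sigma_{\Gamma}\!\left((z_1 + z_2) W_{\Gamma}^{(t)}\right)$ feeds neighbor information through a gated \emph{product} rather than through an unmodulated sum, so a naive choice of weights would turn the desired linear combination $\sum_{v_l} w_{il}\,Z_{h_1}^{(t)}[v_l]$ into an element-wise product. The fix is to exploit the reserved constant-$1$ channel: by choosing $W_{\mathrm{F}}^{(t)}$ so that $(Z^{(t)}[e_{ii}]\, W_{\mathrm{F}}^{(t)})$ is the all-ones vector on the transport coordinates, the gating factor becomes the identity there and the product collapses to the required sum; picking $\sigma_{\Gamma}$ to act as the identity on the same coordinates completes the reduction. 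Verifying that the constant-$1$ channel is preserved across all layers, and that the block-diagonal construction does not mix blocks through $\sigma$, is routine but needs to be made explicit. A straightforward induction on $t$ then yields $Z_G^{(2T)}[e_{ii}] \supseteq Z_{h_1}^{(T)}[v_i]$, and pooling only over self-loops in $h_2$ gives $h_2(G) = h_1(G)$.
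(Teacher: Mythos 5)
Your proposal follows essentially the same route as the paper's proof: two 2-WL layers per simulated vertex-convolution layer (a scatter of self-loop features onto incident edges followed by a gather back at the self-loops), reserved coordinates for an $i=j$ indicator, the weights $w_{ij}$, and the simulated vertex state, exploitation of the multiplicative gate $\kappa^{(t)}$ via a constant/weight channel to realize exactly the needed products, additional neighborhood-free layers (with $W_{\mathrm{F}}^{(t)} = \mathbf{0}$) to emulate the per-vertex $\mathrm{MLP}$, and pooling restricted to self-loops. The one detail you gloss over is that, because edges are unordered 2-multisets, the transport block of $e_{il}$ necessarily carries $w_{il}\bigl(Z^{(t)}[v_i] + Z^{(t)}[v_l]\bigr)$ rather than just $w_{il}\,Z^{(t)}[v_l]$, so the gather at $e_{ii}$ picks up a spurious $\sum_{v_l} w_{il}\,Z^{(t)}[v_i]$; the paper cancels this by accumulating that same quantity on the self-loop during the first of the two layers and subtracting it in the second.
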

\begin{proof}
	We prove by construction.
	Let $G \in \mathcal{G}$ be an arbitrary input graph with $n \coloneqq \left|\mathcal{V}_G\right|$ and $m \coloneqq n + \left|\mathcal{E}_G\right|$.
	By definition, $h_1$ is a \ac{gnn} of the form $\mathit{Pool}_1(\mathit{Conv}_1(G))$, where $\mathit{Conv}_1$ is a stack of $T$ weighted vertex neighborhood sum convolutions ${\left\{ S^{(t)}: \mathbb{R}^{n \times d^{(t-1)}} \to \mathbb{R}^{n \times d^{(t)}} \right\}}_{t=1}^T$ with each corresponding $\mathrm{\acs*{mlp}}^{(t)}$ having $K$ layers. 
	$\mathit{Pool}_1$ combines the vertex feature vectors produced by $\mathit{Conv}_1$.
	Let $h_2$ be a \ac{gnn} of the form $\mathit{Pool}_2(\mathit{Conv}_2(G))$, where $\mathit{Conv}_2$ is a stack of $(2 + K) T$ 2-\acs{wl} convolution layers ${\left\{ S^{(t, k)}: \mathbb{R}^{m \times d^{(t,k-1)}} \to \mathbb{R}^{m \times d^{(t,k)}} \right\}}_{(t, k) \in [T] \times [2+K]}$ (see \cref{fig:wl2conv:wl2-simulation}) with the neighborhood radius $r \coloneqq 1$. 
	The layers ${\left\{ S^{(t, 2 + K)} \right\}}_{t=1}^T$ produce the feature matrices $Z^{(t, 2 + K)} = Z^{(t + 1, 0)}$ which are fed as input into the successor layer $S^{(t + 1, 1)}$. 
	\begin{figure}[ht]
		\centering
		\includegraphics[width=0.8\linewidth]{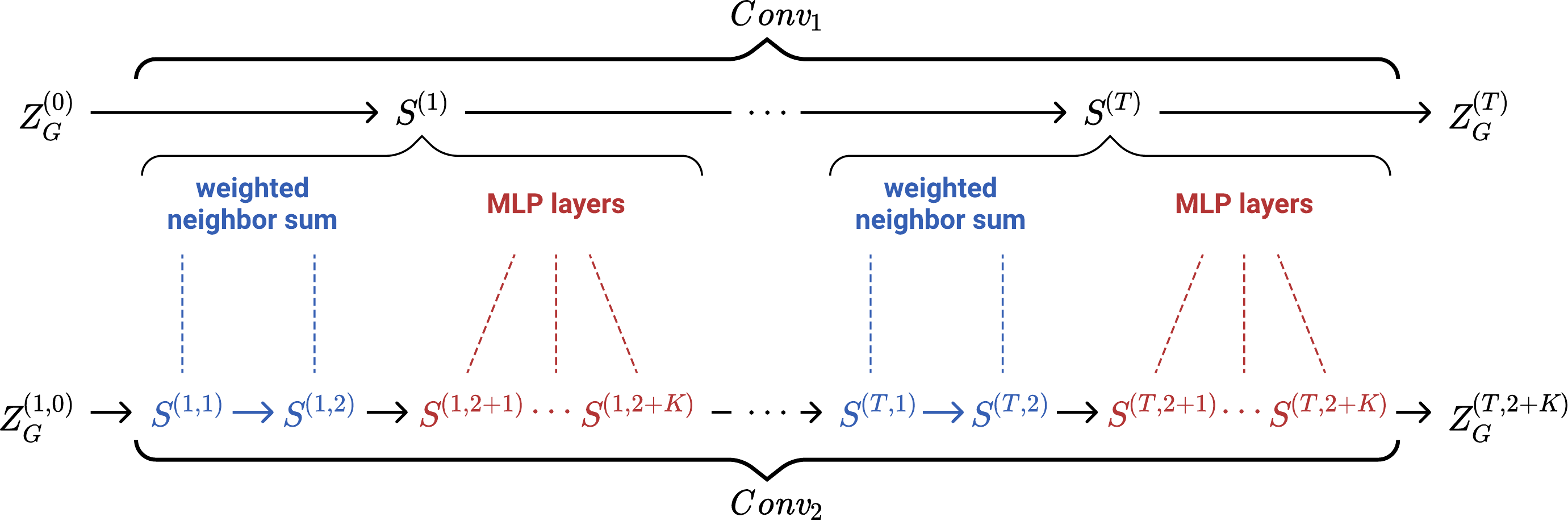}
		\caption[Illustration of a 2-\acs{wl}-\acs{gnn} architecture that simulates vertex neighborhood convolutions.]{
			Illustration of the correspondence between $\mathit{Conv}_1$ and $\mathit{Conv}_2$.
		}\label{fig:wl2conv:wl2-simulation}
	\end{figure}

	\noindent Let $\varphi: \mathbb{R}^{d^{(T, 2 + K)}} \to \mathbb{R}^{d^{(T)}} \cup \{ \mathtt{nil} \}$ be a function that maps the final 2-\acs{wl} feature vectors produced by $\mathit{Conv}_2$ to the output space of $\mathit{Conv}_1$ or the constant \texttt{nil}.
	Let $\mathit{Pool}_2\left(Z_G^{(T,2+K)}\right) \coloneqq \mathit{Pool}_1(\ldblbrace z_{ij}\, |\, z_{ij} = \varphi\left(Z_G^{(T,2+K)}[e_{ij}]\right) \land e_{ij} \in \mathcal{E}_{G^1} \land z_{ij} \neq \mathtt{nil} \rdblbrace)$.
	\Cref{thm:wl2conv:wl2-simulation} then follows if there is a function $\varphi$ s.t.\ $\forall v_i \in \mathcal{V}_G: \mathit{Conv}_1(G)[v_i] = \varphi(\mathit{Conv}_2(G)[e_{ii}])$ and $\forall e_{ij} \in \mathcal{E}_G: \varphi(\mathit{Conv}_2(G)[e_{ij}]) = \mathtt{nil}$.
	To guarantee that there is such a function $\varphi$, we now inductively prove the following three invariants, which have to hold for all $t \in \{ 0, \dots, T \}$:
	\begin{enumerate}[label={(P\arabic*)}]
		\item\label[inv]{inv:wl2conv:wl2-simulation:indicator}
			${Z_G^{(t,2+K)}[e_{ij}]}_1 = \mathbbm{1}[i = j]$, i.e., the first component of each 2-\acs{wl} feature vector allows $\varphi$ to decide whether that vector should be mapped to \texttt{nil}.
		\item\label[inv]{inv:wl2conv:wl2-simulation:feature}
			${Z_G^{(t,2+K)}[e_{ii}]}_{2, \dots, (d^{(t)} + 1)} = Z_G^{(t)}[v_{i}]$, i.e., the second to $(1 + d^{(t)})$-th components of each self-loop feature vector in $h_2$ contain the corresponding convolved vertex feature vector at layer $t$ in $h_1$.
		\item\label[inv]{inv:wl2conv:wl2-simulation:weight}
			${Z_G^{(t,2+K)}[e_{ij}]}_{d^{(t)} + 2} = w_{ij}$, i.e., the weights for the vertex neighborhood sums are encoded in the edge and self-loop feature vectors.
	\end{enumerate}

	For $t = 0$, all invariants apply to the initial feature matrix $Z_G^{(0, 2+K)} = Z_G^{(1, 0)}$ by \cref{defn:wl2conv:wl2-conv-init}:
	\begin{align*}
		\forall v_i \in \mathcal{V}_G: Z_G^{(1, 0)}[e_{ii}] \coloneqq (1) \oplus x[v_i] \oplus (w_{ii})
		\text{ and }
		\forall e_{ij} \in \mathcal{E}_G: Z_G^{(1, 0)}[e_{ij}] \coloneqq (0) \oplus \mathbf{0} \oplus (w_{ij})
		\text{.}
	\end{align*}
	Assuming the invariants hold for $t - 1$, we now show that they also hold for $t$.
	The layers $S^{(t, 1)}$ and $S^{(t, 2)}$ are used to compute the weighted vertex neighborhood sums
	\begin{align*}
		{Z^{(t, 2)}[e_{ii}]}_{2, \dots, (1+d^{(t-1)})} = w_{ii} {Z^{(t, 0)}[e_{ii}]}_{2, \dots, (1+d^{(t-1)})} + \smashoperator[lr]{\sum_{v_j \in \Gamma_G(v_i)}} w_{ij} {Z^{(t, 0)}[e_{jj}]}_{2, \dots, (1+d^{(t-1)})}
		\text{.}
	\end{align*}
	We now explicitly define parameter matrices for $S^{(t, 1)}$ and $S^{(t, 2)}$ s.t.\ this weighted sum is produced.
	Note that the weighted vertex neighborhood sum only requires scalar multiplication and vector addition, i.e., the $d^{(t-1)}$ vertex feature dimensions are mutually independent.
	W.l.o.g.\ this allows us to simplify notation by treating the vertex feature vectors as if they were scalars in the following definitions, i.e., we can assume $d^{(t-1)} = 1$ and $Z^{(t,0)}[e_{ii}] = (1, Z^{(t-1)}[v_i], w_{ii}) \in \mathbb{R}^{3}$.
	Using this simplification, the layer $S^{(t, 1)}$ is defined by
	\begin{gather*}\allowdisplaybreaks
		\begin{aligned}
			Z^{(t,1)}[e_{ij}] &= \textcolor{t_blue}{Z^{(t,0)}[e_{ij}] W_{\mathrm{L}}^{(t,1)}} + \smashoperator[lr]{\sum_{v_l \in {\Gamma_{G^1}(v_i) \cap \Gamma_{G^1}(v_j)}}} \textcolor{t_red}{\left( Z^{(t,0)}[e_{ij}] W_{\mathrm{F}}^{(t,1)}\right)} \odot \textcolor{t_darkgreen}{\left( \left( Z^{(t,0)}[e_{il}] + Z^{(t,0)}[e_{lj}] \right) W_{\Gamma}^{(t,1)} \right)} \\ 
			&= \begin{cases}
				\textcolor{t_blue}{(1, 0, w_{ii}, 0)} + \textcolor{t_darkgreen}{\left(0, 0, 0, 2\textcolor{t_red}{w_{ii}} Z^{(t-1)}[v_i]\right)} + \smashoperator[lr]{\sum_{v_l \in \Gamma_{G^1}(v_i)}} \textcolor{t_darkgreen}{\left(0, \tfrac{2}{\textcolor{t_red}{2}} \textcolor{t_red}{Z^{(t-1)}[v_i]} w_{il}, 0, 0\right)} & \text{if $i = j$} \\ 
				\textcolor{t_blue}{(0, 0, w_{ij}, 0)} + \textcolor{t_darkgreen}{\left(0, 0, 0, \textcolor{t_red}{w_{ij}} (Z^{(t-1)}[v_i] + 0)\right)} + \textcolor{t_darkgreen}{\left(0, 0, 0, \textcolor{t_red}{w_{ij}} (0 + Z^{(t-1)}[v_j])\right)} & \text{else} 
			\end{cases} \\
			&= \begin{cases}
				\left(1, \mkern6mu\smashoperator[lr]{\sum_{v_l \in \Gamma_{G^1}(v_i)}} w_{il} Z^{(t-1)}[v_i], w_{ii}, 2 w_{ii} Z^{(t-1)}[v_i] \right) & \text{if $i = j$} \\
				\left(0, 0, w_{ij}, w_{ij} (Z^{(t-1)}[v_i] + Z^{(t-1)}[v_j])\right) & \text{else}
			\end{cases} \enspace ,
		\end{aligned}\\[10pt]
		\text{with }
		W_{\mathrm{L}}^{(t,1)} \coloneqq \begin{pmatrix}
			\textcolor{t_blue}{1} & 0 & 0 & 0 \\ 
			0 & 0 & 0 & 0 \\
			0 & 0 & \textcolor{t_blue}{1} & 0 
		\end{pmatrix},
		W_{\mathrm{F}}^{(t,1)} \coloneqq \begin{pmatrix}
			0 & 0 & 0 & 0 \\
			0 & \textcolor{t_red}{\frac{1}{2}} & 0 & 0 \\ 
			0 & 0 & 0 & \textcolor{t_red}{1} 
		\end{pmatrix},
		W_{\Gamma}^{(t,1)} \coloneqq \begin{pmatrix}
			0 & 0 & 0 & 0 \\
			0 & 0 & 0 & \textcolor{t_darkgreen}{1} \\ 
			0 & \textcolor{t_darkgreen}{1} & 0 & 0 
		\end{pmatrix}
		\text{.}
	\end{gather*}
	The vertex neighborhood summation is completed via $S^{(t, 2)}$, which is defined by
	\begin{gather*}\allowdisplaybreaks
		\begin{aligned}
			Z^{(t,2)}[e_{ij}] &= \begin{cases}
				\textcolor{t_blue}{\left(1, -\smashoperator[lr]{\sum_{v_l \in \Gamma_{G^1}(v_i)}} w_{il} Z^{(t-1)}[v_i], w_{ii} \right)} + \smashoperator[lr]{\sum_{v_l \in \Gamma_{G^1}(v_i)}} \textcolor{t_darkgreen}{\left(0, w_{il}\left(Z^{(t-1)}[v_i] + Z^{(t-1)}[v_l]\right), 0\right)} & \text{if $i = j$} \\ 
				\textcolor{t_blue}{(0, 0, w_{ij})} & \text{else} 
			\end{cases} \\
			&= \begin{cases}
				\left(1, w_{ii} Z^{(t-1)}[v_i] + \smashoperator[lr]{\sum_{v_l \in \Gamma_{G}(v_i)}} w_{il} Z^{(t-1)}[v_l], w_{ii}\right) & \text{if $i = j$} \\
				(0, 0, w_{ij}) & \text{else}
			\end{cases} \enspace ,
		\end{aligned}\\
		\text{with }
		W_{\mathrm{L}}^{(t,2)} \coloneqq \begin{pmatrix}
			\textcolor{t_blue}{1} & 0 & 0 \\ 
			0 & \textcolor{t_blue}{-1} & 0 \\ 
			0 & 0 & \textcolor{t_blue}{1} \\ 
			0 & 0 & 0
		\end{pmatrix},\quad
		W_{\mathrm{F}}^{(t,2)} \coloneqq \begin{pmatrix}
			0 & \textcolor{t_red}{\frac{1}{2}} & 0 \\ 
			0 & 0 & 0 \\
			0 & 0 & 0 \\
			0 & 0 & 0
		\end{pmatrix},\quad
		W_{\Gamma}^{(t,2)} \coloneqq \begin{pmatrix}
			0 & 0 & 0 \\
			0 & 0 & 0 \\
			0 & 0 & 0 \\
			0 & \textcolor{t_darkgreen}{1} & 0 
		\end{pmatrix}
		\text{.}
	\end{gather*}
	\begin{figure}[ht]
		\centering
		\includegraphics[width=\linewidth]{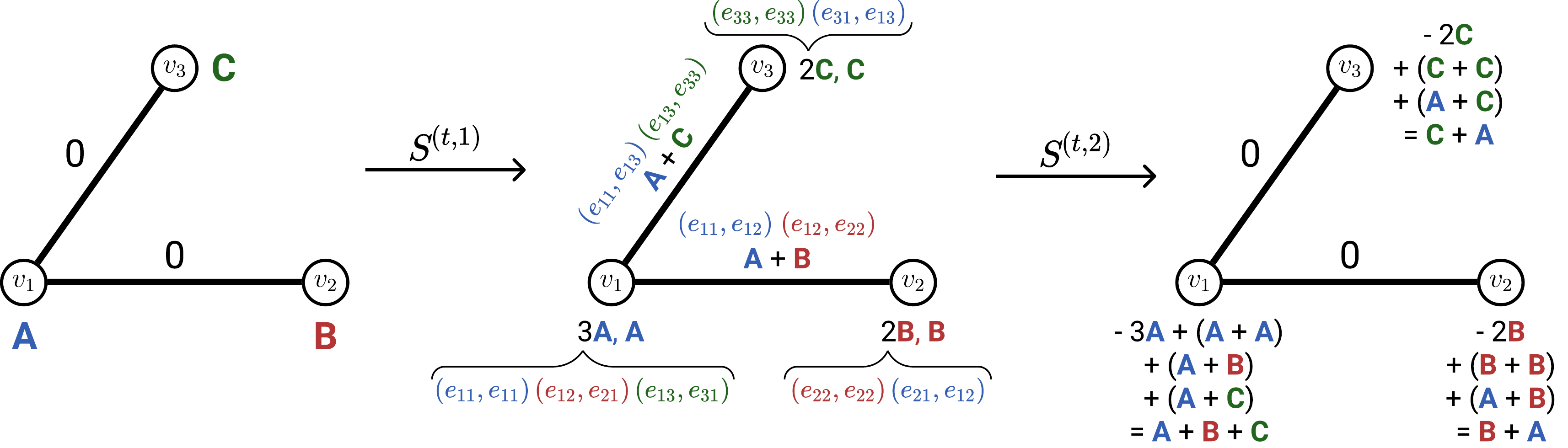}
		\caption[Intuition for how $S^{(t, 1)}$ and $S^{(t, 2)}$ compute vertex neighborhood sums.]{
			Intuition for how $S^{(t, 1)}$ and $S^{(t, 2)}$ compute vertex neighborhood sums in two steps.
			For simplicity, weights are ignored, i.e.\ all $w_{ij} = 1$.
			For each self-loop/edge $e_{ij} \in \mathcal{E}_{G^1}$, the set of localized 2-\acs{wl} neighbors $\Gamma_{G^1}(v_i) \cap \Gamma_{G^1}(v_j)$ is shown in the middle, after the first step.
			The colors in this illustration are unrelated to the colored parts in the previous equations.
		}\label{fig:wl2conv:wl2-vert-agg-intuition}
	\end{figure}
	
	\noindent Using the two layers $S^{(t, 1)}$ and $S^{(t, 2)}$ that we just defined, the weighted vertex neighborhood sum for all $v_i \in \mathcal{V}_G$ is contained in $Z^{(t,2)}[e_{ii}]$.
	Additionally, for all $e_{ij} \in \mathcal{E}_{G^1}$, the indicators ${Z^{(t,2)}[e_{ij}]}_1 = \mathbbm{1}[i = j]$ and the weights ${Z^{(t,2)}[e_{ij}]}_{d^{(t)} + 2} = w_{ij}$ are preserved.
	This means that \cref{inv:wl2conv:wl2-simulation:indicator,inv:wl2conv:wl2-simulation:weight} are satisfied after $S^{(t, 2)}$.

	To complete the induction step, it now remains to show that all three invariants hold after applying the layers $S^{(t, 2+1)}, \dots, S^{(t, 2+K)}$.
	Note that a 2-\acs{wl} convolution layer is reduced to a fully connected layer if $W_{\mathrm{F}}^{(t)} = \mathbf{0}$.
	Via the universal approximation theorem~\citep{Hornik1991}, we can therefore use $S^{(t, 2+1)}, \dots, S^{(t, 2+K)}$ to simulate the $K$ layers of $\mathrm{\acs*{mlp}}^{(t)}$ without changing the first and last dimension of each feature vector to preserve \cref{inv:wl2conv:wl2-simulation:indicator,inv:wl2conv:wl2-simulation:weight}.
	The resulting feature matrix $Z^{(t,2+K)}$ then satisfies all three invariants, which completes the induction.

	Using \cref{inv:wl2conv:wl2-simulation:indicator,inv:wl2conv:wl2-simulation:feature} for $t = T$, we can therefore set
	\begin{align*}
		\varphi\left(Z_G^{(T,2+K)}[e_{ij}]\right) \coloneqq \begin{cases}
			{Z_G^{(T,2+K)}[e_{ij}]}_{2, \dots, (d^{T} + 1)} & \text{if ${Z_G^{(T,2+K)}[e_{ij}]}_1 = 1$} \\
			\mathtt{nil} & \text{else}
		\end{cases}
		\text{.}
	\end{align*}
	By our previous definition of $\mathit{Pool}_2$, this in turn implies that $\mathit{Pool}_2(Z_G^{(T,2+K)}) = \mathit{Pool}_1(Z_G^{(T)}) \allowbreak\iff h_2(G) = h_1(G)$, which concludes the proof.
\end{proof}
\begin{cor}\label{cor:wl2conv:wl2-gnn-wl1-power}
	2-\acs{wl}-\acsp{gnn} have at least the same \ac{dp} as 1-\acs{wl}.
\end{cor}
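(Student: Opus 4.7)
The plan is to pipe the simulation \cref{thm:wl2conv:wl2-simulation} through a known 1-\acs{wl}-equivalent \ac{gnn}, namely the \ac{gin}. The logical chain is short: exhibit a \ac{gnn} that (i) falls under \cref{defn:wl2conv:vert-conv} and (ii) is known to have the same \ac{dp} as 1-\acs{wl}; then \cref{thm:wl2conv:wl2-simulation} produces a 2-\acs{wl}-\acs{gnn} that mimics it pointwise on $\mathcal{G}$, and inherits its \ac{dp}.

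First, I would instantiate the \ac{gin} convolution from \cref{eq:prelim:gin-layer} as a weighted vertex neighborhood sum in the sense of \cref{defn:wl2conv:vert-conv}: choose $w_{ii} \coloneqq 1 + \varepsilon$ for some irrational $\varepsilon > 0$ and $w_{ij} \coloneqq 1$ for every $v_j \in \Gamma_G(v_i)$, and take $\mathrm{\acs*{mlp}}^{(t)}$ to be precisely the \ac{gin} readout MLP. With these choices the inner expression of \cref{defn:wl2conv:vert-conv} matches \cref{eq:prelim:gin-layer} verbatim, so \ac{gin} is a legitimate instance of a weighted-vertex-neighborhood-sum \ac{gnn}.

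Second, I would invoke \cref{prop:prelim:gcnn-wl1-limit}, which guarantees that this \ac{gin} instance has the same \ac{dp} as 1-\acs{wl} (its $h^{(t)}$ is injective thanks to the irrational $\varepsilon$ and the universal-approximation property of the \ac{mlp}). Call this network $h_1$. Then I would apply \cref{thm:wl2conv:wl2-simulation} to $h_1$, obtaining a 2-\acs{wl}-\acs{gnn} $h_2$ with $h_2(G) = h_1(G)$ for every $G \in \mathcal{G}$. Consequently, for any $G, H \in \mathcal{G}$, if $G \mathrel{\not\simeq_{\text{1-\acs*{wl}}}} H$ then $h_1(G) \neq h_1(H)$, hence $h_2(G) \neq h_2(H)$; this is exactly the condition (\cref{defn:prelim:discr-power}) for $h_2 \succeq {\text{1-\acs*{wl}}}$, which proves the corollary.

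The only step that requires any care is confirming that \ac{gin} really does conform to \cref{defn:wl2conv:vert-conv}; the rest is pure composition of previously established results. There is no hidden obstacle, since \cref{thm:wl2conv:wl2-simulation} already does the heavy lifting of showing that 2-\acs{wl} convolutions can reproduce arbitrary weighted vertex-neighborhood aggregation followed by an \ac{mlp}; the corollary is essentially a named specialization of that theorem to the 1-\acs{wl}-maximal case.
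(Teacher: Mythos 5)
Your proposal is correct and follows essentially the same route as the paper: it instantiates \ac{gin} as a weighted vertex neighborhood sum \ac{gnn} (which the paper notes trivially satisfies \cref{defn:wl2conv:vert-conv}), invokes \cref{prop:prelim:gcnn-wl1-limit} for the equivalence of \ac{gin} with 1-\acs{wl}, and transfers the \ac{dp} through the pointwise simulation of \cref{thm:wl2conv:wl2-simulation}. The explicit choice of weights $w_{ii} = 1 + \varepsilon$, $w_{ij} = 1$ is a welcome bit of added detail but does not change the argument.
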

\begin{proof}
	The corollary directly follows from the fact that 2-\acs{wl}-\acsp{gnn} can simulate \acp{gin} by \cref{thm:wl2conv:wl2-simulation} and the fact that \acp{gin} have the same \ac{dp} as 1-\acs{wl} by \cref{prop:prelim:gcnn-wl1-limit}, because they use injective vertex neighborhood hashing functions.
\end{proof}

To complete our analysis of the \ac{dp} of the 2-\acs{wl}-\acs{gnn}, we now show that it is not just as discriminative as 1-\acs{wl} but is in fact more discriminative than 1-\acs{wl}.
\begin{prop}\label{cor:wl2conv:wl2-gnn-regular}
	There are $d$-regular graphs $G$ and $H$ of size $n$, which can be distinguished by 2-\acs{wl}-\acsp{gnn}.
\end{prop}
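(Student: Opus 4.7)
The plan is to exhibit a concrete pair of regular graphs of the same size and construct a 2-\acs{wl}-\acs{gnn} whose output separates them by reading off a triangle count.

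For the pair I would take $G = 2K_3$ (two disjoint triangles) and $H = C_6$ (the six-cycle). Both are $2$-regular graphs on $n = 6$ vertices with the same number of edges, so in particular they are indistinguishable by 1-\acs{wl}. They differ in triangle content: every edge of $G$ lies on a triangle while no edge of $H$ does, and this is the feature the 2-\acs{wl}-\acs{gnn} will extract.

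Using a single 2-\acs{wl} convolution layer of radius $r = 1$, for an edge $e_{ij} \in \mathcal{E}_G$ with $i \neq j$ the inner sum in \cref{defn:wl2conv:wl2-conv-step} ranges over $v_l \in \Gamma_{G^1}(v_i) \cap \Gamma_{G^1}(v_j)$. The indices $v_l = v_i$ and $v_l = v_j$ are always in this intersection, and in those two summands exactly one of $e_{il}, e_{lj}$ is a self-loop. Any \emph{additional} $v_l$ in the intersection is a third vertex completing a triangle with $e_{ij}$, and then both $e_{il}$ and $e_{lj}$ are true (non-loop) edges. By \cref{defn:wl2conv:wl2-conv-init} self-loop and true-edge features live in disjoint coordinate blocks of $Z_G^{(0)}$, so a linear map applied to $z_{il} + z_{lj}$, followed by a ReLU-type $\sigma_\Gamma$, can be arranged to output a positive value in the triangle case and $0$ in the trivial ones. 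Picking $W_{\mathrm{F}}$ so that $z_{ij} W_{\mathrm{F}}$ is nonzero on the coordinate on which the gate fires, the elementwise product $\kappa^{(t)}$ is then nonzero precisely at triangle-completing $v_l$. Masking to the true-edge block via the self-loop indicator of $Z_G^{(0)}$ and sum-pooling over $\mathcal{E}_{G^1}$ yields a positive total for $G$ and $0$ for $H$, so the overall 2-\acs{wl}-\acs{gnn} distinguishes them.

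The main obstacle is the multiplicative gating in $\kappa^{(t)}(z_{ij}, \ldblbrace z_{il}, z_{lj} \rdblbrace) = (z_{ij} W_{\mathrm{F}}) \odot \sigma_\Gamma((z_{il}+z_{lj}) W_\Gamma)$: the two factors must agree on a common nonzero coordinate, or the triangle signal is erased by the elementwise product. This is straightforwardly handled by reserving a coordinate of $Z_G^{(0)}$ that is nonzero on every true edge (e.g.\ the edge-indicator block) and routing it through $W_{\mathrm{F}}$. The remaining weight-matrix bookkeeping is routine and closely mirrors the explicit construction in the proof of \cref{thm:wl2conv:wl2-simulation}.
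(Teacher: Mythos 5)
Your proposal is correct and follows essentially the same route as the paper: the same pair $2K_3$ vs.\ $C_6$, a single radius-$1$ layer, and the same key observation that $\Gamma_{G^1}(v_i)\cap\Gamma_{G^1}(v_j)$ acquires a third element exactly when $e_{ij}$ lies on a triangle. The only (immaterial) difference is that the paper takes all-ones weights with identity activations and $\min$-pooling to separate the resulting neighborhood sums $4$ vs.\ $6$, whereas you build an explicit ReLU-gated triangle counter with sum-pooling.
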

\begin{proof}
	The proposition follows if we choose the six-cycle graph for $G$ and the two three-cycles graph for $H$ (see \cref{fig:prelim:wl1-problem}).
	Let $h_2 = \mathit{Pool} \circ S$ be a 2-\acs{wl}-\acs{gnn} with the neighborhood radius $r = 1$, which consists of a single 2-\acs{wl} convolution layer $S: \mathbb{R}^{* \times 2} \to \mathbb{R}^{* \times 1}$ and the pooling layer $\mathit{Pool} = \min$.
	In accordance with \cref{defn:wl2conv:wl2-conv-init}, we set the initial feature vectors of the vertices $v_i$ of $G$ and $H$ to $Z^{(0)}[e_{ii}] \coloneqq (1, 0)$ and the initial feature vectors of their edges $e_{ij}$ to $Z^{(0)}[e_{ij}] \coloneqq (0, 1)$.
	Let the weight matrices of $S$ be $W_{\mathrm{L}} \coloneqq \mathbf{0}$ and $W_{\mathrm{F}} = W_{\Gamma} \coloneqq \begin{pmatrix} 1\\1 \end{pmatrix}$.
	For simplicity, we choose the identity activation functions $\sigma = \sigma_{\Gamma} = \mathrm{id}$.
	By \cref{defn:wl2conv:wl2-conv-step}, all self-loops $e_{ii}$ of $G^1$ and $H^1$ have the three neighbors $\{ \textcolor{t_blue}{\ldblbrace e_{ii}, e_{ii} \rdblbrace},\allowbreak \textcolor{t_red}{\ldblbrace e_{ij}, e_{ji} \rdblbrace},\allowbreak \textcolor{t_darkgreen}{\ldblbrace e_{il}, e_{li} \rdblbrace} \}$, i.e., the length-two walk along $e_{ii}$ itself and the length-two walks to and from the two neighboring vertices $\Gamma(v_i) = \{ v_j, v_l \}$. 
	Therefore, the convolved feature vector of all self-loops are $Z^{(1)}[e_{ii}] = (1) \odot (\textcolor{t_blue}{(1 + 1)} + \textcolor{t_red}{(1 + 1)} + \textcolor{t_darkgreen}{(1 + 1)}) = 6$. 
	However, for the non-self-loops of $G^1$ and $H^1$, i.e., the edges of $G$ and $H$, we get differing convolved feature vectors.
	The 2-\acs{wl} neighbors of $e_{ij} \in \mathcal{E}_G$ are $\{ \ldblbrace e_{ii},  e_{ij} \rdblbrace,\allowbreak \ldblbrace e_{ij}, e_{jj} \rdblbrace \}$.
	The 2-\acs{wl} neighbors of $e_{ij}' \in \mathcal{E}_H$ are $\{ \ldblbrace e_{ii}',  e_{ij}' \rdblbrace,\allowbreak \ldblbrace e_{ij}',  e_{jj}' \rdblbrace,\allowbreak \ldblbrace e_{il}',  e_{lj}' \rdblbrace \}$, where $v_l' \in \mathcal{V}_H$ is the common neighbor of $v_i'$ and $v_j'$.
	The different neighborhood sizes of the edges of $G$ and $H$ imply that $\forall e_{ij} \in \mathcal{E}_G: Z^{(1)}[e_{ij}] = 4$, while $\forall e_{ij}' \in \mathcal{E}_H: Z^{(1)}[e_{ij}] = 6$.
	Thus $h_2(G) = \min\{ 4, 6 \} \neq \min\{ 6, 6 \} = h_2(H)$, which concludes the proof. 
\end{proof}
\begin{cor}\label{cor:wl2conv:wl2-gnn-more-wl1-power}
	The \ac{dp} of 2-\acs{wl}-\acsp{gnn} is strictly higher than that of the 1-\acs{wl} algorithm.
\end{cor}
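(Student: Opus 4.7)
The plan is to combine \cref{cor:wl2conv:wl2-gnn-wl1-power} with \cref{cor:wl2conv:wl2-gnn-regular} via the definition of $\succ$. By \cref{defn:prelim:discr-power}, to establish $\text{2-\acs*{wl}-\acs*{gnn}} \succ \text{1-\acs*{wl}}$ it suffices to show (i) $\text{2-\acs*{wl}-\acs*{gnn}} \succeq \text{1-\acs*{wl}}$, and (ii) the existence of a pair of graphs $G, H \in \mathcal{G}$ that are distinguished by some 2-\acs{wl}-\acs{gnn} but not by 1-\acs{wl} (so that $\text{1-\acs*{wl}} \not\succeq \text{2-\acs*{wl}-\acs*{gnn}}$, hence $\not\equiv$).

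For (i), I would simply cite \cref{cor:wl2conv:wl2-gnn-wl1-power}. For (ii), I would take exactly the witness pair used in \cref{cor:wl2conv:wl2-gnn-regular}: let $G$ be the single six-cycle and $H$ the disjoint union of two three-cycles (the pair shown in \cref{fig:prelim:wl1-problem}). \Cref{cor:wl2conv:wl2-gnn-regular} already produces a 2-\acs{wl}-\acs{gnn} that distinguishes $G$ and $H$, giving the ``$\not\simeq$'' direction.

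It remains to argue that 1-\acs{wl} does \emph{not} distinguish $G$ and $H$. This is a standard observation about $d$-regular graphs of equal size, referenced in the introduction via \citep[Cor.~1.8.5]{Immerman1990}: since both graphs are $2$-regular with six vertices, a single color refinement step leaves the coloring constant (every vertex sees the multiset consisting of two copies of the initial color), so 1-\acs{wl} terminates immediately with identical color distributions on $G$ and $H$. I would spell this out in one sentence rather than invoke the general theorem, since the instance is trivial.

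There is no real obstacle here: the substantive work is already contained in the preceding proposition and corollary, and the corollary is essentially a one-line packaging. The only care required is to make the logical chain explicit, i.e., that $\phi \succeq \psi$ plus a single witness pair with $G \simeq_\psi H \land G \not\simeq_\phi H$ is enough to conclude $\phi \succ \psi$ by \cref{defn:prelim:discr-power}.
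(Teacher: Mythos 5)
Your proposal is correct and follows essentially the same route as the paper: it combines \cref{cor:wl2conv:wl2-gnn-wl1-power} with the witness pair from \cref{cor:wl2conv:wl2-gnn-regular} and the fact that 1-WL cannot distinguish regular graphs of equal size and degree. The only cosmetic difference is that you verify 1-WL's failure on the six-cycle versus two-three-cycles instance directly, whereas the paper cites the general result of Immerman (Cor.~1.8.5); both are fine.
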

\begin{proof}
	The corollary directly follows from \cref{cor:wl2conv:wl2-gnn-wl1-power} and \cref{cor:wl2conv:wl2-gnn-regular}, since 1-\acs{wl} cannot distinguish regular graphs~\citep[Cor.~1.8.5]{Immerman1990}.
\end{proof}

This concludes our analysis of the \acl{dp} of 2-\acs{wl}-\acsp{gnn}.
The key insight in this section is that 2-\acs{wl}-\acsp{gnn} are more discriminative than all vertex neighborhood aggregation \acp{gnn}, because the \ac{dp} of the latter is at most that of 1-\acs{wl}.
Additionally, we can conclude that 2-\acs{wl}-\acsp{gnn} are able to distinguish graphs that are indistinguishable by 2-\acsp{gnn} due to \cref{prop:wl2conv:2gnn-regular-limit,cor:wl2conv:wl2-gnn-regular}.

Note that no statement regarding the \ac{dp} of 2-\acs{wl}-\acsp{gnn} compared to 2-\acs{wl} was made.
It is easy to see that 2-\acs{wl}-\acsp{gnn} \textit{generally} cannot have the same power as 2-\acs{wl} due to the neighborhood localization simplification:
For a small neighborhood radius of $r = 1$, nonexistent edges $e_{ij} \notin \mathcal{E}_G$ do not have a feature vector;
those missing feature vectors are however required by the proof of 2-\acs{wl}'s $m$-cycle counting ability for $m \geq 4$ \citep[see the proof by][Lem.~1 and Thm.~2]{Fuerer2017}.
We leave a thorough discussion of the relation between 2-\acs{wl}-\acsp{gnn} and 2-\acs{wl} for future work.

\section{Evaluation}%
\label{sec:eval}

In our experimental evaluation, we compare the proposed 2-\acs{wl} convolution layer with other state-of-the-art approaches. We focus on two types of learners:
\acp{svm} using \aclp{gk} and \acp{gnn}.
We evaluate those learners by comparing their test accuracies on multiple binary classification problems.
To obtain those accuracies, we use the graph classification benchmarking framework recently proposed by \citet{Errica2020}:
We use 10-fold stratified training/test splits; for each split the hyperparameters are tuned via a second 90\%/10\% validation holdout split of the training data.
Experiments were run three times to smooth out differences caused by random weight initialization.
To train models that require gradient-based optimization, we use the well-known Adam optimizer and the standard binary cross-entropy loss. 

Using this assessment strategy, we evaluate \acp{svm} with the following \aclp{gk}:
\ac{wlst}, \ac{wlsp}, and the so-called 2-LWL and 2-GWL kernels~\citep{Morris2017};
the last two are essentially 2-\acs{wl} variants of the \ac{wlst} kernel.
We additionally evaluate the following \acp{gnn}:
2-\acs{wl}-\acs{gnn} \textit{(our method)},
\ac{gin},
2-\acs{gnn}, and a structure-unaware baseline that applies an \ac{mlp} to each vertex feature vector $x_G[v]$, sums up the resulting vectors and applies another \ac{mlp} to the sum \citep[see][]{Errica2020}.

\subsection{Synthetic Data}%
\label{sec:eval:synthetic}

We begin with an evaluation 
on a synthetic binary classification dataset, which demonstrates the potential advantages of a higher dimensional \ac{wl} method. 
To determine the classes of the graphs in this dataset, a learner has to solve the following \textit{unicolored triangle detection} problem:
Given a graph $G$ with vertices that are colored as either $l_G[v] = \colorlabel{t_blue}{A}$ or as $l_G[v] = \colorlabel{t_red}{B}$, the learner has to find the unique triangle $(v_i, v_j, v_k)$ in $G$ for which $l_G[v_i] = l_G[v_j] = l_G[v_k]$. 
The class of $G$ is then determined by the color of the vertices $(v_i, v_j, v_k)$.

Based on this problem, we generated a synthetic triangle detection dataset.
It contains randomly generated graphs with varying vertex counts and vertex color proportions.
We use this dataset to evaluate whether a learner is able to ignore varying amounts of noisy random structure and focus on relevant local substructures, in this case unicolored triangles.
For the evaluation of 2-\acs{wl}-\acsp{gnn} the neighborhood radius $r = 2$ is used.
\begin{table}[ht]
\small
	\caption{Mean accuracies and standard deviations on the triangle detection dataset.}\label{tbl:eval:synthetic}
	\centering
	\csvreader[
		column count=40,
		tabular={clrr},
		separator=semicolon,
		table head={%
			& \multicolumn{1}{l}{Model (Iterations/Pooling)} & \multicolumn{1}{c}{Train} & \multicolumn{1}{c}{Test} \\\toprule%
		},
		table foot=\bottomrule,
		late after line=\ifthenelse{\equal{\id}{9}}{\\\midrule}{\\},
		head to column names,
		filter=\equal{\isDefault}{1}\or\equal{\id}{4}
	]{data/results.csv}{}{%
		\ifthenelse{\equal{\id}{0}}{\multirow{6}{*}[0em]{\rotatebox[origin=c]{90}{\small\textsc{Kernel}}}}{}%
		\ifthenelse{\equal{\id}{9}}{\multirow{6}{*}[0em]{\rotatebox[origin=c]{90}{\small\textsc{\ac{gnn}}}}}{} &%
		\textbf{\model}~($\params$) &%
		\evalres[\and\not\equal{\id}{8}]{\triangleBestTrain}{\triangleTrainMean}{\triangleTrainStd} &%
		\evalres{\triangleBestTest}{\triangleTestMean}{\triangleTestStd}%
	}
	\hspace*{2em}%
	\raisebox{-.5\height}{\includegraphics[width=0.18\linewidth]{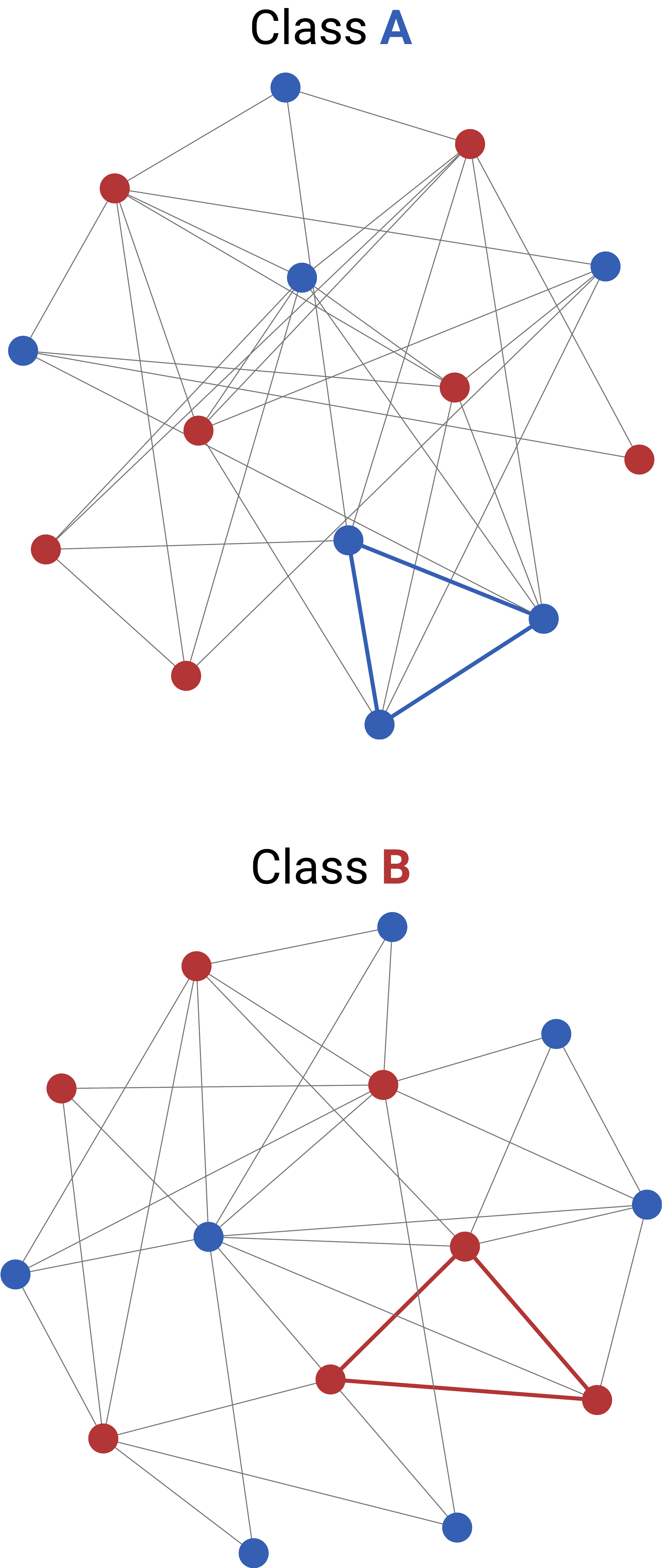}}
\end{table}

\noindent Looking at the results in \cref{tbl:eval:synthetic}, it can be seen that the structure-unaware baseline method is completely unable to detect triangles, as expected.
The structure-aware learners on the other hand all perform better than random guessing and are in fact mostly able to fit the training data perfectly.
This shows that all generated graphs are 1-\acs{wl} distinguishable; the \ac{wl} subtree kernel \ac{svm}, for example, can simply ``memorize'' the training graphs via their unique 1-\acs{wl} color distribution after $T = 5$ refinement steps.

However, the ability to distinguish training graphs is not sufficient to also classify previously unseen graphs correctly.
Since 1-\acs{wl} cannot detect triangles, all 1-\acs{wl} bounded approaches (\acs{wl}\textsubscript{ST}, \acs{wl}\textsubscript{SP}, Baseline, \acs{gin}) are therefore unable to generalize, as suggested by their test accuracies.
Performance better than random guessing can be explained by availability of the following proxy indicator:
The presence of an \colorlabel{t_blue}{A}-colored triangle in a graph $G$ implies that there is a local region with a slightly higher density of \colorlabel{t_blue}{A}-colored vertices than in a \colorlabel{t_red}{B}-colored graph $H$ with the same vertex color proportions.
This local difference in color density is already detectable in the depth-1 \acs{bfs} subtrees used by 1-\acs{wl} after a single refinement step, which explains why \acs{wl}\textsubscript{ST} performs similarly for $T = 1$ and $T = 5$.

As for the 2-\acs{wl} inspired kernels, 2-LWL and 2-GWL, it is
interesting to see that both kernels do not appear to generalize better than the 1-\acs{wl} bounded methods.
We explain this by the fairly small size of the triangle detection dataset (228 graphs);
even though both kernels embed graphs into a space which contains dimensions that indicate the presence of a unicolored triangle, i.e., their \ac{dp} is sufficiently high to solve the problem, there are so many of those triangle-indicating embedding dimensions that the relevant indicator dimensions found in a given training split might not overlap with those in the test split.

Looking at the 2-\acs{wl} inspired \acp{gnn} (2-\acs{gnn}, 2-\acs{wl}-\acs{gnn}), we find that the 2-\acs{wl}-\acs{gnn} significantly outperforms all other methods, which is in line with our results from \cref{sec:wl2conv:limit,sec:wl2conv}.

\subsection{Evaluation on Real-World Data}%
\label{sec:eval:real}


We evaluate the approaches on five common binary graph classification benchmark datasets, namely the NCI1~\citep{Shervashidze2011}, PROTEINS~\citep{Borgwardt2005a}, and D\&D~\citep{Dobson2003} datasets from the domain of bioinformatics, and the REDDIT-B and IMDB-B datasets~\citep{Yanardag2015} from social network analysis.
\Cref{tbl:eval:real} shows our evaluation results.
For the evaluation of 2-\acs{wl}-\acsp{gnn}, different neighborhood radii were used for each dataset.
In the order of the columns in the table, the results were obtained with the radii $r = 8$, $5$, $2$, $1$ and $4$, respectively.
\begin{table}[ht]
\footnotesize
    \def\wmean{\mathrm{w.\ mean}}
	\caption[Mean test accuracies and standard deviations on real-world data.]{
		Mean test accuracies and standard deviations on real-world data.
	}\label{tbl:eval:real}
	\centering
	\csvreader[
		tabular={clrrrrr},
		separator=semicolon,
		before reading=\setlength{\tabcolsep}{4pt},
		table head={%
			&%
			\multicolumn{1}{c}{Model (Iter./Pooling)} &%
			\multicolumn{1}{c}{\textbf{NCI1}} &%
			\multicolumn{1}{c}{\textbf{PROTEINS}} &%
			\multicolumn{1}{c}{\textbf{D\&D}} &%
			\multicolumn{1}{c}{\textbf{REDDIT-B}} &%
			\multicolumn{1}{c}{\textbf{IMDB-B}}%
			\\\toprule%
		},
		table foot=\bottomrule,
		late after line=\ifthenelse{\equal{\id}{9}}{\\\midrule}{\\},
		head to column names,
		filter={\equal{\isDefault}{1}}
	]{data/results.csv}{}{%
		\ifthenelse{\equal{\id}{0}}{\multirow{5}{*}[0em]{\rotatebox[origin=c]{90}{\small\textsc{Kernel}}}}{}%
		\ifthenelse{\equal{\id}{9}}{\multirow{6}{*}[0em]{\rotatebox[origin=c]{90}{\small\textsc{\ac{gnn}}}}}{} &%
		\textbf{\model}~($\params$) &%
		{\small\evalres{\nciBestTest}{\nciTestMean}{\nciTestStd}}&%
		{\small\evalres{\proteinsBestTest}{\proteinsTestMean}{\proteinsTestStd}}&%
		{\small\evalres{\ddBestTest}{\ddTestMean}{\ddTestStd}}&%
		{\small\evalres{\redditBestTest}{\redditTestMean}{\redditTestStd}}&%
		{\small\evalres{\imdbBestTest}{\imdbTestMean}{\imdbTestStd}}%
	}
\end{table}

\noindent Compared with the triangle detection dataset, the advantage of 2-\acs{wl}-\acsp{gnn} over the other approaches is clearly less pronounced.
This indicates that the theoretical advantages of 2-\acs{wl} over 1-\acs{wl} are not as relevant for the five real-world domains as they are for the synthetic problem.
Nonetheless, the test performance of 2-\acs{wl}-\acsp{gnn} is generally comparable to that of the other state-of-the-art learners, in the sense that the performance of the evaluated 2-\acs{wl}-\acs{gnn} models is within the $2\sigma$ confidence interval of the best evaluated model.

If we look at the enzyme detection problem (PROTEINS and D\&D), we observe that all evaluated approaches appear to be unable to leverage structural information for a significant improvement over the baseline learner.
On the social network datasets (REDDIT-B and IMDB-B), on the other hand, the structure aware methods clearly outperform the baseline.
This confirms similar results by \citet{Errica2020}.

\section{Conclusion}%
\label{sec:conclusion}

We proposed the novel 2-\acs{wl}-\acs{gnn} and showed it to be strictly more discriminative than 1-\acs{wl} bounded \acp{gnn}.
This theoretical advantage was clearly confirmed experimentally on synthetic data, while results competitive to state-of-the-art \aclp{gk} and \acp{gnn} could be achieved on real-world data. We envision two main directions for future research:
First, a more thorough theoretical analysis of the relation between 2-\acs{wl}-\acsp{gnn} and 2-\acs{wl} is required to answer questions such as how the neighborhood radius $r$ relates to the \acl{dp} of a 2-\acs{wl}-\acs{gnn}.
Second, evaluations on a broader range of domains and other problem types, such as vertex labeling, link prediction, or graph regression, will help to determine in which contexts the theoretical advantages of 2-\acs{wl}-\acsp{gnn} also lead to practical improvements.

\long\def\acks#1{\section*{Acknowledgments}#1}
\acks{This work was supported by German Research Foundation (DFG) within the Collaborative Research Center ``On-The-Fly Computing'' (SFB 901/3 project no.\ 160364472).}

\vfill\pagebreak
\appendix
\section{Implementation of 2-\acs*{wl}-\acsp*{gnn} on \acsp*{gpgpu}}%
\label{sec:appendix:implementation}

In the main paper we focused on the \acl{dp} of the proposed 2-\acs{wl}-\acs{gnn}.
In this supplementary section we describe how our approach can be implemented on \acfp{gpgpu}.

Efficient high-level software libraries for the implementation of vertex neighborhood convolution approaches such as \ac{gcn} or \ac{gin} already exist.
They describe convolutions via a message-passing abstraction in which vertex feature vectors are passed along their neighboring edges \citep[see][]{Battaglia2018}.
Since a message-passing model along edges is incompatible with the edge-pair neighborhoods of 2-\acs{wl}, a custom convolution implementation is required for 2-\acs{wl}-\acsp{gnn}.

For this purpose we propose a sparse 2-\acs{wl} graph representation which is inspired by the coordinate list adjacency format described by \citet{Fey2019}.
Given a neighborhood radius $r$, we encode a graph $G$ using the following two matrices:
\begin{enumerate}
	\item $Z_G^{(0)} \in \mathbb{R}^{m \times d^{(0)}}$:
		The initial feature matrix is represented directly as a dense floating point matrix with $m \coloneqq \left| \mathcal{E}_{G^r} \right|$ rows, each of which encodes the feature vector of an edge $e_{ij} \in \mathcal{E}_{G^r}$.
		Edge feature duplicates are prevented by only encoding edges with $i \leq j$ for some arbitrary vertex ordering of $G$.
	\item $R_G \in {[m]}^{\gamma \times 3}$:
		The reference matrix $R_G$ encodes the edge neighborhood information.
		It consists of $\gamma \coloneqq \sum_{e_{ij} \in \mathcal{E}_{G^r}} \left| \Gamma_{G^r}(v_i) \cap \Gamma_{G^r}(v_j) \right|$ rows, one for each 2-\acs{wl} neighbor $\ldblbrace e_{il}, e_{lj} \rdblbrace$ of each edge $e_{ij}$.
		Each neighbor row is a vector $(r_{\mathrm{L}}, r_{\Gamma,1}, r_{\Gamma,2}) \in {[m]}^3$ of three index pointers to rows in $Z_G^{(0)}$.
		$r_{\mathrm{L}}$ points to the row index of the feature vector of $e_{ij}$, while $r_{\Gamma,1}$ and $r_{\Gamma,2}$ point to the indices of $e_{il}$ and $e_{lj}$ respectively.
		We will refer to the three column vectors of $R_G$ as $R_{G, \mathrm{L}}$, $R_{G, \Gamma, 1}$ and $R_{G, \Gamma, 2}$.
\end{enumerate}
This encoding can also be used to represent graph batches by simply concatenating the rows of each graph's feature and reference matrices while shifting the index pointers to account for the concatenation offsets.
\Cref{fig:appendix:wl2-encoding} illustrates how such a batch encoding might look like.
\begin{figure}[ht]
	\centering
	\def\vertvec{$(1, 0)$}
	\def\edgevec{$(0, 1)$}
	\raisebox{-.5\height}{\includegraphics[height=11em]{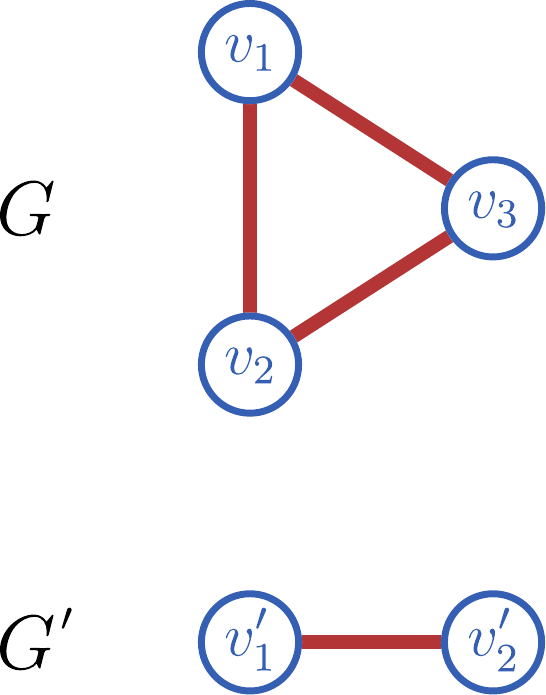}}\hspace{3em}%
	\begin{tabu}{@{\makebox[2.5em][c]{$\rownumber$\space}} c c}
		\multicolumn{1}{@{\makebox[2.5em][c]{idx.}} c}{edge} & $Z^{(0)}$ \\
		\toprule
		\rowfont{\color{t_blue}}$e_{11}$ & \vertvec\\ 
		\rowfont{\color{t_blue}}$e_{22}$ & \vertvec\\ 
		\rowfont{\color{t_blue}}$e_{33}$ & \vertvec\\ 
		\rowfont{\color{t_red}}$e_{12}$ & \edgevec\\ 
		\rowfont{\color{t_red}}$e_{13}$ & \edgevec\\ 
		\rowfont{\color{t_red}}$e_{23}$ & \edgevec\\ 
		\midrule
		\rowfont{\color{t_blue}}$e_{11}'$ & \vertvec\\ 
		\rowfont{\color{t_blue}}$e_{22}'$ & \vertvec\\ 
		\rowfont{\color{t_red}}$e_{12}'$ & \edgevec
	\end{tabu}\hspace{3em}{\def\arraystretch{0.81}\tiny\begin{tabu}{c c c}
		\rowfont{\footnotesize}$R_{\mathrm{L}}$ & $R_{\Gamma, 1}$ & $R_{\Gamma, 2}$ \\
		\toprule
		\rowfont{\color{t_blue}}$(1,$ & $1,$ & $1)$ \\ 
		\rowfont{\color{t_blue}}$(1,$ & $4,$ & $4)$ \\ 
		\rowfont{\color{t_blue}}$(1,$ & $5,$ & $5)$ \\ 
		\rowfont{\color{t_blue}}$(2,$ & $2,$ & $2)$ \\ 
		\rowfont{\color{t_blue}}$(2,$ & $4,$ & $4)$ \\ 
		\rowfont{\color{t_blue}}$(2,$ & $6,$ & $6)$ \\ 
		\rowfont{\color{t_blue}}$(3,$ & $3,$ & $3)$ \\ 
		\rowfont{\color{t_blue}}$(3,$ & $5,$ & $5)$ \\ 
		\rowfont{\color{t_blue}}$(3,$ & $6,$ & $6)$ \\ 
		\rowfont{\color{t_red}}$(4,$ & $1,$ & $4)$ \\ 
		\rowfont{\color{t_red}}$(4,$ & $4,$ & $2)$ \\ 
		\rowfont{\color{t_red}}$(4,$ & $5,$ & $6)$ \\ 
		\rowfont{\color{t_red}}$(5,$ & $1,$ & $5)$ \\ 
		\rowfont{\color{t_red}}$(5,$ & $5,$ & $3)$ \\ 
		\rowfont{\color{t_red}}$(5,$ & $4,$ & $6)$ \\ 
		\rowfont{\color{t_red}}$(6,$ & $2,$ & $6)$ \\ 
		\rowfont{\color{t_red}}$(6,$ & $6,$ & $3)$ \\ 
		\rowfont{\color{t_red}}$(6,$ & $4,$ & $5)$ \\ 
		\midrule
		\rowfont{\color{t_blue}}$(7,$ & $7,$ & $7)$ \\ 
		\rowfont{\color{t_blue}}$(7,$ & $9,$ & $9)$ \\ 
		\rowfont{\color{t_blue}}$(8,$ & $8,$ & $8)$ \\ 
		\rowfont{\color{t_blue}}$(8,$ & $9,$ & $9)$ \\ 
		\rowfont{\color{t_red}}$(9,$ & $7,$ & $9)$ \\ 
		\rowfont{\color{t_red}}$(9,$ & $9,$ & $8)$ 
	\end{tabu}%
	}\caption{
		Exemplary 2-\acs{wl} encoding of a batch of two small graphs.
	}\label{fig:appendix:wl2-encoding}
\end{figure}
After encoding a graph dataset as 2-\acs{wl} matrices, convolutions can be computed efficiently on \acp{gpgpu} via the common \textit{gather-scatter} pattern from parallel programming~\citep{He2007}.
The so-called $\mathit{gather}$ operator takes two inputs: A list $Z$ of $m$ row vectors and a list $R$ of $\gamma$ pointers into $Z$.
It returns a list $X$ of $\gamma$ row vectors $X[i] = Z[R[i]]$ for $i \in [\gamma]$.
The $\mathit{scatter}_{\Sigma}$ operator can be understood as the opposite of $\mathit{gather}$.
$\mathit{scatter}_{\Sigma}$ takes a list $X$ of $\gamma$ row vectors and a list $R$ of $\gamma$ pointers from the range $[m]$.
It returns a list $Z$ of $m$ row vectors $Z[i] = \sum_{j \in [\gamma] \land R[j] = i} X[j]$ for $i \in [m]$.

Using the $\mathit{gather}$ and $\mathit{scatter}_{\Sigma}$ operators, the 2-\acs{wl} convolution operator from \cref{defn:wl2conv:wl2-conv-step} can be computed via the following parallel algorithm:
\begin{algorithm}[H]
	\caption{Parallel Implementation of a 2-\acs{wl} Convolution Layer $S^{(t)}$}\label{algo:appendix:wl2-conv}
	\begin{algorithmic}[1]
		\Function{$S^{(t)}$}{$Z^{(t-1)} \in \mathbb{R}^{m \times d^{(t-1)}}, R \in {[m]}^{\gamma \times 3}$}
			\State{$Z_{\mathrm{L}} \coloneqq Z^{(t-1)} W_{\mathrm{L}}^{(t)}$}
			\Comment{Matrix multiply: $\mathbb{R}^{m \times d^{(t-1)}} \to \mathbb{R}^{m \times d^{(t)}}$}
			\State{$Z_{\mathrm{F}} \coloneqq Z^{(t-1)} W_{\mathrm{F}}^{(t)}$}
			\State{$Z_{\Gamma} \coloneqq Z^{(t-1)} W_{\Gamma}^{(t)}$}
			\State{$X_{\Gamma, 1} \coloneqq \mathit{gather}(Z_{\Gamma}, R_{\Gamma, 1})$}
			\Comment{Gather: $\mathbb{R}^{m \times d^{(t)}} \times {[m]}^{\gamma} \to \mathbb{R}^{\gamma \times d^{(t)}}$}
			\State{$X_{\Gamma, 2} \coloneqq \mathit{gather}(Z_{\Gamma}, R_{\Gamma, 2})$}
			\State{$X_{\Gamma} \coloneqq \sigma_{\Gamma}\left(X_{\Gamma, 1} + X_{\Gamma, 2}\right)$}
			\Comment{Element-wise operations}
			\State{$Z_{\Sigma\Gamma} \coloneqq \mathit{scatter}_{\Sigma}(X_{\Gamma}, R_{\mathrm{L}})$}
			\Comment{Scatter: $\mathbb{R}^{\gamma \times d^{(t)}} \times {[m]}^{\gamma} \to \mathbb{R}^{m \times d^{(t)}}$}
			\State{$Z^{(t)} \coloneqq \sigma\left(Z_{\mathrm{L}} + Z_{\mathrm{F}} \odot Z_{\Sigma\Gamma} \right)$}
			\Comment{Element-wise operations}
			\State{\Return{$Z^{(t)}$}}
		\EndFunction{}
	\end{algorithmic}
\end{algorithm}

\section{Evaluated Hyperparameter Grids}%
\label{sec:appendix:config-grid}

To tune the hyperparameters of the evaluated models, we used a regular grid search.
Depending on the type of model, different sets of hyperparameter configurations $\Theta$ were used.

\paragraph{Graph Kernels}
We used the \ac{svm} classifier from Scikit-learn to evaluate the graph kernel approaches.
We tuned only the regularization parameter \texttt{C} of this classifier;
the evaluated values are $\mathtt{C} \in \{ 1, \num{1e-1},\allowbreak \num{1e-2},\allowbreak \num{1e-3},\allowbreak \num{1e-4} \}$.
All other parameters were left at the default setting (using \texttt{scikit-learn 0.22.1}).

\paragraph{Baseline and \ac{gin}}
For the evaluation of the structure unaware baseline learner and \ac{gin}, we used the same hyperparameter configurations as \citet{Errica2020}.
We therefore refer to their work for a complete list of the tuned hyperparameters for those models.

\paragraph{2-\acs{gnn} and 2-\acs{wl}-\acs{gnn}}
We evaluated our implementations of 2-\acsp{gnn} as well as 2-\acs{wl}-\acsp{gnn} on the grid spanned by the following hyperparameter values:
\begin{itemize}[itemsep=2pt,parsep=2pt]
	\item \textbf{Number of convolutional layers $T \in \{ 3, 5 \}$:}
		This parameter describes only the depth of the stack of convolutional layers.
		The \ac{mlp} after the pooling layer is always configured with a single hidden layer.
	\item \textbf{Layer width $d \in \{ 32, 64 \}$:}
		This parameter describes the output dimensionalities $d = d^{(1)} = \cdots = d^{(T)}$ of the convolutional layers and (if applicable) also the hidden layer width of the final \ac{mlp} after the pooling layer.
	\item \textbf{Learning rate $\eta \in \{ \num{1e-2}, \num{1e-3}, \num{1e-4} \}$} of the Adam optimizer.
	\item \textbf{Activation functions $\sigma$ and $\sigma_{\Gamma}$} are set to the standard logistic function.
		However, for the evaluation of the synthetic TRIANGLE dataset we used ReLU instead because this choice led to improved and more consistent results in previous exploratory experiments.
	\item \textbf{Number of epochs $E$ and early stopping patience $p$} are set to $E = 1000$ and $p = 100$, except for the evaluation of the synthetic TRIANGLE dataset for which we used $E = 5000$ and $p = 1000$ to ensure model convergence.
\end{itemize}
Both, 2-\acsp{gnn} and 2-\acs{wl}-\acsp{gnn}, were evaluated using two different pooling layers which combine the edge feature vectors ${\{ z_{ij} \}}_{e_{ij} \in \mathcal{E}_{G^r}}$ into a graph feature representation $z_G$.
The $\mean$ pooling layer uses $z_G = \frac{1}{|\mathcal{E}_{G^r}|} \sum_{e_{ij}} z_{ij}$.
The $\wmean$ pooling layer extends this approach by incorporating attention scores $w_{ij} \in \mathbb{R}$ that are learned alongside $z_{ij}$ for each edge;
the graph feature representation is then defined as $z_G = \frac{1}{\sum_{e_{ij}} e^{w_{ij}}} \sum_{e_{ij}} e^{w_{ij}} z_{ij}$.

\section{Dataset Statistics and Descriptions}%
\label{sec:appendix:ds-stats}

\begin{table}[ht]
	\caption{Sizes of the evaluated binary classification datasets and their graphs.}\label{tbl:appendix:ds-stats}
	\centering\small
	\csvreader[
		tabular={lrrrrrrrrr},
		separator=comma,
		before reading=\setlength{\tabcolsep}{5pt},
		table head={%
			\multicolumn{1}{c}{} & \multicolumn{1}{c}{} & &\multicolumn{3}{c}{vertex count $\left|\mathcal{V}_G\right|$} & \multicolumn{3}{c}{edge count $\left|\mathcal{E}_G\right|$} & \multicolumn{1}{c}{vert.~deg.} \\%
			& \multirow{-2}{*}[-0.2em]{\shortstack[c]{no.\ of\\ graphs}} & \multirow{-2}{*}[-0.2em]{\shortstack[c]{vertex data\\{\scriptsize (feat.\ + lab.)}}} & $\min$ & $\mean$ & $\max$ & $\min$ & $\mean$ & $\max$ & $\mean \pm\, \sigma$ \\\toprule
		},
		table foot=\bottomrule,
		late after line=\\
	]{data/ds_stats.csv}%
	{name=\name,graph_count=\gcount,%
	node_count_min=\ncountmin,node_count_mean=\ncountmean,node_count_max=\ncountmax,%
	edge_count_min=\ecountmin,edge_count_mean=\ecountmean,edge_count_max=\ecountmax,%
	node_degree_min=\ndegmin,node_degree_mean=\ndegmean,node_degree_std=\ndegstd,node_degree_max=\ndegmax,%
	dim_node_features=\nfdim,dim_edge_features=\efdim, radius_mean=\radiusmean, radius_std=\radiusstd%
	}%
	{\textbf{\name}&%
	$\gcount$&%
	$\nfdim$&%
	$\ncountmin$&$\ncountmean$&$\ncountmax$&%
	$\ecountmin$&$\ecountmean$&$\ecountmax$&$\ndegmean \pm \ndegstd$%
	}
\end{table}

\paragraph{TRIANGLE}
The triangle detection dataset was generated by sampling three graphs with exactly one unicolored triangle uniformly at random for each possible combination of the following parameters:
The number of vertices (between 6 and 32), the vertex color proportions (either 50/50\%, 75/25\% or 25/75\% vertices with the colors \colorlabel{t_blue}{A}/\colorlabel{t_red}{B}), the graph density (${\left| \mathcal{V}_{G} \right|}^{-2} \left| \mathcal{E}_{G} \right| \in \{ \nicefrac{1}{4}, \nicefrac{1}{2} \}$) the graph class (add a triangle with either the color \colorlabel{t_blue}{A} or \colorlabel{t_red}{B}).

\paragraph{NCI1}
This dataset was made available by \citet{Shervashidze2011}.
It contains a balanced subset of molecule graphs that were originally published by the US \ac{nci}.
In each molecule graph, vertices correspond to atoms and edges to bonds between them.
The binary classes in this dataset describe whether a molecule is able to suppress or inhibit the growth of certain lung cancer and ovarian cancer cell lines in humans.

\paragraph{PROTEINS and D\&D}
The graphs in both the PROTEINS dataset~\citep{Borgwardt2005a} as well as the D\&D dataset~\citep{Dobson2003} represent proteins.
Each vertex corresponds to a so-called \ac{sse}, i.e.\ a certain molecular substructure.
An edge encodes either that two \acp{sse} are neighbors in the protein's amino-acid sequence or that those \acp{sse} are close to each other in 3D space.
Each protein graph is classified by whether it is an enzyme or not.
The main difference between the two datasets is their selection of vertex features/labels.

\paragraph{REDDIT}
This balanced dataset contains graphs that represent online discussion threads on the website Reddit~\citep{Yanardag2015}.
Each vertex corresponds to a user; an edge is drawn between two users iff.\ at least one of them replied to a comment of another.
Such social interaction graphs were sampled from two types of subreddits:
Question/answer-based and discussion-based.
The classification goal is to predict from which type of subreddit a given graph was sampled.

\paragraph{IMDB}
This dataset contains so-called \textit{ego-networks} of movie actors~\citep{Yanardag2015}.
Vertices in such networks represent actors and edges encode whether two actors starred in the same movie.
The graphs in the dataset are derived from the actors starring in either action or romance movies.
The classification goal for each graph is to predict the movie genre it was derived from.

\section{Influence of the Neighborhood Radius on the Predictive Performance}%
\label{sec:appendix:radius}

As described in \cref{sec:wl2conv}, the neighborhood radius $r \in \mathbb{N}$ determines the number of convolved edges feature vectors, i.e.\ the number of rows in $Z^{(t)} \in \mathbb{R}^{\left|\mathcal{E}_{G^r}\right| \times d^{(t)}}$ (see \cref{fig:appendix:neighborhood-radius}).
We will now analyze the influence of $r$ on the accuracy of 2-\acs{wl}-\acsp{gnn}.
The theory suggests that the \ac{dp} of the 2-\acs{wl} convolution is increased by increasing the radius; e.g., the proof of 2-\acs{wl}'s cycle counting ability \citep{Fuerer2017} depends heavily on the structural information carried by the indirect edge features/colors that are only present if $r > 1$.
\begin{figure}[ht]
	\centering
	\includegraphics[width=0.9\linewidth]{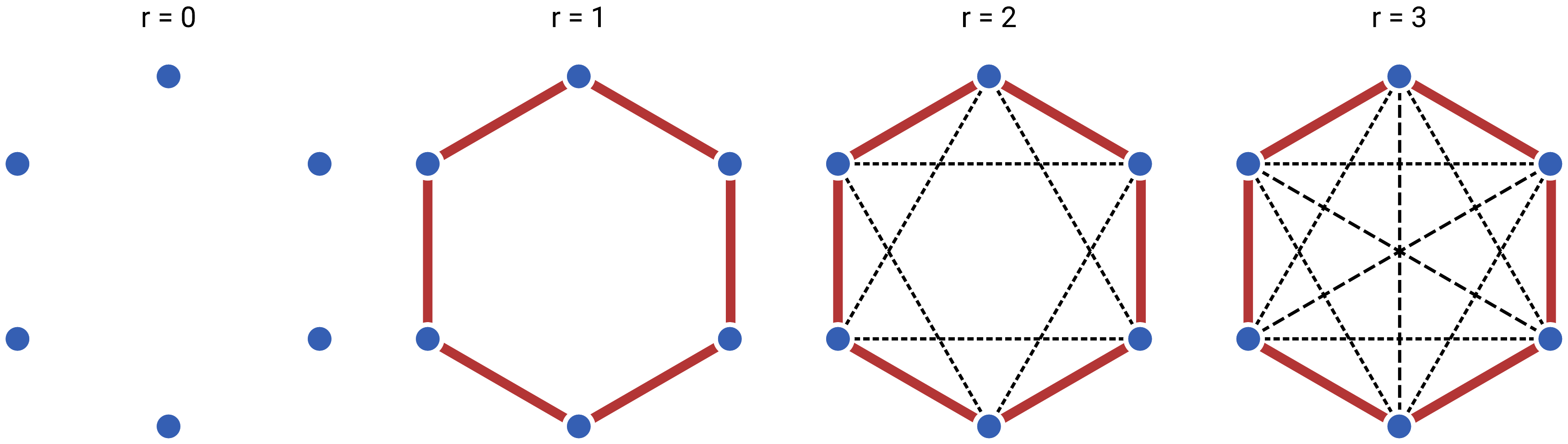}
	\caption[Illustration of the powers of the six-cycle graph.]{
		Illustration of the powers of the six-cycle graph for varying $r$.
		The self-loop edge at each of the vertices is not explicitly shown.
		For $r = 3$ all possible edges between the six vertices will be considered, just as in the original 2-\acs{wl} algorithm.
	}\label{fig:appendix:neighborhood-radius}
\end{figure}
\newcommand{\wlRadiusPlot}[4][width=0.22\linewidth]{%
\begin{tikzpicture}
	\begin{axis}[
		height=0.3\linewidth,
		#1,
		label style={font=\tiny},
		tick label style={font=\tiny},
		ymajorgrids,
		ytick style={draw=none},
		ylabel shift=-6pt,
		xlabel shift=-5pt,
		axis line style={gray},
		mark size={1.5pt},
		title style={yshift=-0.3em},
		#3
	]
		\addplot [color=t_blue, only marks, mark=*]
		plot[error bars/.cd, y dir=both, y explicit]
		table [x=r, y=#2TrainMean, y error=#2TrainStd, col sep=semicolon] {data/wl2_radii_#4.csv};
		\label{pgfplots:appendix:wl-radius:#4-#2-train}
		\addplot [color=t_red, only marks, mark=square*]
		plot[error bars/.cd, y dir=both, y explicit]
		table [x=r, y=#2TestMean, y error=#2TestStd, col sep=semicolon] {data/wl2_radii_#4.csv};
		\label{pgfplots:appendix:wl-radius:#4-#2-test}
	\end{axis}
\end{tikzpicture}} 
\begin{figure}[ht]
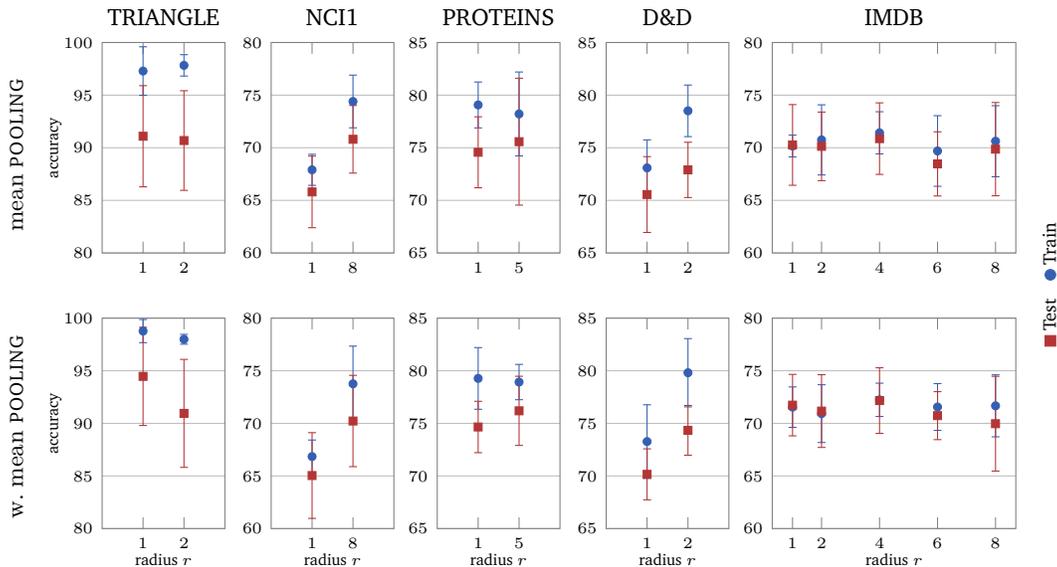

	\centering
	\setlength{\tabcolsep}{1pt}
	\begin{tabular}{m{1em}cccccc}
		\raisebox{0.33\height}[0pt][0pt]{\rotatebox{90}{\small\textsc{$\mean$~pooling}}} &
		\wlRadiusPlot{triangle}{title={\small TRIANGLE},ylabel={accuracy},xtick={1,2},xmin=0,xmax=3,ymin=80,ymax=100,try min ticks=4}{mean} &
		\wlRadiusPlot{nci}{title={\small NCI1},xtick={1,8},xmin=-6,xmax=15,ymin=60,ymax=80,try min ticks=4}{mean} &
		\wlRadiusPlot{proteins}{title={\small PROTEINS},xtick={1,5},xmin=-3,xmax=9,ymin=65,ymax=85,try min ticks=4}{mean} &
		\wlRadiusPlot{dd}{title={\small D\&D},xtick={1,2},xmin=0,xmax=3,ymin=65,ymax=85,try min ticks=4}{mean} &
		\wlRadiusPlot[width=0.33\linewidth]{imdb}{title={\small IMDB},xtick={1,2,4,6,8},ymin=60,ymax=80,try min ticks=4}{mean} &
		\multirow{2}{*}[1.5em]{\hspace{8pt}\rotatebox[origin=c]{90}{\scriptsize%
			\ref{pgfplots:appendix:wl-radius:mean-triangle-test}~Test\quad 
			\ref{pgfplots:appendix:wl-radius:mean-triangle-train}~Train 
		}} \\
		\raisebox{0.3\height}[0pt][0pt]{\rotatebox{90}{\small\textsc{$\mathrm{w.\ mean}$~pooling}}} &
		\wlRadiusPlot{triangle}{title={\hphantom{\small TRIANGLE}},ylabel={accuracy},xtick={1,2},xmin=0,xmax=3,ymin=80,ymax=100,try min ticks=4, xlabel={radius $r$}}{sam} &
		\wlRadiusPlot{nci}{xtick={1,8},xmin=-6,xmax=15,ymin=60,ymax=80,try min ticks=4, xlabel={radius $r$}}{sam} &
		\wlRadiusPlot{proteins}{title={\hphantom{\small PROTEINS}},xtick={1,5},xmin=-3,xmax=9,ymin=65,ymax=85,try min ticks=4, xlabel={radius $r$}}{sam} &
		\wlRadiusPlot{dd}{xtick={1,2},xmin=0,xmax=3,ymin=65,ymax=85,try min ticks=4, xlabel={radius $r$}}{sam} &
		\wlRadiusPlot[width=0.33\linewidth]{imdb}{xtick={1,2,4,6,8},ymin=60,ymax=80,try min ticks=4, xlabel={radius $r$}}{sam} &
	\end{tabular}
	\caption[Accuracy of the 2-\acs{wl}-\acs{gnn} with varying neighborhood radii.]{
		Accuracy of the 2-\acs{wl}-\acs{gnn} with varying neighborhood radii $r$.
		All datasets were evaluated on $r = 1$ and the highest radius for which the 2-\acs{wl} graph encodings would still fit into memory;
		the REDDIT dataset only fit into memory for $r = 1$, therefore it is not shown here.
	}\label{fig:appendix:wl2-gnn-radius}
\end{figure}

\Cref{fig:appendix:wl2-gnn-radius} shows\footnote{
    Note that the accuracies in this figure are lower than those in the evaluation section of the main paper because a different 2-\acs{wl}-\acs{gnn} architecture is used here; namely, no \ac{mlp} is applied after pooling.
} that, in practice, a neighborhood radius $r > 1$ does correlate with a higher training and test accuracy on the NCI1 and D\&D datasets; howerver, on the IMDB and PROTEINS datasets this is not the case.
This difference is interesting because NCI1 (molecular structures) and D\&D (protein sequences) contain more cyclic graphs, while IMDB (ego-network structures) and PROTEINS (protein sequences) consist of more tree- or list-like graphs (see \cref{sec:appendix:ds-stats}).
Even though the PROTEINS and D\&D datasets both contain protein sequences, we find that the protein sequences in the PROTEINS dataset are very ``list-like'' with much fewer large cycles than in the proteins structures of the D\&D dataset (compare the vertex and edge count statistics of both datasets in \cref{tbl:appendix:ds-stats}).
\Cref{fig:appendix:proteins-dd-diff} illustrates this difference.
This leads us the the hypothesis that 2-\acs{wl}-\acsp{gnn} with a neighborhood radius of $r > 1$ are able to improve their real-world performance over that achieved with $r = 1$ by detecting cyclic constituents in graphs.
Due to the limited number of evaluation results, further investigations are required to verify this hypothesis.
\begin{figure}[h]
	\centering
	\includegraphics[width=0.8\linewidth]{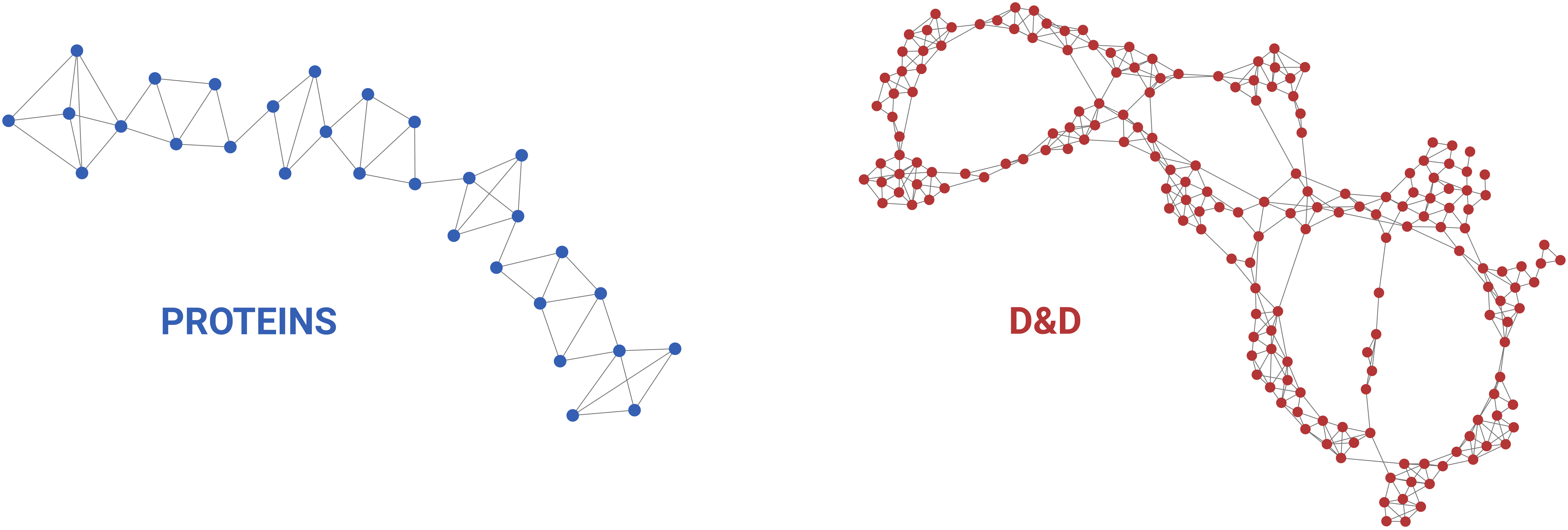}
	\caption{
		Two samples illustrating the difference between the PROTEINS and D\&D datasets.
	}\label{fig:appendix:proteins-dd-diff}
\end{figure}

\section{Empirical Runtime Evaluation}%
\label{sec:appendix:epoch-times}

\begin{figure}[ht]
	\centering
    \raisebox{-.5\height}{\footnotesize\begin{tikzpicture}
        \begin{axis}[
            width=0.40\linewidth,
            height=0.32\linewidth,
            xmode=log,
            ymode=log,
            log basis x=2,
            ylabel={epoch duration (\si{\milli\s})},
            xlabel={vertex count $n$},
            legend style={at={(0.04, 0.95)}, anchor=north west},
            ymajorgrids,
            try min ticks=4,
            ymin=10,
            ymax=1700
        ]
            \addplot [no markers, color=t_blue] table [x=N, y=r3, col sep=comma] {data/epoch_times_N.csv};
            \addplot [no markers, color=t_green] table [x=N, y=r2, col sep=comma] {data/epoch_times_N.csv};
            \addplot [no markers, color=t_red] table [x=N, y=r1, col sep=comma] {data/epoch_times_N.csv};
            \addplot [no markers, color=black] table [x=N, y=gin, col sep=comma] {data/epoch_times_N.csv};
            \legend{$r=3$, $r=2$, $r=1$, \acs*{gin}}
        \end{axis}
    \end{tikzpicture}\hspace{3pt}
    \begin{tikzpicture}
        \begin{axis}[
            width=0.40\linewidth,
            height=0.32\linewidth,
            xlabel={vertex degree $d$},
            ymajorgrids,
            try min ticks=5
        ]
            \addplot [no markers, color=t_blue] table [x=d, y=r3, col sep=comma] {data/epoch_times_d.csv};
            \addplot [no markers, color=t_green] table [x=d, y=r2, col sep=comma] {data/epoch_times_d.csv};
            \addplot [no markers, color=t_red] table [x=d, y=r1, col sep=comma] {data/epoch_times_d.csv};
            \addplot [no markers, color=black] table [x=d, y=gin, col sep=comma] {data/epoch_times_d.csv};
        \end{axis}
    \end{tikzpicture}}\hfill
    \raisebox{1em}{\footnotesize\csvreader[
		tabular={rrrr},
		separator=comma,
		before reading=\setlength{\tabcolsep}{2pt},
		table head={%
            $d$ & $r=1$ & $r=2$ & $r=3$ \\\toprule
		},
		table foot=\bottomrule,
		late after line=\\
	]{data/epoch_times_scaling.csv}%
	{d=\d,r1=\ra,r2=\rb,r3=\rc%
	}%
	{$\mathbf{\d}$ & $\ra$ & $\rb$ & $\rc$}}
    \caption{
		Comparison of the mean training epoch durations of 2-\acs{wl}-\acs{gnn} and \ac{gin} for varying graph sizes $n$, vertex degrees $d$ and neighborhood radii $r$.
		\textit{(Left)} Durations for varying vertex counts $n \in \{ 2^4, \dots, 2^{14} \}$ with a fixed degree $d = 2$.
		\textit{(Middle)} Durations for varying vertex degrees $d$ with a fixed size $n = 1024$.
		\textit{(Right)} Table of the factors by which 2-\acs{wl}-\acs{gnn} is slower than \ac{gin} for $n = 1024$.
	}\label{fig:appendix:epoch-times}
\end{figure}
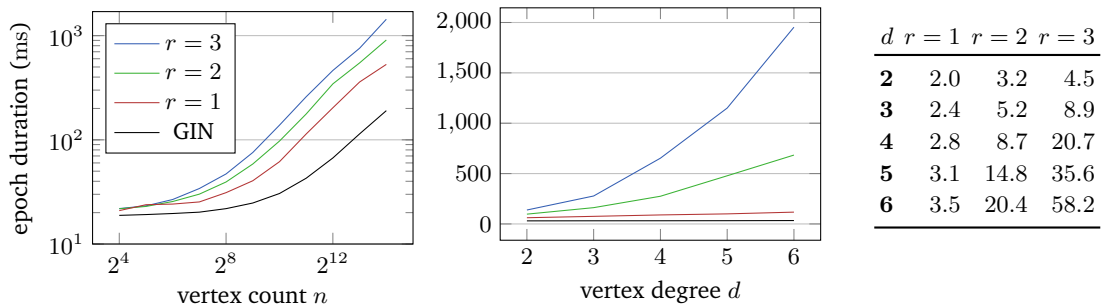
In \cref{sec:appendix:radius} we evaluated the relation between the neighborhood radius $r$ and the resulting accuracy.
As described in \cref{sec:wl2conv}, the neighborhood radius also affects the 2-\acs{wl} convolution runtime which is bounded by $\mathcal{O}(n d^{2r})$, with $n$ denoting the vertex count and $d$ being the maximum vertex degree.

We will now verify this bound experimentally.
\Cref{fig:appendix:epoch-times} shows the duration of a single training epoch of a 2-\acs{wl}-\acs{gnn} and a \ac{gin} model with the same depth and a similar number of learnable parameters ($\approx 2200$).
The time for each combination of $n$, $d$ and $r$ was obtained by taking the mean duration of the first 100 training epochs using a dataset of 100 synthesized regular graphs with size $n$ and uniform vertex degree $d$.
The experiment was implemented in TensorFlow and executed on a single Nvidia 1080 Ti GPU.
Since the coefficient of variation for the 100 samples of each $(n, d, r)$ combination is $\leq 5\%$, no error bars are shown for visual clarity.

In the left plot of \cref{fig:appendix:epoch-times} we see that the epoch durations are dominated by constant costs for $n < 2^{10}$; for $n \geq 2^{10}$ the expected linearity in $n$ can be observed.
For 2-\acs{wl}-\acsp{gnn} we expect that the slope of the epoch durations is described by $\mathcal{O}(d^{2r})$.
The roughly uniform y-offsets of the $r = 1$, $r = 2$ and $r = 3$ curves in the left log-log plot are in line with this expectation.
Additionally, the middle plot confirms the expected power law relation between the epoch duration and the vertex degree $d$.

The table on the right shows how much slower the 2-\acs{wl}-\acs{gnn} is compared to the 1-\acs{wl} bounded \ac{gin} architecture.
Note however that we considered the worst-case scenario in which all vertices reach the upper degree bound $d$.
Many real-world datasets consist of graphs with only a few high-degree vertices.
Looking at the NCI1, PROTEINS, D\&D, REDDIT and IMDB datasets, which have mean vertex degrees that range between $2.2$ and $9.8$ (see \cref{tbl:appendix:ds-stats}), the table in \cref{fig:appendix:epoch-times} would suggest that a 2-\acs{wl}-\acs{gnn} with $r = 1$ is at least $2$ or $3$ times slower than a \ac{gin} on those datasets.
However, in reality the slowdown is only $\frac{228\si{\milli\s}}{198\si{\milli\s}} \approx 1.15$ for NCI1, $\frac{262\si{\milli\s}}{206\si{\milli\s}} \approx 1.27$ for PROTEINS, $\frac{765\si{\milli\s}}{514\si{\milli\s}} \approx 1.49$ for D\&D, $\frac{1081\si{\milli\s}}{804\si{\milli\s}} \approx 1.35$ for REDDIT and $\frac{96\si{\milli\s}}{61\si{\milli\s}} \approx 1.57$ for IMDB.
For neighborhood radii $r > 1$, the real-world slowdown factor can be significantly smaller than the worst-case slowdown as well: $\frac{964\si{\milli\s}}{198\si{\milli\s}} \approx 4.87$ for NCI1 with $r = 8$, $\frac{670\si{\milli\s}}{206\si{\milli\s}} \approx 3.25$ for PROTEINS with $r = 5$, $\frac{1386\si{\milli\s}}{514\si{\milli\s}} \approx 2.70$ for D\&D with $r = 2$ and $\frac{239\si{\milli\s}}{61\si{\milli\s}} \approx 3.92$ for IMDB with $r = 4$.
Consequently, 2-\acs{wl}-\acsp{gnn} appear to be computationally feasible in practice despite the large worst-case slowdown factors in \cref{fig:appendix:epoch-times}.
Additionally, as we saw in \cref{sec:appendix:radius}, a neighborhood radius $r > 1$ is not always necessary to reach optimal predictive performance;
the computationally cheap choice of $r = 1$ should therefore always be considered.

\section{Fold-wise Accuracy Deltas}%
\label{sec:appendix:fold-diffs}

Due to the relatively small sizes of the evaluated benchmark datasets, the variance of the test accuracies across different folds is quite large.
When directly comparing the mean accuracies of two learners, it is often impossible to tell whether one consistently outperforms the other.
We therefore now list the mean and standard deviations of the fold-wise test accuracy differences of all pairs of learners for all datasets.
This effectively removes the variance introduced by ``easy'' and ``hard'' folds on which all learners might tend to perform consistently better/worse.

In the following \crefrange{tbl:appendix:diff-triangle}{tbl:appendix:diff-imdb} we show accuracy differences as \textit{row accuracy} minus \textit{column accuracy}.
For each row $i$ and column $j$, the corresponding cell $(i,j)$ is highlighted in \textcolor{t_red}{red} or \textcolor{t_darkgreen}{green} iff.\ the learner~$i$ performs consistently \textcolor{t_red}{worse} (or \textcolor{t_darkgreen}{better} respectively) than $j$ with a significance level of $2\sigma$.
To compute the deltas for 2-\acs{wl}-\acsp{gnn}, the same neighborhood radii as in the evaluation section of the main paper are used, i.e.\ $r = 2$ for the synthetic triangle detection dataset and $r = 8$, $5$, $2$, $1$ and $4$ for NCI1, PROTEINS, D\&D, REDDIT and IMDB respectively.

{\def\wmean{\mathrm{w.m.}}%
\captionsetup[table]{name=Matrix}%
\begin{table}[ht]
	\caption{Fold-wise accuracy delta means and standard deviations on the triangle dataset.}\label[mat]{tbl:appendix:diff-triangle}
	\centering\small
{\setlength\tabcolsep{2.5pt}\setlength{\extrarowheight}{2pt}%
\begin{tabular}{lcccccccccc}
& \rotatebox[origin=l]{90}{\textbf{WL\textsubscript{ST}} ($T=3$)} & \rotatebox[origin=l]{90}{\textbf{WL\textsubscript{SP}} ($T=3$)} & \rotatebox[origin=l]{90}{\textbf{2-LWL} ($T=3$)} & \rotatebox[origin=l]{90}{\textbf{2-GWL} ($T=3$)} & \rotatebox[origin=l]{90}{\textbf{Baseline} ($\mathrm{sum}$)} & \rotatebox[origin=l]{90}{\textbf{GIN} ($\mathrm{sum}$)} & \rotatebox[origin=l]{90}{\textbf{2-GNN} ($\mean$)} & \rotatebox[origin=l]{90}{\textbf{2-GNN} ($\wmean$)} & \rotatebox[origin=l]{90}{\textbf{2-WL-GNN} ($\mean$)} & \rotatebox[origin=l]{90}{\textbf{2-WL-GNN} ($\wmean$)} \\
\textbf{WL\textsubscript{ST}} ($T=3$)& & {\tiny\Vectorstack{-11\\ \pm 13}} & {\tiny\Vectorstack{+0.4\\ \pm 9.6}} & {\tiny\Vectorstack{-4.9\\ \pm 16}} & {\tiny\Vectorstack{+12\\ \pm 12}} & {\tiny\Vectorstack{-13\\ \pm 10}} & {\tiny\Vectorstack{-20\\ \pm 14}} & {\tiny\Vectorstack{-25\\ \pm 14}} & \cellcolor{t_red!25}\textcolor{t_darkred}{{\tiny\Vectorstack{-36\\ \pm 17}}} & \cellcolor{t_red!25}\textcolor{t_darkred}{{\tiny\Vectorstack{-42\\ \pm 11}}} \\[2pt]
\textbf{WL\textsubscript{SP}} ($T=3$)&{\tiny\Vectorstack{+11\\ \pm 13}} &  & {\tiny\Vectorstack{+11\\ \pm 13}} & {\tiny\Vectorstack{+6.1\\ \pm 9.2}} & {\tiny\Vectorstack{+23\\ \pm 16}} & {\tiny\Vectorstack{-2.0\\ \pm 14}} & {\tiny\Vectorstack{-8.8\\ \pm 13}} & {\tiny\Vectorstack{-14\\ \pm 14}} & \cellcolor{t_red!25}\textcolor{t_darkred}{{\tiny\Vectorstack{-25\\ \pm 12}}} & \cellcolor{t_red!25}\textcolor{t_darkred}{{\tiny\Vectorstack{-31\\ \pm 11}}} \\[2pt]
\textbf{2-LWL} ($T=3$)&{\tiny\Vectorstack{-0.4\\ \pm 9.6}} & {\tiny\Vectorstack{-11\\ \pm 13}} &  & {\tiny\Vectorstack{-5.3\\ \pm 11}} & {\tiny\Vectorstack{+12\\ \pm 7.5}} & {\tiny\Vectorstack{-13\\ \pm 6.7}} & {\tiny\Vectorstack{-20\\ \pm 13}} & \cellcolor{t_red!25}\textcolor{t_darkred}{{\tiny\Vectorstack{-25\\ \pm 9.4}}} & \cellcolor{t_red!25}\textcolor{t_darkred}{{\tiny\Vectorstack{-36\\ \pm 10}}} & \cellcolor{t_red!25}\textcolor{t_darkred}{{\tiny\Vectorstack{-43\\ \pm 6.8}}} \\[2pt]
\textbf{2-GWL} ($T=3$)&{\tiny\Vectorstack{+4.9\\ \pm 16}} & {\tiny\Vectorstack{-6.1\\ \pm 9.2}} & {\tiny\Vectorstack{+5.3\\ \pm 11}} &  & {\tiny\Vectorstack{+17\\ \pm 15}} & {\tiny\Vectorstack{-8.2\\ \pm 13}} & {\tiny\Vectorstack{-15\\ \pm 14}} & {\tiny\Vectorstack{-20\\ \pm 13}} & \cellcolor{t_red!25}\textcolor{t_darkred}{{\tiny\Vectorstack{-31\\ \pm 9.0}}} & \cellcolor{t_red!25}\textcolor{t_darkred}{{\tiny\Vectorstack{-38\\ \pm 9.0}}} \\[2pt]
\textbf{Baseline} ($\mathrm{sum}$)&{\tiny\Vectorstack{-12\\ \pm 12}} & {\tiny\Vectorstack{-23\\ \pm 16}} & {\tiny\Vectorstack{-12\\ \pm 7.5}} & {\tiny\Vectorstack{-17\\ \pm 15}} &  & \cellcolor{t_red!25}\textcolor{t_darkred}{{\tiny\Vectorstack{-25\\ \pm 8.4}}} & \cellcolor{t_red!25}\textcolor{t_darkred}{{\tiny\Vectorstack{-32\\ \pm 14}}} & \cellcolor{t_red!25}\textcolor{t_darkred}{{\tiny\Vectorstack{-37\\ \pm 9.2}}} & \cellcolor{t_red!25}\textcolor{t_darkred}{{\tiny\Vectorstack{-48\\ \pm 14}}} & \cellcolor{t_red!25}\textcolor{t_darkred}{{\tiny\Vectorstack{-55\\ \pm 8.5}}} \\[2pt]
\textbf{GIN} ($\mathrm{sum}$)&{\tiny\Vectorstack{+13\\ \pm 10}} & {\tiny\Vectorstack{+2.0\\ \pm 14}} & {\tiny\Vectorstack{+13\\ \pm 6.7}} & {\tiny\Vectorstack{+8.2\\ \pm 13}} & \cellcolor{t_green!25}\textcolor{t_darkgreen}{{\tiny\Vectorstack{+25\\ \pm 8.4}}} &  & {\tiny\Vectorstack{-6.8\\ \pm 13}} & {\tiny\Vectorstack{-12\\ \pm 9.5}} & {\tiny\Vectorstack{-23\\ \pm 14}} & \cellcolor{t_red!25}\textcolor{t_darkred}{{\tiny\Vectorstack{-29\\ \pm 7.3}}} \\[2pt]
\textbf{2-GNN} ($\mean$)&{\tiny\Vectorstack{+20\\ \pm 14}} & {\tiny\Vectorstack{+8.8\\ \pm 13}} & {\tiny\Vectorstack{+20\\ \pm 13}} & {\tiny\Vectorstack{+15\\ \pm 14}} & \cellcolor{t_green!25}\textcolor{t_darkgreen}{{\tiny\Vectorstack{+32\\ \pm 14}}} & {\tiny\Vectorstack{+6.8\\ \pm 13}} &  & {\tiny\Vectorstack{-5.0\\ \pm 6.6}} & {\tiny\Vectorstack{-16\\ \pm 11}} & \cellcolor{t_red!25}\textcolor{t_darkred}{{\tiny\Vectorstack{-23\\ \pm 9.8}}} \\[2pt]
\textbf{2-GNN} ($\wmean$)&{\tiny\Vectorstack{+25\\ \pm 14}} & {\tiny\Vectorstack{+14\\ \pm 14}} & \cellcolor{t_green!25}\textcolor{t_darkgreen}{{\tiny\Vectorstack{+25\\ \pm 9.4}}} & {\tiny\Vectorstack{+20\\ \pm 13}} & \cellcolor{t_green!25}\textcolor{t_darkgreen}{{\tiny\Vectorstack{+37\\ \pm 9.2}}} & {\tiny\Vectorstack{+12\\ \pm 9.5}} & {\tiny\Vectorstack{+5.0\\ \pm 6.6}} &  & {\tiny\Vectorstack{-11\\ \pm 8.8}} & \cellcolor{t_red!25}\textcolor{t_darkred}{{\tiny\Vectorstack{-18\\ \pm 7.2}}} \\[2pt]
\textbf{2-WL-GNN} ($\mean$)&\cellcolor{t_green!25}\textcolor{t_darkgreen}{{\tiny\Vectorstack{+36\\ \pm 17}}} & \cellcolor{t_green!25}\textcolor{t_darkgreen}{{\tiny\Vectorstack{+25\\ \pm 12}}} & \cellcolor{t_green!25}\textcolor{t_darkgreen}{{\tiny\Vectorstack{+36\\ \pm 10}}} & \cellcolor{t_green!25}\textcolor{t_darkgreen}{{\tiny\Vectorstack{+31\\ \pm 9.0}}} & \cellcolor{t_green!25}\textcolor{t_darkgreen}{{\tiny\Vectorstack{+48\\ \pm 14}}} & {\tiny\Vectorstack{+23\\ \pm 14}} & {\tiny\Vectorstack{+16\\ \pm 11}} & {\tiny\Vectorstack{+11\\ \pm 8.8}} &  & {\tiny\Vectorstack{-6.5\\ \pm 8.7}} \\[2pt]
\textbf{2-WL-GNN} ($\wmean$)&\cellcolor{t_green!25}\textcolor{t_darkgreen}{{\tiny\Vectorstack{+42\\ \pm 11}}} & \cellcolor{t_green!25}\textcolor{t_darkgreen}{{\tiny\Vectorstack{+31\\ \pm 11}}} & \cellcolor{t_green!25}\textcolor{t_darkgreen}{{\tiny\Vectorstack{+43\\ \pm 6.8}}} & \cellcolor{t_green!25}\textcolor{t_darkgreen}{{\tiny\Vectorstack{+38\\ \pm 9.0}}} & \cellcolor{t_green!25}\textcolor{t_darkgreen}{{\tiny\Vectorstack{+55\\ \pm 8.5}}} & \cellcolor{t_green!25}\textcolor{t_darkgreen}{{\tiny\Vectorstack{+29\\ \pm 7.3}}} & \cellcolor{t_green!25}\textcolor{t_darkgreen}{{\tiny\Vectorstack{+23\\ \pm 9.8}}} & \cellcolor{t_green!25}\textcolor{t_darkgreen}{{\tiny\Vectorstack{+18\\ \pm 7.2}}} & {\tiny\Vectorstack{+6.5\\ \pm 8.7}} & 
\end{tabular}}

\end{table}
\begin{table}[ht]
	\caption{Fold-wise accuracy delta means and standard deviations on NCI1.}\label[mat]{tbl:appendix:diff-nci}
	\centering\small
{\setlength\tabcolsep{2.5pt}\setlength{\extrarowheight}{2pt}%
\begin{tabular}{lcccccccccc}
& \rotatebox[origin=l]{90}{\textbf{WL\textsubscript{ST}} ($T=1$)} & \rotatebox[origin=l]{90}{\textbf{WL\textsubscript{ST}} ($T=3$)} & \rotatebox[origin=l]{90}{\textbf{2-LWL} ($T=3$)} & \rotatebox[origin=l]{90}{\textbf{2-GWL} ($T=3$)} & \rotatebox[origin=l]{90}{\textbf{Baseline} ($\mathrm{sum}$)} & \rotatebox[origin=l]{90}{\textbf{GIN} ($\mathrm{sum}$)} & \rotatebox[origin=l]{90}{\textbf{2-GNN} ($\mean$)} & \rotatebox[origin=l]{90}{\textbf{2-GNN} ($\wmean$)} & \rotatebox[origin=l]{90}{\textbf{2-WL-GNN} ($\mean$)} & \rotatebox[origin=l]{90}{\textbf{2-WL-GNN} ($\wmean$)} \\
\textbf{WL\textsubscript{ST}} ($T=1$)& & \cellcolor{t_red!25}\textcolor{t_darkred}{{\tiny\Vectorstack{-11\\ \pm 1.3}}} & {\tiny\Vectorstack{-2.8\\ \pm 1.5}} & {\tiny\Vectorstack{+2.3\\ \pm 2.4}} & \cellcolor{t_green!25}\textcolor{t_darkgreen}{{\tiny\Vectorstack{+6.2\\ \pm 2.9}}} & {\tiny\Vectorstack{-3.5\\ \pm 2.2}} & {\tiny\Vectorstack{-1.9\\ \pm 2.9}} & {\tiny\Vectorstack{-4.4\\ \pm 2.3}} & {\tiny\Vectorstack{+1.6\\ \pm 2.6}} & {\tiny\Vectorstack{+0.4\\ \pm 2.1}} \\[2pt]
\textbf{WL\textsubscript{ST}} ($T=3$)&\cellcolor{t_green!25}\textcolor{t_darkgreen}{{\tiny\Vectorstack{+11\\ \pm 1.3}}} &  & \cellcolor{t_green!25}\textcolor{t_darkgreen}{{\tiny\Vectorstack{+8.1\\ \pm 1.6}}} & \cellcolor{t_green!25}\textcolor{t_darkgreen}{{\tiny\Vectorstack{+13\\ \pm 1.8}}} & \cellcolor{t_green!25}\textcolor{t_darkgreen}{{\tiny\Vectorstack{+17\\ \pm 2.9}}} & \cellcolor{t_green!25}\textcolor{t_darkgreen}{{\tiny\Vectorstack{+7.3\\ \pm 2.7}}} & \cellcolor{t_green!25}\textcolor{t_darkgreen}{{\tiny\Vectorstack{+8.9\\ \pm 2.3}}} & \cellcolor{t_green!25}\textcolor{t_darkgreen}{{\tiny\Vectorstack{+6.5\\ \pm 2.0}}} & \cellcolor{t_green!25}\textcolor{t_darkgreen}{{\tiny\Vectorstack{+12\\ \pm 2.5}}} & \cellcolor{t_green!25}\textcolor{t_darkgreen}{{\tiny\Vectorstack{+11\\ \pm 2.0}}} \\[2pt]
\textbf{2-LWL} ($T=3$)&{\tiny\Vectorstack{+2.8\\ \pm 1.5}} & \cellcolor{t_red!25}\textcolor{t_darkred}{{\tiny\Vectorstack{-8.1\\ \pm 1.6}}} &  & \cellcolor{t_green!25}\textcolor{t_darkgreen}{{\tiny\Vectorstack{+5.1\\ \pm 2.3}}} & \cellcolor{t_green!25}\textcolor{t_darkgreen}{{\tiny\Vectorstack{+9.0\\ \pm 3.0}}} & {\tiny\Vectorstack{-0.7\\ \pm 2.7}} & {\tiny\Vectorstack{+0.9\\ \pm 2.0}} & {\tiny\Vectorstack{-1.6\\ \pm 2.3}} & {\tiny\Vectorstack{+4.4\\ \pm 2.5}} & {\tiny\Vectorstack{+3.2\\ \pm 2.5}} \\[2pt]
\textbf{2-GWL} ($T=3$)&{\tiny\Vectorstack{-2.3\\ \pm 2.4}} & \cellcolor{t_red!25}\textcolor{t_darkred}{{\tiny\Vectorstack{-13\\ \pm 1.8}}} & \cellcolor{t_red!25}\textcolor{t_darkred}{{\tiny\Vectorstack{-5.1\\ \pm 2.3}}} &  & {\tiny\Vectorstack{+3.9\\ \pm 4.1}} & {\tiny\Vectorstack{-5.8\\ \pm 3.6}} & {\tiny\Vectorstack{-4.3\\ \pm 2.5}} & \cellcolor{t_red!25}\textcolor{t_darkred}{{\tiny\Vectorstack{-6.7\\ \pm 2.5}}} & {\tiny\Vectorstack{-0.8\\ \pm 2.7}} & {\tiny\Vectorstack{-1.9\\ \pm 3.1}} \\[2pt]
\textbf{Baseline} ($\mathrm{sum}$)&\cellcolor{t_red!25}\textcolor{t_darkred}{{\tiny\Vectorstack{-6.2\\ \pm 2.9}}} & \cellcolor{t_red!25}\textcolor{t_darkred}{{\tiny\Vectorstack{-17\\ \pm 2.9}}} & \cellcolor{t_red!25}\textcolor{t_darkred}{{\tiny\Vectorstack{-9.0\\ \pm 3.0}}} & {\tiny\Vectorstack{-3.9\\ \pm 4.1}} &  & \cellcolor{t_red!25}\textcolor{t_darkred}{{\tiny\Vectorstack{-9.8\\ \pm 2.4}}} & \cellcolor{t_red!25}\textcolor{t_darkred}{{\tiny\Vectorstack{-8.2\\ \pm 2.9}}} & \cellcolor{t_red!25}\textcolor{t_darkred}{{\tiny\Vectorstack{-11\\ \pm 2.6}}} & {\tiny\Vectorstack{-4.7\\ \pm 3.1}} & \cellcolor{t_red!25}\textcolor{t_darkred}{{\tiny\Vectorstack{-5.8\\ \pm 2.8}}} \\[2pt]
\textbf{GIN} ($\mathrm{sum}$)&{\tiny\Vectorstack{+3.5\\ \pm 2.2}} & \cellcolor{t_red!25}\textcolor{t_darkred}{{\tiny\Vectorstack{-7.3\\ \pm 2.7}}} & {\tiny\Vectorstack{+0.7\\ \pm 2.7}} & {\tiny\Vectorstack{+5.8\\ \pm 3.6}} & \cellcolor{t_green!25}\textcolor{t_darkgreen}{{\tiny\Vectorstack{+9.8\\ \pm 2.4}}} &  & {\tiny\Vectorstack{+1.6\\ \pm 3.5}} & {\tiny\Vectorstack{-0.9\\ \pm 2.2}} & {\tiny\Vectorstack{+5.1\\ \pm 3.2}} & {\tiny\Vectorstack{+3.9\\ \pm 3.3}} \\[2pt]
\textbf{2-GNN} ($\mean$)&{\tiny\Vectorstack{+1.9\\ \pm 2.9}} & \cellcolor{t_red!25}\textcolor{t_darkred}{{\tiny\Vectorstack{-8.9\\ \pm 2.3}}} & {\tiny\Vectorstack{-0.9\\ \pm 2.0}} & {\tiny\Vectorstack{+4.3\\ \pm 2.5}} & \cellcolor{t_green!25}\textcolor{t_darkgreen}{{\tiny\Vectorstack{+8.2\\ \pm 2.9}}} & {\tiny\Vectorstack{-1.6\\ \pm 3.5}} &  & {\tiny\Vectorstack{-2.4\\ \pm 2.6}} & {\tiny\Vectorstack{+3.5\\ \pm 2.2}} & {\tiny\Vectorstack{+2.4\\ \pm 2.6}} \\[2pt]
\textbf{2-GNN} ($\wmean$)&{\tiny\Vectorstack{+4.4\\ \pm 2.3}} & \cellcolor{t_red!25}\textcolor{t_darkred}{{\tiny\Vectorstack{-6.5\\ \pm 2.0}}} & {\tiny\Vectorstack{+1.6\\ \pm 2.3}} & \cellcolor{t_green!25}\textcolor{t_darkgreen}{{\tiny\Vectorstack{+6.7\\ \pm 2.5}}} & \cellcolor{t_green!25}\textcolor{t_darkgreen}{{\tiny\Vectorstack{+11\\ \pm 2.6}}} & {\tiny\Vectorstack{+0.9\\ \pm 2.2}} & {\tiny\Vectorstack{+2.4\\ \pm 2.6}} &  & {\tiny\Vectorstack{+5.9\\ \pm 3.2}} & {\tiny\Vectorstack{+4.8\\ \pm 3.2}} \\[2pt]
\textbf{2-WL-GNN} ($\mean$)&{\tiny\Vectorstack{-1.6\\ \pm 2.6}} & \cellcolor{t_red!25}\textcolor{t_darkred}{{\tiny\Vectorstack{-12\\ \pm 2.5}}} & {\tiny\Vectorstack{-4.4\\ \pm 2.5}} & {\tiny\Vectorstack{+0.8\\ \pm 2.7}} & {\tiny\Vectorstack{+4.7\\ \pm 3.1}} & {\tiny\Vectorstack{-5.1\\ \pm 3.2}} & {\tiny\Vectorstack{-3.5\\ \pm 2.2}} & {\tiny\Vectorstack{-5.9\\ \pm 3.2}} &  & {\tiny\Vectorstack{-1.1\\ \pm 2.0}} \\[2pt]
\textbf{2-WL-GNN} ($\wmean$)&{\tiny\Vectorstack{-0.4\\ \pm 2.1}} & \cellcolor{t_red!25}\textcolor{t_darkred}{{\tiny\Vectorstack{-11\\ \pm 2.0}}} & {\tiny\Vectorstack{-3.2\\ \pm 2.5}} & {\tiny\Vectorstack{+1.9\\ \pm 3.1}} & \cellcolor{t_green!25}\textcolor{t_darkgreen}{{\tiny\Vectorstack{+5.8\\ \pm 2.8}}} & {\tiny\Vectorstack{-3.9\\ \pm 3.3}} & {\tiny\Vectorstack{-2.4\\ \pm 2.6}} & {\tiny\Vectorstack{-4.8\\ \pm 3.2}} & {\tiny\Vectorstack{+1.1\\ \pm 2.0}} & 
\end{tabular}}

\end{table}
\begin{table}[ht]
	\caption{Fold-wise accuracy delta means and standard deviations on PROTEINS.}\label[mat]{tbl:appendix:diff-proteins}
	\centering\small
{\setlength\tabcolsep{2.5pt}\setlength{\extrarowheight}{2pt}%
\begin{tabular}{lcccccccccc}
& \rotatebox[origin=l]{90}{\textbf{WL\textsubscript{ST}} ($T=3$)} & \rotatebox[origin=l]{90}{\textbf{WL\textsubscript{SP}} ($T=3$)} & \rotatebox[origin=l]{90}{\textbf{2-LWL} ($T=3$)} & \rotatebox[origin=l]{90}{\textbf{2-GWL} ($T=3$)} & \rotatebox[origin=l]{90}{\textbf{Baseline} ($\mathrm{sum}$)} & \rotatebox[origin=l]{90}{\textbf{GIN} ($\mathrm{sum}$)} & \rotatebox[origin=l]{90}{\textbf{2-GNN} ($\mean$)} & \rotatebox[origin=l]{90}{\textbf{2-GNN} ($\wmean$)} & \rotatebox[origin=l]{90}{\textbf{2-WL-GNN} ($\mean$)} & \rotatebox[origin=l]{90}{\textbf{2-WL-GNN} ($\wmean$)} \\
\textbf{WL\textsubscript{ST}} ($T=3$)& & {\tiny\Vectorstack{-0.1\\ \pm 2.8}} & {\tiny\Vectorstack{+3.6\\ \pm 4.4}} & {\tiny\Vectorstack{-0.1\\ \pm 3.3}} & {\tiny\Vectorstack{-1.0\\ \pm 5.1}} & {\tiny\Vectorstack{+1.3\\ \pm 2.1}} & {\tiny\Vectorstack{-1.8\\ \pm 3.6}} & {\tiny\Vectorstack{-0.7\\ \pm 3.4}} & {\tiny\Vectorstack{-3.5\\ \pm 3.1}} & {\tiny\Vectorstack{-2.3\\ \pm 3.7}} \\[2pt]
\textbf{WL\textsubscript{SP}} ($T=3$)&{\tiny\Vectorstack{+0.1\\ \pm 2.8}} &  & {\tiny\Vectorstack{+3.7\\ \pm 3.5}} & {\tiny\Vectorstack{-0.0\\ \pm 3.6}} & {\tiny\Vectorstack{-0.9\\ \pm 4.1}} & {\tiny\Vectorstack{+1.4\\ \pm 3.0}} & {\tiny\Vectorstack{-1.7\\ \pm 4.1}} & {\tiny\Vectorstack{-0.6\\ \pm 4.0}} & {\tiny\Vectorstack{-3.4\\ \pm 3.2}} & {\tiny\Vectorstack{-2.2\\ \pm 3.7}} \\[2pt]
\textbf{2-LWL} ($T=3$)&{\tiny\Vectorstack{-3.6\\ \pm 4.4}} & {\tiny\Vectorstack{-3.7\\ \pm 3.5}} &  & {\tiny\Vectorstack{-3.7\\ \pm 4.6}} & {\tiny\Vectorstack{-4.6\\ \pm 5.7}} & {\tiny\Vectorstack{-2.3\\ \pm 4.0}} & {\tiny\Vectorstack{-5.4\\ \pm 3.1}} & {\tiny\Vectorstack{-4.3\\ \pm 4.6}} & {\tiny\Vectorstack{-7.1\\ \pm 3.8}} & {\tiny\Vectorstack{-5.9\\ \pm 3.4}} \\[2pt]
\textbf{2-GWL} ($T=3$)&{\tiny\Vectorstack{+0.1\\ \pm 3.3}} & {\tiny\Vectorstack{+0.0\\ \pm 3.6}} & {\tiny\Vectorstack{+3.7\\ \pm 4.6}} &  & {\tiny\Vectorstack{-0.9\\ \pm 4.9}} & {\tiny\Vectorstack{+1.4\\ \pm 2.4}} & {\tiny\Vectorstack{-1.7\\ \pm 3.7}} & {\tiny\Vectorstack{-0.6\\ \pm 3.1}} & {\tiny\Vectorstack{-3.4\\ \pm 3.0}} & {\tiny\Vectorstack{-2.2\\ \pm 3.9}} \\[2pt]
\textbf{Baseline} ($\mathrm{sum}$)&{\tiny\Vectorstack{+1.0\\ \pm 5.1}} & {\tiny\Vectorstack{+0.9\\ \pm 4.1}} & {\tiny\Vectorstack{+4.6\\ \pm 5.7}} & {\tiny\Vectorstack{+0.9\\ \pm 4.9}} &  & {\tiny\Vectorstack{+2.3\\ \pm 4.1}} & {\tiny\Vectorstack{-0.8\\ \pm 5.9}} & {\tiny\Vectorstack{+0.3\\ \pm 6.3}} & {\tiny\Vectorstack{-2.5\\ \pm 4.6}} & {\tiny\Vectorstack{-1.3\\ \pm 5.1}} \\[2pt]
\textbf{GIN} ($\mathrm{sum}$)&{\tiny\Vectorstack{-1.3\\ \pm 2.1}} & {\tiny\Vectorstack{-1.4\\ \pm 3.0}} & {\tiny\Vectorstack{+2.3\\ \pm 4.0}} & {\tiny\Vectorstack{-1.4\\ \pm 2.4}} & {\tiny\Vectorstack{-2.3\\ \pm 4.1}} &  & {\tiny\Vectorstack{-3.1\\ \pm 3.0}} & {\tiny\Vectorstack{-2.0\\ \pm 3.0}} & \cellcolor{t_red!25}\textcolor{t_darkred}{{\tiny\Vectorstack{-4.8\\ \pm 2.2}}} & {\tiny\Vectorstack{-3.6\\ \pm 2.8}} \\[2pt]
\textbf{2-GNN} ($\mean$)&{\tiny\Vectorstack{+1.8\\ \pm 3.6}} & {\tiny\Vectorstack{+1.7\\ \pm 4.1}} & {\tiny\Vectorstack{+5.4\\ \pm 3.1}} & {\tiny\Vectorstack{+1.7\\ \pm 3.7}} & {\tiny\Vectorstack{+0.8\\ \pm 5.9}} & {\tiny\Vectorstack{+3.1\\ \pm 3.0}} &  & {\tiny\Vectorstack{+1.1\\ \pm 2.6}} & {\tiny\Vectorstack{-1.7\\ \pm 2.1}} & {\tiny\Vectorstack{-0.5\\ \pm 2.7}} \\[2pt]
\textbf{2-GNN} ($\wmean$)&{\tiny\Vectorstack{+0.7\\ \pm 3.4}} & {\tiny\Vectorstack{+0.6\\ \pm 4.0}} & {\tiny\Vectorstack{+4.3\\ \pm 4.6}} & {\tiny\Vectorstack{+0.6\\ \pm 3.1}} & {\tiny\Vectorstack{-0.3\\ \pm 6.3}} & {\tiny\Vectorstack{+2.0\\ \pm 3.0}} & {\tiny\Vectorstack{-1.1\\ \pm 2.6}} &  & {\tiny\Vectorstack{-2.8\\ \pm 2.2}} & {\tiny\Vectorstack{-1.6\\ \pm 2.8}} \\[2pt]
\textbf{2-WL-GNN} ($\mean$)&{\tiny\Vectorstack{+3.5\\ \pm 3.1}} & {\tiny\Vectorstack{+3.4\\ \pm 3.2}} & {\tiny\Vectorstack{+7.1\\ \pm 3.8}} & {\tiny\Vectorstack{+3.4\\ \pm 3.0}} & {\tiny\Vectorstack{+2.5\\ \pm 4.6}} & \cellcolor{t_green!25}\textcolor{t_darkgreen}{{\tiny\Vectorstack{+4.8\\ \pm 2.2}}} & {\tiny\Vectorstack{+1.7\\ \pm 2.1}} & {\tiny\Vectorstack{+2.8\\ \pm 2.2}} &  & {\tiny\Vectorstack{+1.2\\ \pm 2.1}} \\[2pt]
\textbf{2-WL-GNN} ($\wmean$)&{\tiny\Vectorstack{+2.3\\ \pm 3.7}} & {\tiny\Vectorstack{+2.2\\ \pm 3.7}} & {\tiny\Vectorstack{+5.9\\ \pm 3.4}} & {\tiny\Vectorstack{+2.2\\ \pm 3.9}} & {\tiny\Vectorstack{+1.3\\ \pm 5.1}} & {\tiny\Vectorstack{+3.6\\ \pm 2.8}} & {\tiny\Vectorstack{+0.5\\ \pm 2.7}} & {\tiny\Vectorstack{+1.6\\ \pm 2.8}} & {\tiny\Vectorstack{-1.2\\ \pm 2.1}} & 
\end{tabular}}

\end{table}
\begin{table}[ht]
	\caption{Fold-wise accuracy delta means and standard deviations on D\&D.}\label[mat]{tbl:appendix:diff-dd}
	\centering\small
{\setlength\tabcolsep{2.5pt}\setlength{\extrarowheight}{2pt}%
\begin{tabular}{lcccccccccc}
& \rotatebox[origin=l]{90}{\textbf{WL\textsubscript{ST}} ($T=1$)} & \rotatebox[origin=l]{90}{\textbf{WL\textsubscript{ST}} ($T=3$)} & \rotatebox[origin=l]{90}{\textbf{2-LWL} ($T=3$)} & \rotatebox[origin=l]{90}{\textbf{2-GWL} ($T=3$)} & \rotatebox[origin=l]{90}{\textbf{Baseline} ($\mathrm{sum}$)} & \rotatebox[origin=l]{90}{\textbf{GIN} ($\mathrm{sum}$)} & \rotatebox[origin=l]{90}{\textbf{2-GNN} ($\mean$)} & \rotatebox[origin=l]{90}{\textbf{2-GNN} ($\wmean$)} & \rotatebox[origin=l]{90}{\textbf{2-WL-GNN} ($\mean$)} & \rotatebox[origin=l]{90}{\textbf{2-WL-GNN} ($\wmean$)} \\
\textbf{WL\textsubscript{ST}} ($T=1$)& & {\tiny\Vectorstack{+0.2\\ \pm 1.6}} & {\tiny\Vectorstack{+2.4\\ \pm 3.3}} & {\tiny\Vectorstack{+2.6\\ \pm 3.6}} & {\tiny\Vectorstack{+3.2\\ \pm 2.7}} & {\tiny\Vectorstack{+3.8\\ \pm 2.8}} & {\tiny\Vectorstack{+6.0\\ \pm 6.7}} & {\tiny\Vectorstack{+9.3\\ \pm 6.4}} & {\tiny\Vectorstack{+3.5\\ \pm 4.4}} & {\tiny\Vectorstack{+4.3\\ \pm 4.0}} \\[2pt]
\textbf{WL\textsubscript{ST}} ($T=3$)&{\tiny\Vectorstack{-0.2\\ \pm 1.6}} &  & {\tiny\Vectorstack{+2.2\\ \pm 3.5}} & {\tiny\Vectorstack{+2.5\\ \pm 3.5}} & {\tiny\Vectorstack{+3.1\\ \pm 3.3}} & {\tiny\Vectorstack{+3.6\\ \pm 3.2}} & {\tiny\Vectorstack{+5.9\\ \pm 6.4}} & {\tiny\Vectorstack{+9.1\\ \pm 6.4}} & {\tiny\Vectorstack{+3.4\\ \pm 4.7}} & {\tiny\Vectorstack{+4.1\\ \pm 4.2}} \\[2pt]
\textbf{2-LWL} ($T=3$)&{\tiny\Vectorstack{-2.4\\ \pm 3.3}} & {\tiny\Vectorstack{-2.2\\ \pm 3.5}} &  & {\tiny\Vectorstack{+0.3\\ \pm 1.4}} & {\tiny\Vectorstack{+0.9\\ \pm 3.7}} & {\tiny\Vectorstack{+1.4\\ \pm 4.7}} & {\tiny\Vectorstack{+3.7\\ \pm 6.2}} & {\tiny\Vectorstack{+6.9\\ \pm 5.7}} & {\tiny\Vectorstack{+1.2\\ \pm 4.5}} & {\tiny\Vectorstack{+1.9\\ \pm 4.3}} \\[2pt]
\textbf{2-GWL} ($T=3$)&{\tiny\Vectorstack{-2.6\\ \pm 3.6}} & {\tiny\Vectorstack{-2.5\\ \pm 3.5}} & {\tiny\Vectorstack{-0.3\\ \pm 1.4}} &  & {\tiny\Vectorstack{+0.6\\ \pm 4.3}} & {\tiny\Vectorstack{+1.2\\ \pm 5.2}} & {\tiny\Vectorstack{+3.4\\ \pm 6.9}} & {\tiny\Vectorstack{+6.7\\ \pm 6.3}} & {\tiny\Vectorstack{+0.9\\ \pm 5.2}} & {\tiny\Vectorstack{+1.7\\ \pm 4.8}} \\[2pt]
\textbf{Baseline} ($\mathrm{sum}$)&{\tiny\Vectorstack{-3.2\\ \pm 2.7}} & {\tiny\Vectorstack{-3.1\\ \pm 3.3}} & {\tiny\Vectorstack{-0.9\\ \pm 3.7}} & {\tiny\Vectorstack{-0.6\\ \pm 4.3}} &  & {\tiny\Vectorstack{+0.5\\ \pm 2.3}} & {\tiny\Vectorstack{+2.8\\ \pm 4.9}} & {\tiny\Vectorstack{+6.1\\ \pm 4.7}} & {\tiny\Vectorstack{+0.3\\ \pm 3.4}} & {\tiny\Vectorstack{+1.0\\ \pm 3.2}} \\[2pt]
\textbf{GIN} ($\mathrm{sum}$)&{\tiny\Vectorstack{-3.8\\ \pm 2.8}} & {\tiny\Vectorstack{-3.6\\ \pm 3.2}} & {\tiny\Vectorstack{-1.4\\ \pm 4.7}} & {\tiny\Vectorstack{-1.2\\ \pm 5.2}} & {\tiny\Vectorstack{-0.5\\ \pm 2.3}} &  & {\tiny\Vectorstack{+2.2\\ \pm 5.2}} & {\tiny\Vectorstack{+5.5\\ \pm 5.4}} & {\tiny\Vectorstack{-0.3\\ \pm 3.6}} & {\tiny\Vectorstack{+0.5\\ \pm 3.5}} \\[2pt]
\textbf{2-GNN} ($\mean$)&{\tiny\Vectorstack{-6.0\\ \pm 6.7}} & {\tiny\Vectorstack{-5.9\\ \pm 6.4}} & {\tiny\Vectorstack{-3.7\\ \pm 6.2}} & {\tiny\Vectorstack{-3.4\\ \pm 6.9}} & {\tiny\Vectorstack{-2.8\\ \pm 4.9}} & {\tiny\Vectorstack{-2.2\\ \pm 5.2}} &  & {\tiny\Vectorstack{+3.3\\ \pm 4.1}} & {\tiny\Vectorstack{-2.5\\ \pm 4.7}} & {\tiny\Vectorstack{-1.7\\ \pm 5.0}} \\[2pt]
\textbf{2-GNN} ($\wmean$)&{\tiny\Vectorstack{-9.3\\ \pm 6.4}} & {\tiny\Vectorstack{-9.1\\ \pm 6.4}} & {\tiny\Vectorstack{-6.9\\ \pm 5.7}} & {\tiny\Vectorstack{-6.7\\ \pm 6.3}} & {\tiny\Vectorstack{-6.1\\ \pm 4.7}} & {\tiny\Vectorstack{-5.5\\ \pm 5.4}} & {\tiny\Vectorstack{-3.3\\ \pm 4.1}} &  & \cellcolor{t_red!25}\textcolor{t_darkred}{{\tiny\Vectorstack{-5.8\\ \pm 2.8}}} & {\tiny\Vectorstack{-5.0\\ \pm 3.3}} \\[2pt]
\textbf{2-WL-GNN} ($\mean$)&{\tiny\Vectorstack{-3.5\\ \pm 4.4}} & {\tiny\Vectorstack{-3.4\\ \pm 4.7}} & {\tiny\Vectorstack{-1.2\\ \pm 4.5}} & {\tiny\Vectorstack{-0.9\\ \pm 5.2}} & {\tiny\Vectorstack{-0.3\\ \pm 3.4}} & {\tiny\Vectorstack{+0.3\\ \pm 3.6}} & {\tiny\Vectorstack{+2.5\\ \pm 4.7}} & \cellcolor{t_green!25}\textcolor{t_darkgreen}{{\tiny\Vectorstack{+5.8\\ \pm 2.8}}} &  & {\tiny\Vectorstack{+0.8\\ \pm 1.0}} \\[2pt]
\textbf{2-WL-GNN} ($\wmean$)&{\tiny\Vectorstack{-4.3\\ \pm 4.0}} & {\tiny\Vectorstack{-4.1\\ \pm 4.2}} & {\tiny\Vectorstack{-1.9\\ \pm 4.3}} & {\tiny\Vectorstack{-1.7\\ \pm 4.8}} & {\tiny\Vectorstack{-1.0\\ \pm 3.2}} & {\tiny\Vectorstack{-0.5\\ \pm 3.5}} & {\tiny\Vectorstack{+1.7\\ \pm 5.0}} & {\tiny\Vectorstack{+5.0\\ \pm 3.3}} & {\tiny\Vectorstack{-0.8\\ \pm 1.0}} & 
\end{tabular}}

\end{table}
\begin{table}[ht]
	\caption{Fold-wise accuracy delta means and standard deviations on REDDIT.}\label[mat]{tbl:appendix:diff-reddit}
	\centering\small
{\setlength\tabcolsep{2.5pt}\setlength{\extrarowheight}{2pt}%
\begin{tabular}{lcccccccc}
& \rotatebox[origin=l]{90}{\textbf{WL\textsubscript{ST}} ($T=1$)} & \rotatebox[origin=l]{90}{\textbf{WL\textsubscript{ST}} ($T=3$)} & \rotatebox[origin=l]{90}{\textbf{2-LWL} ($T=3$)} & \rotatebox[origin=l]{90}{\textbf{2-GWL} ($T=3$)} & \rotatebox[origin=l]{90}{\textbf{Baseline} ($\mathrm{sum}$)} & \rotatebox[origin=l]{90}{\textbf{GIN} ($\mathrm{sum}$)} & \rotatebox[origin=l]{90}{\textbf{2-WL-GNN} ($\mean$)} & \rotatebox[origin=l]{90}{\textbf{2-WL-GNN} ($\wmean$)} \\
\textbf{WL\textsubscript{ST}} ($T=1$)& & {\tiny\Vectorstack{-1.8\\ \pm 1.8}} & {\tiny\Vectorstack{+0.5\\ \pm 4.9}} & {\tiny\Vectorstack{+0.8\\ \pm 3.6}} & {\tiny\Vectorstack{+4.2\\ \pm 10}} & {\tiny\Vectorstack{-11\\ \pm 6.9}} & {\tiny\Vectorstack{-7.4\\ \pm 6.2}} & \cellcolor{t_red!25}\textcolor{t_darkred}{{\tiny\Vectorstack{-13\\ \pm 3.5}}} \\[2pt]
\textbf{WL\textsubscript{ST}} ($T=3$)&{\tiny\Vectorstack{+1.8\\ \pm 1.8}} &  & {\tiny\Vectorstack{+2.3\\ \pm 4.7}} & {\tiny\Vectorstack{+2.6\\ \pm 3.5}} & {\tiny\Vectorstack{+5.9\\ \pm 11}} & {\tiny\Vectorstack{-9.0\\ \pm 7.1}} & {\tiny\Vectorstack{-5.7\\ \pm 6.8}} & \cellcolor{t_red!25}\textcolor{t_darkred}{{\tiny\Vectorstack{-11\\ \pm 2.9}}} \\[2pt]
\textbf{2-LWL} ($T=3$)&{\tiny\Vectorstack{-0.5\\ \pm 4.9}} & {\tiny\Vectorstack{-2.3\\ \pm 4.7}} &  & {\tiny\Vectorstack{+0.3\\ \pm 4.2}} & {\tiny\Vectorstack{+3.7\\ \pm 11}} & {\tiny\Vectorstack{-11\\ \pm 7.3}} & {\tiny\Vectorstack{-7.9\\ \pm 5.5}} & \cellcolor{t_red!25}\textcolor{t_darkred}{{\tiny\Vectorstack{-14\\ \pm 3.4}}} \\[2pt]
\textbf{2-GWL} ($T=3$)&{\tiny\Vectorstack{-0.8\\ \pm 3.6}} & {\tiny\Vectorstack{-2.6\\ \pm 3.5}} & {\tiny\Vectorstack{-0.3\\ \pm 4.2}} &  & {\tiny\Vectorstack{+3.3\\ \pm 11}} & {\tiny\Vectorstack{-12\\ \pm 7.6}} & {\tiny\Vectorstack{-8.2\\ \pm 6.2}} & \cellcolor{t_red!25}\textcolor{t_darkred}{{\tiny\Vectorstack{-14\\ \pm 2.7}}} \\[2pt]
\textbf{Baseline} ($\mathrm{sum}$)&{\tiny\Vectorstack{-4.2\\ \pm 10}} & {\tiny\Vectorstack{-5.9\\ \pm 11}} & {\tiny\Vectorstack{-3.7\\ \pm 11}} & {\tiny\Vectorstack{-3.3\\ \pm 11}} &  & {\tiny\Vectorstack{-15\\ \pm 12}} & {\tiny\Vectorstack{-12\\ \pm 13}} & {\tiny\Vectorstack{-17\\ \pm 10}} \\[2pt]
\textbf{GIN} ($\mathrm{sum}$)&{\tiny\Vectorstack{+11\\ \pm 6.9}} & {\tiny\Vectorstack{+9.0\\ \pm 7.1}} & {\tiny\Vectorstack{+11\\ \pm 7.3}} & {\tiny\Vectorstack{+12\\ \pm 7.6}} & {\tiny\Vectorstack{+15\\ \pm 12}} &  & {\tiny\Vectorstack{+3.3\\ \pm 9.6}} & {\tiny\Vectorstack{-2.4\\ \pm 7.0}} \\[2pt]
\textbf{2-WL-GNN} ($\mean$)&{\tiny\Vectorstack{+7.4\\ \pm 6.2}} & {\tiny\Vectorstack{+5.7\\ \pm 6.8}} & {\tiny\Vectorstack{+7.9\\ \pm 5.5}} & {\tiny\Vectorstack{+8.2\\ \pm 6.2}} & {\tiny\Vectorstack{+12\\ \pm 13}} & {\tiny\Vectorstack{-3.3\\ \pm 9.6}} &  & {\tiny\Vectorstack{-5.7\\ \pm 6.5}} \\[2pt]
\textbf{2-WL-GNN} ($\wmean$)&\cellcolor{t_green!25}\textcolor{t_darkgreen}{{\tiny\Vectorstack{+13\\ \pm 3.5}}} & \cellcolor{t_green!25}\textcolor{t_darkgreen}{{\tiny\Vectorstack{+11\\ \pm 2.9}}} & \cellcolor{t_green!25}\textcolor{t_darkgreen}{{\tiny\Vectorstack{+14\\ \pm 3.4}}} & \cellcolor{t_green!25}\textcolor{t_darkgreen}{{\tiny\Vectorstack{+14\\ \pm 2.7}}} & {\tiny\Vectorstack{+17\\ \pm 10}} & {\tiny\Vectorstack{+2.4\\ \pm 7.0}} & {\tiny\Vectorstack{+5.7\\ \pm 6.5}} & 
\end{tabular}}

\end{table}
\begin{table}[ht]
	\caption{Fold-wise accuracy delta means and standard deviations on IMDB.}\label[mat]{tbl:appendix:diff-imdb}
	\centering\small
{\setlength\tabcolsep{2.5pt}\setlength{\extrarowheight}{2pt}%
\begin{tabular}{lcccccccccc}
& \rotatebox[origin=l]{90}{\textbf{WL\textsubscript{ST}} ($T=3$)} & \rotatebox[origin=l]{90}{\textbf{WL\textsubscript{SP}} ($T=3$)} & \rotatebox[origin=l]{90}{\textbf{2-LWL} ($T=3$)} & \rotatebox[origin=l]{90}{\textbf{2-GWL} ($T=3$)} & \rotatebox[origin=l]{90}{\textbf{Baseline} ($\mathrm{sum}$)} & \rotatebox[origin=l]{90}{\textbf{GIN} ($\mathrm{sum}$)} & \rotatebox[origin=l]{90}{\textbf{2-GNN} ($\mean$)} & \rotatebox[origin=l]{90}{\textbf{2-GNN} ($\wmean$)} & \rotatebox[origin=l]{90}{\textbf{2-WL-GNN} ($\mean$)} & \rotatebox[origin=l]{90}{\textbf{2-WL-GNN} ($\wmean$)} \\
\textbf{WL\textsubscript{ST}} ($T=3$)& & {\tiny\Vectorstack{-1.5\\ \pm 4.3}} & {\tiny\Vectorstack{+0.7\\ \pm 2.0}} & {\tiny\Vectorstack{+2.5\\ \pm 3.4}} & \cellcolor{t_green!25}\textcolor{t_darkgreen}{{\tiny\Vectorstack{+22\\ \pm 4.9}}} & {\tiny\Vectorstack{+6.1\\ \pm 6.4}} & {\tiny\Vectorstack{+1.5\\ \pm 2.6}} & {\tiny\Vectorstack{+2.0\\ \pm 2.7}} & {\tiny\Vectorstack{+1.7\\ \pm 3.7}} & {\tiny\Vectorstack{+1.8\\ \pm 3.8}} \\[2pt]
\textbf{WL\textsubscript{SP}} ($T=3$)&{\tiny\Vectorstack{+1.5\\ \pm 4.3}} &  & {\tiny\Vectorstack{+2.2\\ \pm 5.0}} & {\tiny\Vectorstack{+4.0\\ \pm 5.2}} & \cellcolor{t_green!25}\textcolor{t_darkgreen}{{\tiny\Vectorstack{+24\\ \pm 5.9}}} & {\tiny\Vectorstack{+7.6\\ \pm 7.4}} & {\tiny\Vectorstack{+3.0\\ \pm 5.2}} & {\tiny\Vectorstack{+3.5\\ \pm 5.7}} & {\tiny\Vectorstack{+3.2\\ \pm 6.4}} & {\tiny\Vectorstack{+3.3\\ \pm 6.7}} \\[2pt]
\textbf{2-LWL} ($T=3$)&{\tiny\Vectorstack{-0.7\\ \pm 2.0}} & {\tiny\Vectorstack{-2.2\\ \pm 5.0}} &  & {\tiny\Vectorstack{+1.8\\ \pm 2.6}} & \cellcolor{t_green!25}\textcolor{t_darkgreen}{{\tiny\Vectorstack{+22\\ \pm 5.7}}} & {\tiny\Vectorstack{+5.4\\ \pm 7.2}} & {\tiny\Vectorstack{+0.8\\ \pm 2.0}} & {\tiny\Vectorstack{+1.3\\ \pm 3.2}} & {\tiny\Vectorstack{+1.0\\ \pm 3.5}} & {\tiny\Vectorstack{+1.1\\ \pm 3.7}} \\[2pt]
\textbf{2-GWL} ($T=3$)&{\tiny\Vectorstack{-2.5\\ \pm 3.4}} & {\tiny\Vectorstack{-4.0\\ \pm 5.2}} & {\tiny\Vectorstack{-1.8\\ \pm 2.6}} &  & \cellcolor{t_green!25}\textcolor{t_darkgreen}{{\tiny\Vectorstack{+20\\ \pm 5.6}}} & {\tiny\Vectorstack{+3.6\\ \pm 7.1}} & {\tiny\Vectorstack{-1.0\\ \pm 2.6}} & {\tiny\Vectorstack{-0.5\\ \pm 4.1}} & {\tiny\Vectorstack{-0.8\\ \pm 3.1}} & {\tiny\Vectorstack{-0.7\\ \pm 4.3}} \\[2pt]
\textbf{Baseline} ($\mathrm{sum}$)&\cellcolor{t_red!25}\textcolor{t_darkred}{{\tiny\Vectorstack{-22\\ \pm 4.9}}} & \cellcolor{t_red!25}\textcolor{t_darkred}{{\tiny\Vectorstack{-24\\ \pm 5.9}}} & \cellcolor{t_red!25}\textcolor{t_darkred}{{\tiny\Vectorstack{-21\\ \pm 5.7}}} & \cellcolor{t_red!25}\textcolor{t_darkred}{{\tiny\Vectorstack{-20\\ \pm 5.6}}} &  & \cellcolor{t_red!25}\textcolor{t_darkred}{{\tiny\Vectorstack{-16\\ \pm 6.3}}} & \cellcolor{t_red!25}\textcolor{t_darkred}{{\tiny\Vectorstack{-21\\ \pm 6.0}}} & \cellcolor{t_red!25}\textcolor{t_darkred}{{\tiny\Vectorstack{-20\\ \pm 5.6}}} & \cellcolor{t_red!25}\textcolor{t_darkred}{{\tiny\Vectorstack{-21\\ \pm 6.4}}} & \cellcolor{t_red!25}\textcolor{t_darkred}{{\tiny\Vectorstack{-20\\ \pm 6.9}}} \\[2pt]
\textbf{GIN} ($\mathrm{sum}$)&{\tiny\Vectorstack{-6.1\\ \pm 6.4}} & {\tiny\Vectorstack{-7.6\\ \pm 7.4}} & {\tiny\Vectorstack{-5.4\\ \pm 7.2}} & {\tiny\Vectorstack{-3.6\\ \pm 7.1}} & \cellcolor{t_green!25}\textcolor{t_darkgreen}{{\tiny\Vectorstack{+16\\ \pm 6.3}}} &  & {\tiny\Vectorstack{-4.6\\ \pm 7.5}} & {\tiny\Vectorstack{-4.1\\ \pm 7.1}} & {\tiny\Vectorstack{-4.4\\ \pm 7.9}} & {\tiny\Vectorstack{-4.3\\ \pm 8.4}} \\[2pt]
\textbf{2-GNN} ($\mean$)&{\tiny\Vectorstack{-1.5\\ \pm 2.6}} & {\tiny\Vectorstack{-3.0\\ \pm 5.2}} & {\tiny\Vectorstack{-0.8\\ \pm 2.0}} & {\tiny\Vectorstack{+1.0\\ \pm 2.6}} & \cellcolor{t_green!25}\textcolor{t_darkgreen}{{\tiny\Vectorstack{+21\\ \pm 6.0}}} & {\tiny\Vectorstack{+4.6\\ \pm 7.5}} &  & {\tiny\Vectorstack{+0.5\\ \pm 2.8}} & {\tiny\Vectorstack{+0.2\\ \pm 2.8}} & {\tiny\Vectorstack{+0.2\\ \pm 2.4}} \\[2pt]
\textbf{2-GNN} ($\wmean$)&{\tiny\Vectorstack{-2.0\\ \pm 2.7}} & {\tiny\Vectorstack{-3.5\\ \pm 5.7}} & {\tiny\Vectorstack{-1.3\\ \pm 3.2}} & {\tiny\Vectorstack{+0.5\\ \pm 4.1}} & \cellcolor{t_green!25}\textcolor{t_darkgreen}{{\tiny\Vectorstack{+20\\ \pm 5.6}}} & {\tiny\Vectorstack{+4.1\\ \pm 7.1}} & {\tiny\Vectorstack{-0.5\\ \pm 2.8}} &  & {\tiny\Vectorstack{-0.3\\ \pm 3.5}} & {\tiny\Vectorstack{-0.2\\ \pm 2.7}} \\[2pt]
\textbf{2-WL-GNN} ($\mean$)&{\tiny\Vectorstack{-1.7\\ \pm 3.7}} & {\tiny\Vectorstack{-3.2\\ \pm 6.4}} & {\tiny\Vectorstack{-1.0\\ \pm 3.5}} & {\tiny\Vectorstack{+0.8\\ \pm 3.1}} & \cellcolor{t_green!25}\textcolor{t_darkgreen}{{\tiny\Vectorstack{+21\\ \pm 6.4}}} & {\tiny\Vectorstack{+4.4\\ \pm 7.9}} & {\tiny\Vectorstack{-0.2\\ \pm 2.8}} & {\tiny\Vectorstack{+0.3\\ \pm 3.5}} &  & {\tiny\Vectorstack{+0.1\\ \pm 3.4}} \\[2pt]
\textbf{2-WL-GNN} ($\wmean$)&{\tiny\Vectorstack{-1.8\\ \pm 3.8}} & {\tiny\Vectorstack{-3.3\\ \pm 6.7}} & {\tiny\Vectorstack{-1.1\\ \pm 3.7}} & {\tiny\Vectorstack{+0.7\\ \pm 4.3}} & \cellcolor{t_green!25}\textcolor{t_darkgreen}{{\tiny\Vectorstack{+20\\ \pm 6.9}}} & {\tiny\Vectorstack{+4.3\\ \pm 8.4}} & {\tiny\Vectorstack{-0.2\\ \pm 2.4}} & {\tiny\Vectorstack{+0.2\\ \pm 2.7}} & {\tiny\Vectorstack{-0.1\\ \pm 3.4}} & 
\end{tabular}}

\end{table}%
}

\bibliography{literature}

\end{document}